\def\gA{{\mathcal{A}}}
\def\gB{{\mathcal{B}}}
\def\gC{{\mathcal{C}}}
\def\gD{{\mathcal{D}}}
\def\gG{{\mathcal{G}}}
\def\gH{{\mathcal{H}}}
\def\gM{{\mathcal{M}}}
\def\gO{{\mathcal{O}}}
\def\gS{{\mathcal{S}}}
\def\gX{{\mathcal{X}}}
\newcommand{\greentick}{\textcolor{green}{\ding{51}}}
\newcommand{\redcross}{\textcolor{red}{\ding{55}}}
\newcommand{\abs}[1]{\left| {#1} \right|}
\def\A{{\mathcal{A}}}
\newcommand{\lp}{\left(}
\newcommand{\rp}{\right)}
\newcommand{\lb}{\left\{}
\newcommand{\rb}{\right\}}
\newcommand{\ls}{\left[}
\newcommand{\rs}{\right]}
\newcommand{\labs}{\left|}
\newcommand{\rabs}{\right|}
\newcommand{\lnorm}{\left\|}
\newcommand{\rnorm}{\right\|}
\newcommand{\expect}{\mathbb{E}}
\def\TV{\mathrm{TV}}
\newcommand{\Nashgap}{\mathrm{Nash-}\mathrm{Gap}}
\definecolor{greenp}{rgb}{0.0, 0.51, 0.5}
\newcommand{\initial}{d_0}
\newcommand{\bcc}[1]{\left\{{#1}\right\}}
\newcommand{\brr}[1]{\left({#1}\right)}
\newcommand{\bs}[1]{\left[{#1}\right]}
\newcommand{\norm}[1]{\left\| {#1} \right\|}
\newcommand{\innerprod}[2]{\left\langle{#1},{#2}\right\rangle}
\newcommand{\argmin}{\mathop{\rm argmin}}
\newcommand{\expert}{\operatorname{E}}
\newcommand{\muE}{\mu^{\expert}}
\newcommand{\nuE}{\nu^{\expert}}
\newcommand{\ours}{MAIL-WARM}
\newtheorem{definition}{Definition}[section]
\newtheorem{theorem}{Theorem}[section]
\newtheorem{lemma}{Lemma}[section]
\newtheorem{corollary}{Corollary}[section]
\newcommand{\authorblockoneaffil}[3]{%
  \begin{minipage}[t]{0.25\textwidth} %
    \centering\strut
    #1 \\
    #2 \\
    #3 \strut
  \end{minipage}%
}
\title{Rate optimal learning of equilibria from data}
\begin{document}
  \date{}

\begin{center}
  \makeatletter %
  {\Large \@title\par} %
  \makeatother
  \vspace{1.5em} %
\end{center}
\begin{center}
    
\authorblockoneaffil{Till Freihaut*}{freihaut@ifi.uzh.ch}{University of Zurich}%
  \hspace{0.03\textwidth}%
  \authorblockoneaffil{Luca Viano*}{luca.viano@epfl.ch}{EPFL}%
  \hspace{0.03\textwidth}%
  \authorblockoneaffil{Emanuele Nevali}{emanuele.nevali@epfl.ch}{EPFL} %
  \hspace{0.03\textwidth}\\
  \vspace{1mm}%
  \authorblockoneaffil{Volkan Cevher}{volkan.cevher@epfl.ch}{EPFL} 
  \hspace{0.03\textwidth}
    \authorblockoneaffil{Matthieu Geist}{matthieu@earthspecies.org}{Earth Species Project}%
  \hspace{0.03\textwidth}%
  \authorblockoneaffil{Giorgia Ramponi}{ramponi@ifi.uzh.ch}{University of Zurich}
  \end{center}

\begingroup %
\renewcommand\thefootnote{$\ast$}
\footnotetext{Equal contribution, alphabetical order.}
\endgroup

\begin{abstract}
We close open theoretical gaps in Multi-Agent Imitation Learning (MAIL) by characterizing the limits of non-interactive MAIL and presenting the first interactive algorithm with near-optimal sample complexity.
In the non-interactive setting, we prove a statistical lower bound that identifies the \emph{all-policy deviation concentrability coefficient} as the fundamental complexity measure, and we show that Behavior Cloning (BC) is rate-optimal. For the interactive setting, we introduce a framework that combines reward-free reinforcement learning with interactive MAIL and instantiate it with an algorithm, \emph{\ours}. It improves the best previously known sample complexity from $\gO(\varepsilon^{-8})$ to $\gO(\varepsilon^{-2}),$ matching the dependence on $\varepsilon$ implied by our lower bound. Finally, we provide numerical results that support our theory and illustrate, in environments such as grid worlds, where Behavior Cloning fails to learn.
\end{abstract}

\section{Introduction}
More and more AI systems are deployed in real-world scenarios. This naturally leads to AI systems interacting and adapting their behavior to each other. Importantly, this interaction can be captured as a multi-agent system \citep{CAIF_1} %
, and since reward functions are often inaccessible, learning directly from expert demonstrations via Imitation Learning (IL) becomes especially compelling. To capture expert behavior without knowing a reward function, IL  serves as a great framework, showcasing impressive empirical success in single-agent settings \citep{torabi2019recent, jain2025smoothseaskilledtextttsailor, foster2024behavior} and strong theoretical guarantees \citep{foster2024behavior, viano2024imitation, rajaraman2020toward}. However, applying IL to multi-agent systems remains largely underexplored. In particular, previous works \citep{pmlr-v97-yu19e, NEURIPS2018_240c945b, NEURIPS2024_2fbeed1d} are mostly empirical and they lack theoretical guarantees. Moreover, they often fail to capture the potentially strategic behavior of agents acting in multi-agent systems since they are not designed to learn a Nash equilibrium profile.\\

 Closer to our work,  \citet{tang2024multiagentimitationlearningvalue} showed that optimizing the objective that captures these strategic behaviors,  namely the \emph{Nash Gap}, is hard. In particular, they provide guarantees for BC, assuming that the equilibrium profile generating the data assigns strictly positive probability to every state. More recently, \citet{freihaut2025learningequilibriadataprovably} dropped this assumption and proved a tighter BC guarantee involving the \emph{all policy} deviation concentrability coefficient $\mathcal{C}_{\max}$. On an intuitive level, $\mathcal{C}_{\max}$ quantifies the coverage only of the states that can be reached by a best response against an arbitrary policy. Their work considers two settings. In the non-interactive setting, the learner receives a fixed dataset of expert trajectories and cannot query the expert further. In the interactive setting, the learner is allowed to query the expert at states encountered during training. In their work and also this work, the experts are assumed to be playing according to a Nash equilibrium strategy $(\muE,\nuE).$ For the non-interactive case, they showed that dependence on $\mathcal{C}(\muE,\nuE)$ is unavoidable. Informally, $\mathcal{C}(\muE,\nuE)$ describes the coverage of states reachable under all potential Nash equilibria. If $\mathcal{C}(\muE,\nuE)$ is infinite, then no algorithm can succeed, even with unlimited data. They removed the $\mathcal{C}(\muE,\nuE)$ dependence in the interactive setting, introducing MURMAIL that comes with a sample complexity of order $\gO(\varepsilon^{-8})$ independent of $\mathcal{C}(\muE,\nuE)$. \\

 However, the authors left open several questions, which we address in this work. The first concerns the non-interactive setting and is presented next.
 \vspace{-1mm}

 \begin{center}
\textbf{Open Question 1}\emph{ Does there exist a non-interactive MAIL algorithm with guarantees featuring only $\mathcal{C}(\muE,\nuE)$ and not $\mathcal{C}_{\max}$ ? }
 \end{center}
Answering this question is crucial for both theoreticians and practitioners, as it clarifies when applying BC as an algorithm is appropriate and when, instead, an interactive expert is required. Moreover, in the interactive setting, the guarantees for MURMAIL scales suboptimally with the game parameters and precision $\varepsilon$. Therefore, it is natural to ask:
 \begin{center}
     \textbf{Open Question 2} \emph{Can we design an algorithm which outputs an $\varepsilon$-approximate Nash equilibrium with the optimal order of expert queries, which is  $\mathcal{O}(\varepsilon^{-2})$ ?}
 \end{center}

In this work, we answer these questions with the following contributions:

\begin{enumerate}
    \item %
    We construct a Markov game where, even if $\gC(\muE,\nuE)$ is bounded, no non-interactive Imitation Learning algorithm can learn an $\varepsilon$-Nash equilibrium from data. Surprisingly, the construction is a striking simple Markov Games with $3$ states only.
    \item Additionally, with the same construction, we show that the \emph{all policy deviation concentrability coefficient} $\mathcal{C}_{\max}$, which upper bounds $\mathcal{C}(\muE,\nuE),$ is the fundamental quantity of non-interactive MAIL by proving a statistical lower bound of order $\Omega( \mathcal{C}_{\max} \varepsilon^{-2})$ on the sample complexity of any non-interactive MAIL algorithm. This construction answers \textbf{Open Question 1} in the negative.
    Moreover, since \citet{freihaut2025learningequilibriadataprovably} proved an upper bound for BC of order $\mathcal{O}\brr{\gC^2_{\max} \varepsilon^2}$, we conclude that BC is rate optimal in the non-interactive setting. Indeed, BC matches the optimal $\varepsilon$-dependence, and the gap between the BC upper bound and the information-theoretic lower bound is only polynomial in the concentrability and in the parameters of the game.
    \item For the interactive setting, we provide a new framework, that combines reward-free exploration with (interactive) Multi-Agent Imitation Learning.
    This allows us to derive a new algorithm, namely \emph{\ours}, that improves the best currently known sample complexity of MURMAIL from $\gO(\varepsilon^{-8})$ to $\gO(\varepsilon^{-2})$, matching the lower bound in $\varepsilon$ in this setting. This provides a positive answer to \textbf{Open Question 2}.
    \item We empirically demonstrate the effectiveness of our algorithm, showing that it outperforms other interactive Multi-Agent Imitation Learning algorithms in settings where BC fails to recover an $\varepsilon$-Nash equilibrium.
\end{enumerate}
A complete summary of our results can be found in Table~\ref{tab:literature_MAIL}.
\begin{table*}[t]
    \caption{\label{tab:literature_MAIL} For simplicity, we report results for the two-player zero-sum setting with horizon $H$, finite state space $\gS$, finite action spaces $\A$, $\mathcal{B}$. Let $A_{\max} = \max{(\abs{\A}, \abs{\mathcal{B}})}$.  For a fair comparison, we restate the results of \cite{freihaut2025learningequilibriadataprovably} in the finite-horizon setting. Indeed, we have verified that the results prove therein transfer to the finite-horizon setting. 
    The column  $\textbf{Expert Data}$ reports the number of data collected in either the interactive or non-interactive setting to attain a Nash gap bound of order $\mathcal{O}(\varepsilon)$.
    }\vspace{2mm}\centering
    \resizebox{0.9\textwidth}{!}{%
    
    \begin{tabular}{|c|c|c|}
        \hline
        \textbf{Algorithm}  %
        &  \textbf{Expert Data} & \textbf{Queriable Expert}\\ \hline
        BC (Analysis in \cite{freihaut2025learningequilibriadataprovably})   %
        &  $\widetilde{\mathcal{O}}\brr{\frac{H^4SA_{\max} \mathcal{C}^2_{\max}}{\varepsilon^2}}$  &  \redcross \\ \hline
\textbf{Lower Bound (This work)} & $\Omega\brr{\frac{\gC_{\max}}{\varepsilon^2}}$ & \redcross
        \\ \hline
        MURMAIL (\cite{freihaut2025learningequilibriadataprovably})   %
        &  $\widetilde{\mathcal{O}}\brr{ \frac{H^{12}S^4A_{\max}^5}{\varepsilon^8}}$  &  \greentick \\ \hline 
        \textbf{\ours (This work)}   %
        &  $\widetilde{\gO}\brr{\frac{H^7S^3A_{\max}^3}{\varepsilon^2}} $ &  \greentick \\ \hline
        \textbf{Lower bound (This work)} & $\Omega \brr{\varepsilon^{-2}}$ & \greentick \\\hline
    \end{tabular}}
\end{table*}

\section{Preliminaries}
\label{sec:Preliminaries}
We start by formalizing the concept of two-player zero-sum Markov games.

\paragraph{Two-player zero-sum Markov game}
A finite-horizon two-player zero-sum Markov game is described by the tuple $\gG = (H, \gS, \gA, \gB, P, r, \initial)$, where $\gS$ is a finite state space of cardinality $S := \abs{\gS}$, $\gA$ and $\gB$ are the finite action spaces of cardinality $A := \abs{\gA}$ and $B := \abs{\gB}$  for player 1 and player 2, respectively. Moreover, let $A_{\max} = \max \bcc{A, B}$ be the cardinality of the largest action space. The transition dynamics at each time step $h \in \{1, \ldots, H\} := [H]$ are governed by an (unknown) transition kernel $P_h \in \mathbb{R}^{SAB \times S}$, and the reward function $r_h \in [-1,1]^{SAB}$ assigns a scalar payoff to each state-action triplet. The game starts from an initial state $S_0 \sim \initial$, where $\initial$ is a distribution over $\gS$.\looseness=-1\\

In this setup, player 1 seeks to maximize the total reward, while player 2 aims to minimize it, leading to the zero-sum property: for any $(s,a,b) \in \gS \times \gA \times \gB$ and all $h \in [H]$, it holds that $r_h^1(s,a,b) = -r_h^2(s,a,b)$. Hence, we omit superscripts and refer to the reward as $r$. A (stochastic) Markov policy for player 1 is denoted by $\mu_h: \gS \times H \to \Delta_{\gA}$, and for player 2 by $\nu_h: \gS \times H \to \Delta_{\gB}$, where $\Delta_{\gA}$ and $\Delta_{\gB}$ denote probability simplices over actions. Moreover, we will use $\Pi_{\mu}$, $\Pi_{\nu}$ to denote the set of all Markov policies for the $\mu$ and $\nu$ player respectively. We denote a policy $\mu$ of a game as a set of policies $\mu:=\{\mu_h\}_{h=0}^{H}$ and similar for $\nu :=\{\nu_h\}_{h=0}^{H}.$ \\

Let $\{(S_h, A_h, B_h)\}_{h=0}^{H}$ denote the stochastic process induced by a policy pair $(\mu, \nu)$ acting in a Markov game $\mathcal{G}$. Then, the value function and state-action value function are defined as:
\begin{align*}
    &V_h^{\mu, \nu}(s) := \mathbb{E}_{\mu, \nu}\left[ \sum_{t = h}^{H} r_t(S_t, A_t, B_t) \,\middle|\, S_h = s \right],\\
    &Q_h^{\mu, \nu}(s,a,b) := \mathbb{E}_{\mu, \nu}\left[ \sum_{t = h}^{H} r_t(S_t, A_t, B_t) \,\middle|\, S_h = s, A_h = a, B_h = b \right].
\end{align*}
We also define the (unnormalized) state visitation distribution at stage $h \in [H]$ induced by the policy pair $(\mu, \nu)$ as follows:
\[
d_h^{\mu, \nu}(s) := \mathbb{E}_{\mu, \nu}\left[ \mathds{1}_{\bcc{S_h = s}} \,\middle|\, S_0 \sim \initial \right].
\]

Fixing the policy of one agent reduces the Markov game to a Markov decision process (MDP). For example, if player 2 follows policy $\nu$, then the effective transition dynamics under $(s,a)$ are:
\[
P^{\nu}_h(s' \mid s,a) := \sum_{b \in \gB} \nu_h(b \mid s) P_h(s' \mid s,a,b).
\]
A similar expression holds when $\mu$ is fixed. The best-response set for each player against a fixed policy of the other is defined as:
\[
\mathrm{br}(\nu) := \arg\max_{\mu \in \Pi_{\mu}} \langle \initial, V_0^{\mu,\nu} \rangle.~~
\]

Equivalently for the second player, we have $\mathrm{br}(\mu) := \arg\min_{\nu \in \Pi_{\nu}} \langle \initial, V_0^{\mu,\nu} \rangle.$
Notice, that the best-response may not be unique, therefore $\mathrm{br}(\mu)$ and $\mathrm{br}(\nu)$ are sets in general. On the contrary,  the corresponding value in zero-sum games is unique. A pair of policies $(\mu^\star, \nu^\star)$ forms a Nash equilibrium (NE) if they are best responses to each other.\\

To quantify how far a policy pair is from an equilibrium, we define the \emph{Nash gap}:
\begin{equation}
\label{eq:Nash_gap}
    \Nashgap(\mu,\nu) := \langle \initial, V_0^{\mu^\star, \nu} - V_0^{\mu, \nu^\star} \rangle.
\end{equation}
This measure satisfies $\Nashgap(\mu,\nu) = 0$ if and only if $(\mu,\nu)$ is a NE, and is strictly positive otherwise.
Another important property of zero-sum games is that the set of Nash equilibria is convex.

\section{Setting and Main Results}

In Multi-Agent Imitation Learning (MAIL), the objective is to design an algorithm $\mathrm{Alg}$ that, after accessing $\mathrm{poly}(\varepsilon^{-1})$ actions sampled from the expert policies, returns a pair of policies $(\widehat{\mu}, \widehat{\nu})$ such that the expected Nash gap is bounded:
\begin{equation}
\mathbb{E}_{\mathrm{Alg}}\left[\Nashgap(\widehat{\mu}, \widehat{\nu})\right] < \varepsilon. \label{eq:learning_goal}
\end{equation}
This formulation captures the goal of learning approximately optimal behavior in competitive settings through selective expert guidance.\\

Furthermore, we differentiate the two following settings:
\begin{itemize}
 \item  We refer to \textbf{non-interactive} Multi-Agent Imitation Learning as the setting where a dataset is precollected from Nash equilibrium policies $(\muE,\nuE)$. In particular, states are sampled as $\bcc{s^i_h}^N_{i=1} \sim \rho_h \in \Delta_{\gS},$ while actions are sampled as $\bcc{a^i_h}^N_{i=1} \sim \muE(\cdot|s^i_h)$ and $\bcc{b^i_h}^N_{i=1} \sim \nuE(\cdot|s^i_h).$ Once the dataset is received, the learner can no further interact with the expert policies during the learning process.
 \vspace{-1mm}
 \item In \textbf{interactive} Multi-Agent Imitation Learning, the learner can query the expert on demand. The learning process unfolds over multiple rounds of interaction with the environment. During each round, the learner deploys a policy pair to collect a trajectory and may query the expert at any visited state.
\end{itemize}
\vspace{-2mm}
Non-interactive MAIL has the advantage of avoiding a queriable expert, but in that setting, the theoretical bounds are worse.
In contrast, interactive Imitation Learning algorithms can achieve statistical bounds independent of $\mathcal{C}(\muE, \nuE)$, but this setting captures a smaller subset of real-world scenarios, as a queriable expert might not always be available.
\subsection{Main result in non-interactive MAIL}
  In this section, we present our main result that solves a question left open by \citep{freihaut2025learningequilibriadataprovably}. We consider the finite-horizon setting, which can be related to the discounted case through the effective horizon $H \approx \frac{1}{1-\gamma}$. In order to state the theoretical gap we need to give some context which we introduce next. We adapt the concentrability coefficient from \cite{freihaut2025learningequilibriadataprovably} to the finite-horizon setting. For a policy pair $\mu, \nu$ and dataset state distribution $\rho := \bcc{\rho_h}^H_{h=1}$ it is defined as
\begin{align}
C(\mu,\nu):= \max &\left\{\max_{\nu^\star \in \mathrm{br}(\mu)}\max_{h\in[H]} \norm{\frac{d_h^{\muE,\nu^\star}}{\rho_h}}_{\infty},\max_{\mu^\star \in \mathrm{br}(\nu)}\max_{h\in[H]} \norm{\frac{d_h^{\mu^\star,\nuE}}{\rho_h}}_{\infty}\right\}.
\end{align}
In most common non-interactive situations, we have $\rho_h = d^{\muE,\nuE}_h$ where $\muE,\nuE$ is a possible Nash equilibrium profile. 
They showed that if $C(\muE,\nuE) = \infty$ any non-interactive Multi-Agent Learning algorithm suffers from a Nash gap of the order of the horizon $H$. Defining $\gC_{\max}=\max_{\mu,\nu}\gC(\mu,\nu)$ , \cite{freihaut2025learningequilibriadataprovably} also presented a behavioral cloning analysis showing that $\widetilde{\mathcal{O}}\brr{\gC^2_{\max} \varepsilon^{-2}}$ samples are needed to learn a policy pair $\hat{\mu},\hat{\nu}$ achieving the learning goal given in \eqref{eq:learning_goal}. Clearly, we have that $\gC_{\max} \geq \gC(\muE,\nuE)$, therefore there is a gap between the upper and the lower bound. In particular, given only the results of \cite{freihaut2025learningequilibriadataprovably} it is not clear if BC or another non-interactive MAIL algorithm can learn when $\gC(\muE,\nuE)$ is finite while $\gC_{\max}$ is infinite. The following result closes the gap in the negative excluding the possibility that a non-interactive MAIL algorithm can avoid the dependence on $\gC_{\max}$ in its sample complexity.

\begin{theorem}
\label{thm:lower_bound}
    Let $\hat{\mu},\hat{\nu}$ be the output of a non-interactive MAIL algorithm $\mathrm{Alg}$. Then, for any  $\mathrm{Alg}$, there exists a Markov game such that satisfying $\mathbb{E}_{\mathrm{Alg}}\bs{\innerprod{\initial}{V^{\mu^{\star} , \widehat{\nu} } -   V^{ \widehat{\mu}, \nu^{\star} }}} \leq \gO(\varepsilon)$ requires an expert dataset of size $N=\Omega(\frac{\gC_{\max}}{\varepsilon^2})$.
\end{theorem}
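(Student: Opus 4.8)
The plan is to build a family of Markov games, indexed by a hidden bit (or a hidden direction in a parameter simplex), such that (i) the concentrability coefficient $\gC_{\max}$ is a controllable parameter of the construction, (ii) along the support of the data distribution $\rho_h$ the expert marginals $\muE(\cdot\mid s), \nuE(\cdot\mid s)$ are \emph{identical} across all instances in the family, so the dataset is statistically uninformative about the hidden bit, and yet (iii) the identity of the hidden bit changes which policy pair is a Nash equilibrium, and playing the wrong one incurs Nash gap $\Omega(\varepsilon)$. Then a standard information-theoretic argument (Le Cam / Fano, or a direct coupling) forces any algorithm that drives the expected Nash gap below $\gO(\varepsilon)$ to gather enough samples to distinguish the instances, which will cost $\Omega(\gC_{\max}\varepsilon^{-2})$ trajectories.

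Concretely, I would use the announced $3$-state gadget. Picture an initial state $s_0$ from which player~1 (or a move of nature controlled by the players) routes the game, with small probability, into one of two ``test'' states $s_+, s_-$; with the remaining large probability the game stays on a safe state where the expert plays a fixed action and reveals nothing. The ratio between ``large'' and ``small'' is exactly what tunes $\gC_{\max}$: the data distribution $\rho_h = d_h^{\muE,\nuE}$ puts mass $\approx 1/\gC_{\max}$ on the test states, whereas a best response against a suitably chosen off-equilibrium policy of the opponent forces visitation mass $\Theta(1)$ onto a test state, so $\norm{d_h^{\mu^\star,\nuE}/\rho_h}_\infty \approx \gC_{\max}$. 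In state $s_+$ the unique equilibrium action for, say, the $\mu$-player is $a_1$; in state $s_-$ it is $a_2$; in both instances the \emph{on-equilibrium} trajectory visits these states with probability only $1/\gC_{\max}$, and crucially the expert's conditional action distributions agree on the part of the state space the learner actually sees often, while the rewards in $s_\pm$ are arranged so that committing to the wrong action (which the learner must do on at least one of the two instances unless it has seen a test-state sample) produces a best-response value gap of $\Theta(\varepsilon)$ after accounting for the $1/\gC_{\max}$ reaching probability and a reward scale of $\gC_{\max}\varepsilon$ inside the test state. Rebalancing these three knobs — reaching probability $1/\gC_{\max}$, per-visit reward gap $\gC_{\max}\varepsilon$, and the resulting sample-distinguishing probability per trajectory $\Theta(1/\gC_{\max})$ — is the delicate accounting step.

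For the lower bound itself I would let $\mathcal{G}_+$ and $\mathcal{G}_-$ be the two instances differing only in which test-state action is optimal, argue that the law of the whole dataset of $N$ trajectories differs between the two instances only through the $O(N/\gC_{\max})$ samples that happen to land in a test state, hence $\mathrm{TV}(\sP_+^{\otimes N}, \sP_-^{\otimes N}) \le c\sqrt{N/\gC_{\max}}$ (a standard sub-sampling plus Pinsker/$\chi^2$ bound, or simply: with probability $\ge 1 - N/\gC_{\max}$ no test sample is ever observed and the two datasets are identically distributed). Any algorithm with expected Nash gap $\le \gO(\varepsilon)$ on both instances must, by the value-gap computation above, identify the correct instance with probability bounded away from $1/2$; combining this with the TV bound yields $N = \Omega(\gC_{\max})$ for constant $\varepsilon$, and reintroducing the $\varepsilon$-scaling of the test-state reward upgrades this to $N = \Omega(\gC_{\max}\varepsilon^{-2})$.

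The main obstacle I anticipate is the simultaneous calibration in step two: I need the data distribution to genuinely have $\ell_\infty$-concentrability exactly $\Theta(\gC_{\max})$ (which requires exhibiting the worst-case best-response visitation, i.e.\ choosing the adversarial opponent policy and verifying $\gC(\mu,\nu)$ is maximized as claimed), while \emph{at the same time} keeping the equilibrium structure of both instances intact (both $(\muE,\nuE)$ must remain genuine Nash equilibria with the \emph{same} observable conditionals on the high-probability states), \emph{and} ensuring the wrong-action penalty is $\Theta(\varepsilon)$ after the $1/\gC_{\max}$ discount — this forces the in-state reward to be as large as $\Theta(\gC_{\max}\varepsilon)$, which must still lie in $[-1,1]$, giving the feasibility constraint $\gC_{\max}\varepsilon \lesssim 1$ (consistent with $\varepsilon$ small relative to $\gC_{\max}^{-1}$, the only regime where the bound is meaningful). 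Verifying that a $3$-state game has enough room to encode all of this, and that the Nash-gap definition \eqref{eq:Nash_gap} indeed evaluates to the claimed $\Theta(\varepsilon)$ on the mismatched profile, is where the real work lies; everything downstream is a textbook two-point statistical lower bound.
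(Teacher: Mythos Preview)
Your high-level plan---two nearby instances, a rarely-visited test state, and a Le~Cam style argument---is exactly the paper's route, and the $3$-state gadget you sketch is morally the one in Figure~\ref{fig:lower}. The gap is in how you obtain the $\varepsilon^{-2}$ factor and how you calibrate the Nash gap.

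Your accounting contains an internal inconsistency. You first (correctly) note that a best response against an off-equilibrium $\hat{\mu}$ can force visitation mass $\Theta(1)$ onto the test state, yet you then compute the value gap ``after accounting for the $1/\gC_{\max}$ reaching probability'' and compensate with a reward of order $\gC_{\max}\varepsilon$. These cannot both hold: if the exploiting opponent reaches the test state with probability $\Theta(1)$, the Nash gap is simply the per-visit exploitability there, with no $1/\gC_{\max}$ discount and hence no need for super-constant rewards. The paper does precisely this: in $s_1$ the $\nu$-player \emph{chooses} the next state; under Nash she goes to $s_2$, but against a bad $\hat{\mu}$ she deviates to $s_3$ deterministically, so the Nash gap equals the exploitability of $\hat{\mu}$ in $s_3$, with rewards bounded in $[-1,1]$ throughout.

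More seriously, your mechanism for $\varepsilon^{-2}$ does not work. If the expert conditionals are \emph{identical} across instances on the support of $\rho$---and you put mass $1/\gC_{\max}>0$ on the test state, so it is in the support---then the dataset law is the same under both instances for every $N$, and no finite sample size suffices; you get $\Omega(\infty)$, not $\Omega(\gC_{\max}/\varepsilon^2)$. If instead a single test-state observation reveals the hidden bit (as your ``$a_1$ vs $a_2$'' description suggests), you get only $N=\Omega(\gC_{\max})$, and ``reintroducing the $\varepsilon$-scaling of the reward'' does not upgrade this: the learner observes expert \emph{actions}, not rewards, so shrinking rewards does not make the instances harder to tell apart. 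The paper's fix is to make the \emph{expert policy in $s_3$} differ by $\Theta(\varepsilon)$ across the two instances: $s_3$ hosts a perturbed Matching Pennies game with parameter $\Delta_{\gG}\in\{\varepsilon,2\varepsilon\}$, whose unique Nash strategies are Bernoullis $\Theta(\varepsilon)$ apart. Distinguishing these needs $\Omega(\varepsilon^{-2})$ samples \emph{from $s_3$}; since $\rho_1(s_3)=1/\gC_{\max}$, this forces $N=\Omega(\gC_{\max}/\varepsilon^2)$. The exploitability of playing the wrong Nash strategy in $s_3$ is then shown by direct computation to be $\Theta(\varepsilon)$, and the argument closes with Bretagnolle--Huber and a KL/$\chi^2$ bound between the two Bernoullis.
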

The same construction gives the following corollary for unbounded $\mathcal{C}_{\max}$.
\begin{corollary}\label{cor:lower}
    For any non-interactive MAIL algorithm $\mathrm{Alg}$, there exists a Markov game $\gG$ with $\gC(\muE,\nuE) <\infty$ and $\gC_{\max} = \infty$ where $\mathbb{E}_{\mathrm{Alg}}\bs{\innerprod{\initial}{V^{\mu^{\star} , \widehat{\nu} } -   V^{ \widehat{\mu}, \nu^{\star} }}} \geq \frac{H-1}{60}$.
\end{corollary}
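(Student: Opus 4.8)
The plan is to build a single family of hard instances parametrized by a bias $\delta$ (to be tuned as $\delta \asymp \varepsilon$) and an index that the dataset cannot resolve, and then reduce $\varepsilon$-Nash learning to a hypothesis testing problem. Concretely, I would design a Markov game with three states: an initial state $s_0$ where only one player — say the $\mu$-player — has a nontrivial choice between two actions, leading deterministically to one of two ``gadget'' states $s_1, s_2$; in each gadget state, a sub-game is played whose equilibrium behavior depends on a hidden parameter. The key structural trick is to make the expert visitation distribution $\rho_h = d_h^{\muE,\nuE}$ supported only on $s_0$ and on the gadget state actually reached by the expert, so that $\mathcal{C}(\muE,\nuE)$ stays bounded (the expert never needs coverage of the off-path gadget), while a best response to an arbitrary deviating policy must visit the \emph{other} gadget state — which has $\rho_h$-mass exactly $1/\mathcal{C}_{\max}$ or zero, making $\mathcal{C}_{\max}$ large (or infinite in the Corollary's degenerate case). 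This decoupling of $\mathcal{C}(\muE,\nuE)$ from $\mathcal{C}_{\max}$ is exactly what Open Question 1 asks about, so getting the gadget geometry right is the conceptual heart of the construction.

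Next I would set up the information-theoretic argument. Inside a gadget state whose $\rho_h$-mass is $p := 1/\mathcal{C}_{\max}$, I place a matching-pennies-style $2\times 2$ sub-game biased by $\pm\delta$ depending on a hidden bit $\theta$; the expert (a Nash equilibrium of the full game) plays the corresponding slightly-skewed equilibrium there. A learner that has not correctly identified $\theta$ must output a policy pair whose restriction to that gadget is far from equilibrium, and by a standard sub-game/performance-difference argument this forces a Nash gap of order $\delta$ — I would prove a lemma of the form ``if $\widehat\mu,\widehat\nu$ disagree with the true biased equilibrium in the reached gadget, then $\Nashgap(\widehat\mu,\widehat\nu) \gtrsim \delta$,'' using that $s_0$ deterministically funnels probability $1$ into that gadget. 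The dataset, however, only sees $N$ i.i.d. action samples from the biased equilibrium conditioned on a state that appears with probability $p$ per stage; effectively the learner observes $O(Np)$ informative samples from a distribution that differs from its $\theta$-flipped counterpart by total variation $O(\delta)$. By Le Cam's two-point method (or Pinsker plus a KL chain-rule computation over the $H$ stages), any estimator of $\theta$ — hence any algorithm meeting the Nash-gap target — has constant error probability unless $N p \delta^2 \gtrsim 1$, i.e. $N \gtrsim \mathcal{C}_{\max}/\delta^2 \asymp \mathcal{C}_{\max}/\varepsilon^2$, which is the claimed bound.

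For the Corollary I would take the limit $p \to 0$: make the off-path gadget state reachable by a best response against some deviating $\mu$ but assigned $\rho_h$-mass zero (the expert equilibrium simply never routes there, so it need not be covered and $\mathcal{C}(\muE,\nuE)$ stays finite, while $\mathcal{C}_{\max} = \infty$). Then \emph{no} finite dataset carries any information about $\theta$ in that gadget, so the learner's output is independent of $\theta$; averaging over $\theta \in \{0,1\}$, its expected Nash gap is at least the gap incurred by a fixed policy against the worse of the two instances. With $\delta$ taken as large as the game permits — on this $3$-state construction the sub-game payoffs live in $[-1,1]$ and the horizon contributes a factor $H-1$ through repeated visits to the gadget — one gets the explicit $\tfrac{H-1}{60}$ lower bound after tracking constants (the $60$ absorbing the Le Cam / averaging slack and the matching-pennies normalization).

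The main obstacle I anticipate is the gadget design itself: I need the \emph{same} three-state game to simultaneously (i) have a genuine Nash equilibrium whose visitation distribution keeps $\mathcal{C}(\muE,\nuE)$ finite, (ii) force any near-equilibrium output to essentially recover the hidden bit, and (iii) make $\mathcal{C}_{\max}$ scale like a free parameter (or blow up). Reconciling (i) and (iii) is delicate because a best response to an adversarial policy must be able to ``escape'' to the uncovered gadget, which constrains the transition structure out of $s_0$ and the payoffs in both gadgets; once that geometry is fixed, bounding $\Nashgap$ from below in terms of $\delta$ via the performance-difference lemma, and bounding the per-sample KL from above via the gadget's reach-probability $p$, should both be routine. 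I would therefore spend most of the effort pinning down the transition kernel and reward tables, and verifying the two concentrability computations, before invoking the standard lower-bound machinery.
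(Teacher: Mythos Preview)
Your proposal is correct and follows essentially the same blueprint as the paper: a three-state game where one player controls a deterministic transition into either a benign state or a biased matching-pennies gadget, with the gadget off-path under the expert profile so that $\mathcal{C}(\muE,\nuE)$ is finite while $\mathcal{C}_{\max}$ is governed by $\rho$ on that gadget; the Corollary then follows from the two-point argument with zero informative samples, taking $\delta$ as large as the payoff range allows and repeating the gadget for $H-1$ stages to pick up the horizon factor. The one simplification in the paper you might adopt is to let the \emph{minimizer} ($\nu$-player), not the maximizer, control the transition out of the initial state: then the off-path gadget can carry strictly positive Nash value so the minimizer rationally avoids it, and the on-path state can be a zero-reward sink rather than a second sub-game --- this removes the ``two gadgets with carefully ordered values'' design problem you identified as your main obstacle, and it also cleans up the slight inconsistency in your write-up about which player is the ``deviating'' one (with $\mu$ controlling, the relevant occupancy for $\mathcal{C}_{\max}$ is $d^{\mu^\star,\nuE}$ with $\mu^\star\in\mathrm{br}(\nu)$, so the deviating policy is $\nu$, not $\mu$).
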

The lower bound therefore excludes the existence of non-interactive MAIL algorithms improving the dependence on $\varepsilon$ on the sample complexity bound for BC and that avoids the dependence on $\gC_{\max}$.
Moreover, Corollary~\ref{cor:lower} shows that it is possible to construct games with small $\mathcal{C}(\muE,\nuE)$ but unbounded $\gC_{\max}$. In this regime, the results of \cite{freihaut2025learningequilibriadataprovably} cannot characterize the behavior of BC, whereas our results predict that learning in a non-interactive setting with unbounded $\mathcal{C}_{\max}$ is not possible.

\subsection{Main result in interactive MAIL}
This section presents our main result for the interactive MAIL setting, focusing on our new algorithm \ours{} (see Algorithm~\ref{alg:mail_warm}).  
To provide context, \citet{freihaut2025learningequilibriadataprovably} introduced MURMAIL, the first interactive MAIL algorithm with theoretical guarantees. Their analysis shows that avoiding the concentrability coefficient $\gC(\muE,\nuE)$ requires a queriable expert. While this enables effective minimization of the Nash Gap, it comes at the cost of $\mathcal{O}(\varepsilon^{-8})$ expert queries.\\

This raises a gap: when the concentrability coefficient is small and bounded, non-interactive imitation learning algorithms can outperform existing interactive methods. To close this gap, we introduce \ours~(Algorithm~\ref{alg:mail_warm}), which reduces the required number of expert queries from $\mathcal{O}(\varepsilon^{-8})$ to $\mathcal{O}(\varepsilon^{-2})$ by leveraging a reward-free warm-up phase. As a result, the sample complexity of interactive MAIL becomes comparable to the Behavior Cloning upper bound, while entirely removing dependence on the concentrability coefficient.  \\

Since each trajectory collection queries the expert $H$ times, the total number of expert queries is bounded by $\mathcal{O}(N H) = \gO\!\left(\frac{H^7 S^3 A^2 B \log(S/\delta_{\mathrm{fail}})}{\varepsilon^2}\right).$
In contrast to MAIL-BRO \citep{freihaut2025learningequilibriadataprovably}, our approach requires no additional assumptions such as access to a best-response oracle, which takes an input the policy of one player, and outputs a best responding policy for the other player. Up to problem-dependent factors in $H,S,A$, the rate matches that of Behavior Cloning but crucially avoids reliance on the concentrability coefficient, which can be unbounded and cause non-interactive methods to fail. Thus, \ours~ improves the best known guarantees by an order of $\varepsilon^{-6}$. Moreover, by adapting the construction from Theorem~\ref{thm:lower_bound}, we show that the rate $\mathcal{O}(\varepsilon^{-2})$ is optimal.  

The following theorem states our main guarantee for \ours.  

\begin{theorem}
\label{thm:main_result}
    For any $\varepsilon > 0$ and $\delta_{\mathrm{fail}} \in (0,1)$, if we execute Algorithm~\ref{alg:mail_warm} and choose the parameters according to $N = \gO(\frac{H^6S^3A^3_{\max} \log(S/\delta_{\mathrm{fail}})}{\varepsilon^2})$ and $N_0 \geq \mathcal{O}\brr{S^3A^2_{\max}H^6\iota^3_0/\varepsilon},$ we get with probability $1-\delta_{\mathrm{fail}}$ for the policies $(\widehat{\mu}, \widehat{\nu})$ that
    \[\Nashgap(\widehat{\mu}, \widehat{\nu}) \leq \mathcal{O}(\varepsilon).\]
\end{theorem}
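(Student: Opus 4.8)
The plan is to show that the pair returned by \ours{} --- the empirical expert policies $\widehat{\muE},\widehat{\nuE}$ estimated from the queried demonstrations, with no best-response oracle (consistent with the discussion preceding the theorem) --- has small Nash gap, by decomposing $\Nashgap(\widehat{\muE},\widehat{\nuE})$ into one imitation-error term per player, controlling each via a change of measure from the relevant best-response visitation to the exploration distribution built in the warm-up, and closing with a categorical-concentration bound. The first step is to analyse the reward-free warm-up, run without rewards on the joint-action dynamics with action set $\gA\times\gB$. Invoking a reward-free exploration guarantee, after $N_0$ episodes it returns exploration policies whose stage-$h$ state distribution $\rho_h$ is an approximate policy cover: for all $h\in[H]$ and $s\in\gS$,
\[
\max_{\mu,\nu} d_h^{\mu,\nu}(s)\;\le\; S\,\rho_h(s)\;+\;S\varepsilon_0 ,
\]
and $N_0 = \gO\!\brr{S^3A_{\max}^2 H^6\iota_0^3/\varepsilon}$ makes $\varepsilon_0\le \varepsilon/(cH^2S)$. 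I expect this to be the main obstacle: the cover must be \emph{policy-agnostic} --- it has to dominate $d_h^{\mu,\nu}$ for \emph{every} joint policy, because the best responses appearing later in the Nash-gap bound depend on the not-yet-estimated $\widehat{\muE},\widehat{\nuE}$ --- it must hold uniformly over all states including barely reachable ones, and the additive error $\varepsilon_0$ may only be amplified by $\mathrm{poly}(H,S)$ in the final bound so that an $\varepsilon^{-1}$ warm-up budget suffices.

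\textbf{Data collection and estimation.} Deploying the exploration policies for $N$ episodes and querying the expert at every visited state produces, at stage $h$, demonstrations at states drawn from $\rho_h$. Letting $\widehat{\muE}(\cdot\mid s),\widehat{\nuE}(\cdot\mid s)$ be the empirical action frequencies (arbitrary at unvisited states), a standard concentration bound for empirical categorical distributions, together with a Cauchy--Schwarz step over states (the rare, low-count states contribute at most $\sum_s\rho_h(s)\cdot 2$, absorbed into the same estimate), gives for each $h$
\[
\expect\!\Big[\,\textstyle\sum_{s\in\gS}\rho_h(s)\,\norm{\widehat{\nuE}(\cdot\mid s)-\nuE(\cdot\mid s)}_1\Big]\;\lesssim\;\sqrt{S A_{\max}/N},
\]
and symmetrically for $\widehat{\muE}$; a union bound over $h$ and $s$ upgrades this to a high-probability statement and produces the $\log(S/\delta_{\mathrm{fail}})$ factor.

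\textbf{Nash-gap decomposition.} With $\mu^\star\in\mathrm{br}(\widehat{\nuE})$, $\nu^\star\in\mathrm{br}(\widehat{\muE})$ and $V^\star$ the (unique) Nash value, I would insert $\pm V^\star$ and use $\muE\in\mathrm{br}(\nuE)$ and $\nuE\in\mathrm{br}(\muE)$, which give $V_0^{\mu^\star,\nuE}\le V^\star\le V_0^{\muE,\nu^\star}$, to get
\[
\Nashgap(\widehat{\muE},\widehat{\nuE})\;\le\;\innerprod{\initial}{V_0^{\mu^\star,\widehat{\nuE}}-V_0^{\mu^\star,\nuE}}\;+\;\innerprod{\initial}{V_0^{\muE,\nu^\star}-V_0^{\widehat{\muE},\nu^\star}} .
\]
Each difference fixes one player's policy and varies the other, so the single-agent performance-difference (simulation) lemma together with $|Q_h|\le H$ yields
\[
\Nashgap(\widehat{\muE},\widehat{\nuE})\;\le\;H\sum_{h=1}^H\Big(\textstyle\sum_{s} d_h^{\mu^\star,\widehat{\nuE}}(s)\,\norm{\widehat{\nuE}(\cdot\mid s)-\nuE(\cdot\mid s)}_1+\sum_{s} d_h^{\widehat{\muE},\nu^\star}(s)\,\norm{\widehat{\muE}(\cdot\mid s)-\muE(\cdot\mid s)}_1\Big).
\]

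\textbf{Combining the pieces.} Bounding $d_h^{\mu^\star,\widehat{\nuE}}(s)\le S\rho_h(s)+S\varepsilon_0$ (and analogously $d_h^{\widehat{\muE},\nu^\star}$) through the policy-cover property, then invoking the estimation bound on the resulting $\rho_h$-weighted sums and summing over $h$, gives
\[
\Nashgap(\widehat{\muE},\widehat{\nuE})\;\lesssim\; H^2 S^{3/2}\sqrt{A_{\max}/N}\;+\;H^2S^2\varepsilon_0 .
\]
The second term is $\gO(\varepsilon)$ by the choice of $\varepsilon_0$, and setting $N=\gO\!\brr{H^6S^3A_{\max}^3\log(S/\delta_{\mathrm{fail}})/\varepsilon^2}$ makes the first term $\gO(\varepsilon)$ as well; a final union bound over the warm-up and imitation phases delivers the statement with probability $1-\delta_{\mathrm{fail}}$. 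The extra powers of $H$ and $A_{\max}$ over this rough count are absorbed by the high-probability forms of the reward-free and concentration lemmas and by $\abs{\gA\times\gB}\le A_{\max}^2$.
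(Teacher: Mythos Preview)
Your overall architecture (warm-up cover $\to$ PDL decomposition $\to$ change of measure $\to$ BC concentration) matches the paper, but you have misread what the warm-up in Algorithm~\ref{alg:mail_warm} actually does, and the mismatch propagates into the decomposition. The warm-up does \emph{not} explore the joint game with action set $\gA\times\gB$; it fixes the $\nu$-player to the queriable expert $\nuE$ and runs single-agent reward-free exploration (EULER) in the induced MDP $\mathcal{M}^{\nuE}$ with transition $P^{\nuE}$ and action set $\gA$ only, and symmetrically in $\mathcal{M}^{\muE}$. Consequently the cover the algorithm produces (Theorem~\ref{thm:reward_free_main_result}) is a multiplicative bound on $\delta$-significant states and is only guaranteed for occupancies of the form $d_h^{\mu,\nuE}$ with the \emph{true} expert fixed --- not for arbitrary $d_h^{\mu,\nu}$.

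This is exactly why the paper's exploitability decomposition (Lemma~\ref{lemma:expl_decomposition}) applies the performance-difference lemma in the form that yields $d_h^{\mu^\star,\nuE}$, whereas you use the other legitimate form giving $d_h^{\mu^\star,\widehat{\nu}}$. Both are valid PDL instantiations in isolation, but only the former is compatible with the induced-MDP cover that Algorithm~\ref{alg:mail_warm} builds. Your stated worry that the cover ``must be policy-agnostic --- it has to dominate $d_h^{\mu,\nu}$ for every joint policy'' is precisely what the paper sidesteps: by choosing the PDL form with $\nuE$ held fixed, only coverage over the $\mu$-player's deviations is needed, and the queriable expert is used during the warm-up to simulate the fixed player. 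That coupling between the PDL form and the single-agent warm-up is the central algorithmic idea, and your proposal misses it. Your joint-exploration variant might be developed into a \emph{different} algorithm with its own proof, but it does not establish the theorem as stated about Algorithm~\ref{alg:mail_warm}, and your additive cover with leading factor $S$ (rather than the paper's $2SAH$ multiplicative bound on $\delta$-significant states plus the trivial $\delta$ bound elsewhere) would require separate justification.
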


This result establishes a sample complexity guarantee of $\gO(\varepsilon^{-2})$, which matches our lower bound and is therefore rate-optimal in the sense that the dependence on $\varepsilon$ cannot be improved. Consequently, \ours~ achieves the optimal sample complexity in the interactive setting, closing the remaining theoretical gap in interactive MAIL.  

\section{Lower bound }
We now provide intuition for our lower bound result. The key insight is that the boundedness of the concentrability coefficient $\gC(\muE,\nuE)$  is not sufficient to guarantee that a non-interactive imitation learning algorithm outputs an $\varepsilon$-Nash equilibrium. \\

The reason is that bounded $\gC(\muE,\nuE)$ only guarantees that all states that can be visited by \emph{rational} deviations have positive probability under the dataset distribution $\rho$. By rational deviation, we mean that the opponent might choose to act only according to policies which are guaranteed not to be exploited by a Nash equilibrium. However, we show that the opponent could exploit the output of a non-interactive MAIL algorithm deviating \emph{irrationaly}, in the sense that such deviation would be exploitable by a Nash profile. Bounded $\gC_{\max}$ imposes that also the states reachable by irrational deviations are covered by the sampling distribution $\rho$. \\

To illustrate this, consider the Markov game instance from Theorem~\ref{thm:lower_bound}, depicted in Figure~\ref{fig:lower}. Both agents start in state $s_1$, where agent 2 controls the transition: action $b_1$ leads to $s_2$, while action $b_2$ leads to $s_3$, regardless of agent 1’s choice. The rewards are $r(s_1) = r(s_2) = 0$, while in $s_3$ the agents play a normal-form game with Nash value of $1$. Importantly, in equilibrium, both players visit only $s_1$ and $s_2$ when acting according to a NE, which ensures that $\mathcal{C}(\muE, \nuE)$ is bounded. However, since the learner never observes equilibrium play in $s_3$, it cannot recover the correct Nash strategy for agent 1 in that state. This allows agent 2 to deviate \emph{irrationaly} by choosing $b_2$ and exploit the learner in $s_3$. \\

The lower bound is established with well-established information techniques ~(see e.g. \citep{lattimore2020bandit})~  by constructing a family of two Markov games in which the payoff structure in $s_3$ corresponds to different versions of Matching Pennies, with the unique Nash value differing by $\varepsilon$. Distinguishing these two games requires $\Omega(\varepsilon^{-2})$ samples. Since $s_3$ is visited only $\gC_{\max}$ many times under the expert equilibrium, a non-interactive learner cannot collect enough data to resolve this ambiguity, which yields the lower bound in Theorem~\ref{thm:lower_bound}. A detailed proof is provided in Appendix~\ref{appendix:lower_bound}.

\begin{figure}[h!]
    \centering
\begin{tikzpicture}[>=stealth',shorten >=1pt,auto,node distance=2.8cm]
  \node[state] (s0)  {$s_1$};
  \node[state] (s1)[right of=s0] {$s_2$};
  \node[state] (s2) [below of=s1] {$s_3$};
  \path[->] 
    (s0) edge node {$a_1 b_1$, $a_2 b_1$} (s1);
  \path[->] 
    (s0) edge node {$a_1 b_2$, $a_2 b_2$} (s2);
\end{tikzpicture}
    \caption{Markov game instance used for the Lower bound.}
    \label{fig:lower}
\end{figure}
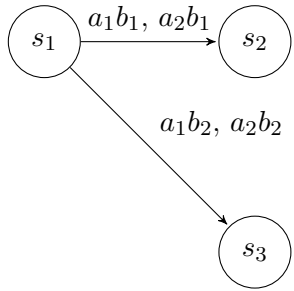

\section{Interactive MAIL algorithm}
In this section, we describe our proposed algorithm \ours~ in detail. The algorithmic pseudocode can be found in Algorithm~\ref{alg:mail_warm}. In the first phase, we loop over states and stages. For each $s, h \in \gS \times [H]$, we instantiate a reward-maximization problem in which the only nonzero, positive reward is obtained by reaching state $s$ after $h$ stages in the MDP induced by fixing the $\nu$-player to follow the Nash equilibrium policy $\nuE$. 
At line~$6$, this RL problem is solved via EULER \citep{zanette2019tighterproblemdependentregretbounds}, which outputs a sequence of $N_0$ policies satisfying a first-order regret bound.
At line~$8$, we set the policies at all stages after $h$ to be uniform. This choice is arbitrary, the effective reward instance has an horizon of length $h$ and the policy can be arbitrary set after step $h$ without changing the regret properties.
At the end, of the initial for loop we obtain a policy set $\Psi^{\nuE}$ containing $S H N_0$ policies.
\begin{algorithm}[!h]
\caption{Multi-Agent Imitation Learning with reward-free warm-up (\ours)}
\label{alg:mail_warm}
\begin{algorithmic}[1]
\State \textbf{Input:} iteration number $N_0$, $N$, queriable expert $\nuE$.
\State \textbf{Reward-free warm-up phase:}
\State set policy class $\Psi^{\nuE} \leftarrow \emptyset$, and dataset $\mathcal{D} \leftarrow \emptyset$.
\For{all $(s,h) \in \mathcal{S} \times [H]$}
    \State $r_h^{\nuE}(s', a') \leftarrow \mathbf{1}[s'=s \text{ and } h'=h]$ for all $(s',a',h') \in \mathcal{S} \times \mathcal{A} \times [H]$.
    \State $\bcc{\pi_i^{(s,h)}}^{N_0}_{i=1} \leftarrow \text{EULER}(r^{\nuE}, N_0, P^{\nuE})$.
    \State Let $\Phi^{(s,h)} \gets \bcc{\pi_i^{(s,h)}}^{N_0}_{i=1}$
    \State $\mu_{h'}(\cdot|s) \leftarrow\text{Unif}(\mathcal{A})$, $~\forall \mu \in \Phi^{(s,h)}, \forall h' \geq h $.
    \State $\Psi^{\nuE} \leftarrow \Psi^{\nuE} \cup \Phi^{(s,h)}$.
\EndFor
\For{$n = 1 \dots N$}
    \State sample policy $\mu \sim \text{Unif}(\Psi^{\nuE})$.
    \State Collect $ z_n = (s_1, a_1, b_1, \dots, s_{H+1})  \sim \mu, \nuE$.
    \State $\mathcal{D}^{\nuE} \leftarrow \mathcal{D}^{\nuE} \cup \{z_n\}$
\EndFor
\State \textbf{Repeat}: the reward-free warm-up phase fixing $\muE$ for fixed $\muE$ generating the policy set $\Psi^{\muE}$ and the dataset $\mathcal{D}^{\muE}$.
\State \textbf{Receive:} datasets $(\mathcal{D}^{\muE},\mathcal{D}^{\nuE})$.
\State \textbf{Imitation Learning}
\State  Use dataset $\mathcal{D}^{\nuE}$ to compute 
\[
\hat{\nu} = \argmin_{\nu \in \Pi_{\nu}} \sum_{s,b \in \mathcal{D^{\nuE}} } - \log \nu(b|s)
\]
\State Use dataset $\mathcal{D}^{\muE}$ to compute 
\[
\hat{\mu} = \argmin_{\mu \in \Pi_{\mu}} \sum_{s,a \in \mathcal{D^{\muE}} } - \log \mu(a|s)
\]
\State \textbf{Return} Nash estimate $(\widehat{\mu},\widehat{\nu}).$
\end{algorithmic}
\end{algorithm}
Next, we can exploit this set to construct an exploratory dataset $\mathcal{D}^{\nuE}$ of $N$ state action sequences in the Markov game. A trajectory is collected fixing the $\nu$-player to the strategy $\nuE$ while the $\mu$ player acts according to a policy sampled uniformly at random from the set $\Psi^{\nuE}$. Repeating the sampling process for choosing the policy of the player $\mu$, $N$ times, gives the desired dataset.
Switching the player that remains fixed and reapplying the same procedure, we obtain the exploratory dataset $\mathcal{D}^{\muE}$ for the single player MDP induced by fixing the $\mu$-player to the policy $\muE$.
Finally, the last step is to use the datasets $\mathcal{D}^{\muE}$  and $\mathcal{D}^{\nuE}$ in lines $19$ and $20$ to apply behavioral cloning over the state-action pairs in these datasets. \\

All in all, our new algorithm is based on the intuition of using a reward-free  routine to design an exploratory dataset over which we can prove benign bounds for behavioral cloning which are actually independent of the concentrability factor.
This is without contradiction with our lower bound since in the dataset construction phase we used interaction with the expert. \\

Compared to \emph{MURMAIL} and \emph{MAIL-BRO} by \citet{freihaut2025learningequilibriadataprovably}, our new algorithm requires neither a best response oracle nor to compute the maximum uncertainty response by solving a RL inner-loop at every iteration. In particular, the maximum uncertainty response in MURMAIL required to compute at each iteration $k\in [K]$ an RL problem with reward $\norm{\nu_k(\cdot|s) - \nuE(\cdot|s)}^2$. In comparison,  \ours ~solves only $SH$ RL problems with rewards independent of a particular policy $\nu_k$.  Since the number of RL problems no longer depends on $K$, \ours~ achieves better guarantees. \\

This intuition is formalized in the next section, which presents the key steps of the proof of Theorem~\ref{thm:main_result}.
\section{Analysis}
The goal of this section is to prove our main result stated in Theorem~\ref{thm:main_result}.
We begin by presenting general results on the reward-free algorithm of \cite{jin2020reward}, applied to the single-player MDPs that arise when one of the two players is fixed to the queriable expert policy. Then, we show how analyzing Behavior Cloning on the dataset generated during the reward-free warm-up phase can be used to effectively minimize the Nash gap.
\subsection{Phase 1: Reward-free Warm-up}

As evident from the algorithm presentation, the core concept of the analysis is the single-agent MDP induced by fixing one of the two players to the policy of the queriable expert. 

\begin{definition}[Expert Induced MDP]
\label{def:induced_MDP}
    Let $\gG$ be a Markov game and $\muE,\nuE$ the expert policies, then $\mathcal{M}^{\nuE}:= (\gS, \gA, P^{\nuE}, r^{\nuE}, H)$ is the MDP induced by the expert $\nuE$ with the transition model $P^{\nuE} = \bcc{P^{\nuE}_h}^H_{h=1}$ with $P_h^{\nuE} (s' \mid s,a):= \sum_{b\in\gB}  \nuE_h(b \mid s) P_h(s'\mid s,a,b) $ and an arbitrary reward function $r^{\nuE} = \bcc{r^{\nuE}_h}^H_{h=1},$ such that $r^{\nuE}_h(s,a) \in \{0,1\} \quad \forall (s,a) \in \gS\times \gA$. Analogously we define $\gM^{\muE}$ as the MDP induced by the expert policy $\muE$.
\end{definition}

Notice that the reward function in the induced expert MDP is not related to the true unknown reward function of the original Markov game, instead it is used for exploration purposes as shown in Algorithm~\ref{alg:mail_warm}. With this definition, we can state a fundamental result about the reward-free warm-up phase. 
In particular, Theorem~\ref{thm:reward_free_main_result} states that using the distributions $p_h^{\muE}(s,b) := (N_0 S H)^{-1}\sum_{\nu \in \Psi^{\muE}}d_h^{\muE, \nu}(s) \nu(b|s)$ and $p_h^{\nuE}(s,a) := (N_0 S H)^{-1}\sum_{\mu \in \Psi^{\nuE}}d_h^{\mu, \nuE}(s) \mu(a|s)$ to generate the dataset (that is using them as distribution $\rho$) guarantees a benign bound on the coverage over the set of $\delta$-reachable states of any possible occupancy measure in the induced MDPs. By coverage, we mean the ratio $\max_{a,h}d_h^{\mu,\nuE}(s,a)/p_h^{\nuE}(s,a)$ which at an intuitive level can be thought at the average number of times one needs to sample from  $p^{\nuE}$ before hitting a state action pair which has probability $d^{\mu,\nuE}(s,a)$ under the policy $\mu$. \\

Technically, the result is obtained applying Theorem 3.3 of \citet{jin2020reward} twice in the induced MDPs $\mathcal{M}^{\nuE}$ and  $\mathcal{M}^{\muE}$. 
\begin{theorem}
\label{thm:reward_free_main_result}
    Let $\mathcal{M}^{\nuE}$ be the induced MDP defined in Definition~\ref{def:induced_MDP} and the policy $\Psi^{\nuE}$ set generated according to Algorithm~\ref{alg:mail_warm}. Then there exists  an absolute constant $c>0$ such that for any $\varepsilon > 0 $ and $p\in(0,1),$ if we set $N_0 \geq cS^2AH^4\iota^3_0/\delta,$ where $\iota_0:= \log(SAH/p\delta),$ then with probability $1-p,$ the reward free exploration returns a sampling distribution $p^{\nuE}$ such that for all $\mu \in \Pi_{\mu}$
\begin{align}
\label{eq:reward_free_coverage}
        \forall \delta-\mathrm{significant} (s,h), \quad \max_{a,h} \frac{d^{\mu,\nuE}_h(s,a)}{p_h^{\nuE}(s,a)} \leq 2SAH,
\end{align}
    where $(s,h)$ is $\delta-\mathrm{significant},$ if the probability to reach a state $s$ at step $h$ in the induced MDP $\mathcal{M}^{\nuE}$ is lower bounded by $\delta$:
    \[\max_{\mu\in\Pi_{\mu}} d^{\mu,\nuE}_h(s) \geq \delta.\]
 Analogously, we get the same result for $\delta$-reachable states in $\mathcal{M}^{\muE} $ with sampling distribution $p^{\muE}$.
\end{theorem}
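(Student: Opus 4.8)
The plan is to reduce the statement to the reward-free exploration guarantee of \citet{jin2020reward}, instantiated on the expert-induced MDP $\mathcal{M}^{\nuE}$ (and, by symmetry, on $\mathcal{M}^{\muE}$), and then to translate the abstract policy cover that this routine produces into the explicit sampling distribution $p^{\nuE}$ used in Algorithm~\ref{alg:mail_warm}. Throughout, write $d_h^\star(s) := \max_{\mu \in \Pi_\mu} d_h^{\mu,\nuE}(s)$, so that $(s,h)$ is $\delta$-significant exactly when $d_h^\star(s) \geq \delta$, and let $\bar\pi^{(s,h)} := \mathrm{Unif}(\Phi^{(s,h)})$ denote the uniform mixture over the $N_0$ policies EULER returns for the target $(s,h)$; by linearity of occupancy measures under policy mixing one has $d_h^{\bar\pi^{(s,h)},\nuE}(s) = N_0^{-1}\sum_{i=1}^{N_0} d_h^{\pi_i^{(s,h)},\nuE}(s)$.

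First I would record two elementary facts. (i) For the exploration reward $r^{\nuE}$ of line~5 targeting $(s,h)$, the expected episode return of any policy $\pi$ equals $d_h^{\pi,\nuE}(s) \in [0,1]$, so the optimal value is exactly $d_h^\star(s)$; moreover, overwriting $\pi$ at stages $\geq h$ by the uniform policy (line~8) changes neither $d_h^{\pi,\nuE}(s)$ nor this return, since the stage-$h$ state is independent of the stage-$h$ action, so EULER's guarantee is unaffected by the overwrite. (ii) Discarding all but the $(s,h)$-block in the sum defining $p_h^{\nuE}$ and using that every $\pi \in \Phi^{(s,h)}$ plays $\mathrm{Unif}(\gA)$ at $s$ at stage $h$ gives
\[
p_h^{\nuE}(s,a) \;\geq\; \frac{1}{N_0 S H}\sum_{i=1}^{N_0} d_h^{\pi_i^{(s,h)},\nuE}(s)\cdot\frac1A \;=\; \frac{1}{SHA}\, d_h^{\bar\pi^{(s,h)},\nuE}(s).
\]

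The heart of the argument is to show that $d_h^{\bar\pi^{(s,h)},\nuE}(s) \geq \tfrac12 d_h^\star(s)$ on every $\delta$-significant $(s,h)$. Here I would invoke the first-order (value-dependent) regret bound of EULER \citep{zanette2019tighterproblemdependentregretbounds}: with probability $1 - p/(SH)$ the $N_0$ returned policies satisfy $\sum_{i=1}^{N_0}\big(d_h^\star(s) - d_h^{\pi_i^{(s,h)},\nuE}(s)\big) \leq \widetilde{O}\big(\sqrt{SAH^2\, d_h^\star(s)\, N_0} + S^2 A H^3\big)$. Dividing by $N_0$, the left-hand side becomes $d_h^\star(s) - d_h^{\bar\pi^{(s,h)},\nuE}(s)$, and, since $d_h^\star(s) \geq \delta$ on significant $(s,h)$, the choice $N_0 \geq c S^2 A H^4 \iota_0^3/\delta$ (for a suitable absolute constant $c$) makes the right-hand side at most $d_h^\star(s)/2$. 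A union bound over the at most $SH$ targets, whose $\log(SH/p)$ cost is absorbed into $\iota_0 = \log(SAH/p\delta)$, makes this hold simultaneously for all $\delta$-significant $(s,h)$ with probability $1-p$. Chaining (ii) with this bound, for every $\mu \in \Pi_\mu$, every $\delta$-significant $(s,h)$ and every $a$,
\[
d_h^{\mu,\nuE}(s,a) \;\leq\; d_h^{\mu,\nuE}(s) \;\leq\; d_h^\star(s) \;\leq\; 2\, d_h^{\bar\pi^{(s,h)},\nuE}(s) \;\leq\; 2SHA\, p_h^{\nuE}(s,a),
\]
which is exactly \eqref{eq:reward_free_coverage}. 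Re-running the identical argument with the roles of the two players exchanged yields the analogous statement for $\mathcal{M}^{\muE}$ and $p^{\muE}$.

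The step I expect to be the main obstacle is the bookkeeping around EULER's regret bound: one must verify that its problem-dependent form — whose leading term scales with $\sqrt{d_h^\star(s)}$ rather than $\sqrt{H}$ — is precisely what converts a naive $1/\delta^2$ sample requirement into the claimed $N_0 \gtrsim S^2 A H^4 \iota_0^3/\delta$, and one must track the polynomial and logarithmic factors carefully enough to confirm that both the leading and the lower-order regret terms fall below $d_h^\star(s)/4$ under this choice of $N_0$ (at confidence level $p/(SH)$, so that the surviving polylogs are $\mathrm{poly}(\iota_0)$). Everything else — the reduction to the single-player induced MDPs, linearity of visitation measures, and the union bound — is routine.
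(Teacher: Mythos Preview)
Your proposal is correct and takes essentially the same approach as the paper: the paper simply states that the result follows by applying Theorem~3.3 of \citet{jin2020reward} twice in the induced MDPs $\mathcal{M}^{\nuE}$ and $\mathcal{M}^{\muE}$, and your write-up is precisely an unpacking of that theorem's proof (EULER's first-order regret on the $(s,h)$-indicator rewards, the mixture policy reaching $\tfrac12 d_h^\star(s)$ on $\delta$-significant pairs, and the lower bound on $p_h^{\nuE}(s,a)$ via the uniform action at stage $h$). The bookkeeping concern you flag is real but is exactly the content of Lemma~3.6/Theorem~3.3 in \citet{jin2020reward}, so nothing beyond their argument is needed.
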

For convenience, we will denote the set of $\delta$-significant states in the MDP $\mathcal{M}^{\nuE}$ as $\gS^{\nuE}_{\delta,h}$. For the other player, $\gS^{\muE}_{\delta,h}$ is defined analogously. \\

In reward-free RL this dataset is taken to enable learning against any reward function. Here, instead we find a new interesting use case of the warm-up phase which is to use the generated datasets $\mathcal{D}^{\muE}$ and  $\mathcal{D}^{\nuE}$ for running behavioral cloning. \\

In particular, we can perform a change of measure to avoid the need of sampling states according to the occupancy measure of the best response policy which cannot be computed. Given the good coverage properties of the distributions $p^{\nuE}$ and $p^{\muE}$, the change of measure creates an increase in the number of needed samples of at most $SH A_{\max}$. Since the increase is independent on the desired accuracy $\varepsilon$ we can retain the optimal $\mathcal{O}(\varepsilon^{-2})$ rate. \\

Next, we can see how the coverage property in Theorem~\ref{thm:reward_free_main_result} enables to prove Theorem~\ref{thm:main_result}.

\subsection{Phase 2: BC over the datasets generated in Phase 1 }
In the last section, we have seen how one can construct datasets $\mathcal{D}^{\nuE}$ and $\mathcal{D}^{\muE}$ using ideas from reward-free RL.
We will use the coverage property of these datasets on top of the following exploitability decomposition. 
\begin{lemma}[\textbf{Exploitability decomposition}]
 \label{lemma:expl_decomposition}
For any policy pair $\nu,\nu'$, we define their total variation at state $s$ as $\TV(\nu,\nu')(s) = \sum_{b \in \mathcal{B}} \abs{\nu(b|s)-\nu'(b|s)}.$ It holds that
\begin{align*}
\innerprod{\initial}{V^{\mu^{\star}, \widehat{\nu} }-   V^{ \widehat{\mu}, \nu^{\star}} } \leq  2H \sum^H_{h=1} \sum_{\pi \in \bcc{\hat{\mu},\hat{\nu}}}\mathrm{Err}_h(\pi).
\end{align*}

where we have defined $\mathrm{Err}_h(\hat{\nu}):=\max_{\mu_h \in \mathrm{br}(\widehat{\nu}_h)}\expect_{ s\sim d^{\mu, \nuE}_h} \ls \TV \lp \nuE_h, \widehat{\nu}_h \rp (s) \rs $ for player 1 and additionally $\mathrm{Err}_h(\hat{\mu}):=\max_{\nu_h \in \mathrm{br}(\widehat{\mu}_h)}\expect_{  s\sim d^{\muE,\nu}_h} \ls \TV \lp \muE_h , \widehat{\nu}_h \rp (s) \rs$ for player 2.
\end{lemma}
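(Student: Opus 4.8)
The plan is to prove the exploitability decomposition by a standard performance-difference / telescoping argument, reducing the Nash gap to per-stage total variation errors between the learned policies and the expert equilibrium policies, and then controlling the opponent's best response carefully.

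First I would exploit the fact that $(\mu^\star,\nu^\star)$ can be taken to be the expert equilibrium $(\muE,\nuE)$ (any Nash equilibrium yields the same game value, so the Nash gap is computed against a fixed equilibrium reference). I would then split
\[
\innerprod{\initial}{V^{\mu^\star,\widehat\nu} - V^{\widehat\mu,\nu^\star}} = \underbrace{\innerprod{\initial}{V^{\mu^\star,\widehat\nu} - V^{\mu^\star,\nu^\star}}}_{\text{(I): }\widehat\nu\text{ worse for player 2 than }\nuE} + \underbrace{\innerprod{\initial}{V^{\mu^\star,\nu^\star} - V^{\widehat\mu,\nu^\star}}}_{\text{(II): }\widehat\mu\text{ worse for player 1 than }\muE},
\]
and bound the two terms symmetrically; by convexity of the value in the fixed-opponent MDP and optimality of a best response, I can further upper bound (I) by $\innerprod{\initial}{V^{\mathrm{br}(\widehat\nu),\widehat\nu} - V^{\mathrm{br}(\widehat\nu),\nu^\star}}$, i.e. compare $\widehat\nu$ against $\nuE$ inside the MDP $\mathcal M^{\widehat\nu}$-style object but with player 1 fixed to a best response to $\widehat\nu$.

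The core step is then a policy-difference lemma for a two-player game where we change only one player's policy: writing the difference of values along the trajectory induced by fixing player 1 to $\mu\in\mathrm{br}(\widehat\nu)$, I would telescope over stages $h=1,\dots,H$. At stage $h$ the per-step discrepancy is the expectation, under the state visitation $d_h^{\mu,\nuE}$ (or $d_h^{\mu,\widehat\nu}$ — one gets to choose which measure to push the error onto, and the lemma states $d_h^{\mu,\nuE}$, so I would arrange the telescoping so the error term is evaluated at the \emph{expert-induced} visitation), of something like $\sum_b |\nuE_h(b|s)-\widehat\nu_h(b|s)|$ times a $Q$-value bounded in $[-(H-h+1),H-h+1]$, hence by $H$ in absolute value. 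Summing the $H$ stages and accounting for the factor-$H$ bound on the $Q$-values gives the $2H\sum_h \mathrm{Err}_h(\widehat\nu)$ contribution from term (I), and symmetrically $2H\sum_h \mathrm{Err}_h(\widehat\mu)$ from term (II); the $\max$ over $\mu_h\in\mathrm{br}(\widehat\nu_h)$ in the definition of $\mathrm{Err}_h$ absorbs the fact that the relevant best response is not known explicitly.

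The main obstacle I expect is the bookkeeping that forces the error to be measured against $d_h^{\mu,\nuE}$ rather than $d_h^{\mu,\widehat\nu}$: a naive telescoping of $V^{\mathrm{br}(\widehat\nu),\widehat\nu}-V^{\mathrm{br}(\widehat\nu),\nuE}$ produces errors weighted by the visitation of the \emph{learned} policy pair, and converting to the expert visitation requires either (a) telescoping in the opposite direction (expanding around $\nuE$'s dynamics and collecting the $\widehat\nu$ value function in the residual), or (b) swallowing another factor of $H$ when one changes the reference trajectory distribution stage by stage — which is presumably where the leading $2H$ (rather than $H$) in the statement comes from. I would also need to be slightly careful that the best-response maximizer in $\mathrm{br}(\widehat\nu)$ used to define $d_h^{\mu,\nuE}$ can be taken to be a Markov policy and that taking a per-stage $\max_{\mu_h\in\mathrm{br}(\widehat\nu_h)}$ upper bounds the contribution of the single global best response; this follows since enlarging the feasible set and moving the maximum inside the sum only increases the bound. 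Everything else is routine: triangle inequality, the $[-H,H]$ bound on $Q$, and linearity of expectation.
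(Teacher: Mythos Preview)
Your opening move is where the argument breaks. In the lemma (and in the paper's Nash-gap definition), $\mu^\star$ and $\nu^\star$ are \emph{best responses to the learned policies}: $\mu^\star\in\mathrm{br}(\widehat\nu)$ and $\nu^\star\in\mathrm{br}(\widehat\mu)$. They are \emph{not} a Nash equilibrium pair, so the claim ``$(\mu^\star,\nu^\star)$ can be taken to be $(\muE,\nuE)$'' is false, and your intermediate term $V^{\mu^\star,\nu^\star}$ need not equal the game value. With your split, term (I) is $V^{\mu^\star,\widehat\nu}-V^{\mu^\star,\nu^\star}$; replacing $\nu^\star$ by $\nuE$ there would require $V^{\mu^\star,\nu^\star}\ge V^{\mu^\star,\nuE}$, but you have no handle on this inequality since neither $\nu^\star$ nor $\nuE$ is a best response against $\mu^\star$. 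The fix is precisely the paper's decomposition: insert the game value $V^{\muE,\nuE}$ as the intermediate, and then use that $(\muE,\nuE)$ is a Nash equilibrium to obtain $V^{\muE,\nuE}\ge V^{\mu^\star,\nuE}$ and $V^{\muE,\nuE}\le V^{\muE,\nu^\star}$, yielding
\[
\innerprod{\initial}{V^{\mu^\star,\widehat\nu}-V^{\widehat\mu,\nu^\star}}
\;\le\;
\innerprod{\initial}{V^{\mu^\star,\widehat\nu}-V^{\mu^\star,\nuE}}
\;+\;
\innerprod{\initial}{V^{\muE,\nu^\star}-V^{\widehat\mu,\nu^\star}}.
\]

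From this point your plan is correct and matches the paper: apply the performance-difference lemma in the single-player MDP with $\mu^\star$ (resp.\ $\nu^\star$) held fixed, choose the telescoping direction so that the state measure is $d_h^{\mu^\star,\nuE}$ (your option (a) --- no extra factor of $H$ is needed), bound $\lvert Q_h\rvert\le H$ via H\"older, and finally take the $\max$ over $\mu\in\mathrm{br}(\widehat\nu)$ to absorb the unknown best response. One minor remark: the paper's argument actually delivers the bound with leading constant $H$, not $2H$; the extra factor of $2$ in the statement is slack, not the consequence of a measure-change step as you conjecture.
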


Now, we can see that the expectation in the definition of $\mathrm{Err}(\hat{\nu})$ depends on $\mu, \nuE$, where $\mu \in \mathrm{br}(\widehat{\nu})$ is one best response to the estimated expert. Bounding this term is challenging because the reward function is unknown, so $\mathrm{br}(\widehat{\nu})$ can neither be computed nor bounded using optimistic estimators. Equivalent considerations hold for $\mathrm{Err}(\hat{\mu})$. However, we can here make use of the reward-free warm-up phase. In particular, we can use the constructed distribution $p^{\nuE}_h$  to perform the following change of measure, and rewrite $\mathrm{Err}_h(\hat{\nu})$ in the following way
\begin{align}
    \mathrm{Err}_h&(\hat{\nu}) = \underbrace{\sum_{s \in \mathcal{S}_{\delta}} \sum_{a\in \mathcal{A}} w_h^{\widehat{\nu}}(s,a) p_h^{\nuE}(s,a) \TV \lp   \nuE_h, \widehat{\nu}_h \rp (s)}_{:= \mathrm{Err}_h( \hat{\nu};\mathcal{S}^{\nuE}_{\delta,h})} +\underbrace{\sum_{s \notin \mathcal{S}_{\delta}} \sum_{a\in \mathcal{A}} \max_{\mu \in \mathrm{br}(\hat{\nu})} d_h^{\mu,\nuE}(s,a) \TV \lp   \nuE_h, \widehat{\nu}_h \rp (s)}_{{:= \mathrm{Err}_h( \hat{\nu};\bar{\gS}^{\nuE}_{\delta,h})}},
\end{align}
where we introduce the importance weight correction  $w_h^{\widehat{\nu}}(s,a):= \max_{\mu \in \mathrm{br}(\widehat{\nu})} \frac{d_h^{\mu,\nuE}(s,a)}{p_h^{\nuE}(s,a)} $ and we split the sum over the $\delta$-significant states set $\gS^{\nuE}_{\delta,h}$ from the sum over the not significant states set denoted by $\bar{\gS}^{\nuE}_{\delta,h} := \gS\setminus\mathcal{S}^{\nuE}_{\delta,h} $.
At this point, we can bound $ \mathrm{Err}_h( \hat{\nu};\mathcal{S}^{\nuE}_{\delta,h}) $ using Theorem~\ref{thm:reward_free_main_result} which implies that for any $s \in \gS^{\nuE}_{\delta,h}$ it holds that $\max_{h\in[H],a\in \A} w^{\hat{\nu}}_h(s,a) \leq 2 SAH $. In this way, we obtain
\begin{align*}
\mathrm{Err}_h(\hat{\nu};\gS^{\nuE}_{\delta,h}) \leq 2SAH \mathbb{E}_{s \sim p^{\nuE}_h}\bs{\TV \lp   \nuE_h, \widehat{\nu}_h \rp (s)}.
\end{align*}
This term is now easy to bound since $\hat{\nu}$ is computed as the minimizer of the log loss over the dataset $\mathcal{D}^{\nuE}$ which is sampled from the distribution $p^{\nuE}_h$. A standard concentration inequality for the log loss minimizer (see Lemma \ref{lemma:bc_concentration}) guarantees that with probability at least $1-\delta_{\mathrm{fail}}/2$
\[
\mathrm{Err}_h(\hat{\nu};\gS^{\nuE}_{\delta,h}) \leq 8SAH \sqrt{\frac{S B \log(4S/\delta_{\mathrm{fail}})}{N}}.
\]
Using the same analysis switching the roles of the players ensure that
\[
\mathrm{Err}_h(\hat{\mu};\gS^{\muE}_{\delta,h}) \leq 8SBH \sqrt{\frac{S A \log(4S/\delta_{\mathrm{fail}})}{N}} ~~\text{w.p.}~~1-\frac{\delta_{\mathrm{fail}}}{2}.
\]
Finally, by definition of the set $S_{\delta}$, it follows that the error term contribution coming from the non significant states can be bounded as
\begin{align*}
\mathrm{Err}_h(\hat{\mu};\bar{\gS}^{\muE}_{\delta,h})  \leq SA \delta, \quad \mathrm{Err}_h(\hat{\nu};\bar{\gS}^{\nuE}_{\delta,h})  \leq SB \delta.
\end{align*}
All in all,
we obtain that with probability at least $1 - \delta_{\mathrm{fail}}$ it holds that $\innerprod{\initial}{V^{\mu^\star, \hat{\nu}} - V^{\hat{\mu}, \nu^\star}}$ can be bounded by
\[
\mathcal{O}\brr{\sqrt{\frac{S^3 A^3_{\max} H^6 \log\brr{4S/\delta_{\mathrm{fail}}}}{N}} + S A_{\max} \delta }.
\]
Therefore, selecting $\delta = \mathcal{O}(\varepsilon/(S A_{\max} H^2))$ and $N = \mathcal{O}\brr{\frac{S^3 A^3_{\max} H^6 \log(4S/\delta_{\mathrm{fail}})}{\varepsilon^2}}$ proves Theorem~\ref{thm:main_result}.

In Appendix \ref{appendix:general_sum}
we show that our approach translates to the n-player general-sum setting smoothly without incurring a sample complexity exponential in the number of players.

\paragraph{On the rate optimality for \ours} The rate optimality of \ours~ follows from an even simplier construction compared to Figure~\ref{fig:lower}. In particular, consider a single state Markov game where the payoff matrices are given by the perturbed Matching Pennies game used in $s_3$ of Figure~\ref{fig:lower}. We can establish with the exact same proof of Theorem~\ref{thm:lower_bound} that $\Omega(\varepsilon^{-2})$ expert queries are absolutely necessary even in interactive MAIL.

\section{Numerical verifications}
In this section, we provide some numerical verifications supporting our theoretical analysis.
First, we consider the Markov game used for the lower bound (see Figure \ref{fig:lower}). Additionally, we compare BC, \emph{MURMAIL} and \emph{\ours} in a zero-sum Gridworld. The 3x3 Gridworld, visualized in Figure~\ref{fig:grid}, has $72$ states, where each state indicates the position of both players. Both agents cannot be at the same position. In the top right corner of the Gridworld, there is a goal state. If an agent reaches the goal before the other agent does, this agent receives a reward of $1,$ and the other agent gets a reward of $-1.$ Gridworld 1 and 2 refer to the same environment described above, the only difference is that in Gridworld 2 we consider observing data from a mixed Nash created taking convex combination of multiple Nash equilibria. Therefore, the dataset state coverage in Gridworld 2 is better. For these created games we run a Nash equilibrium solver, zero-sum Value Iteration \citep{pmlr-v37-perolat15}, to obtain the Nash expert policies $(\muE,\nuE)$. We repeat this for every game $10$ times across varying seeds. To plot the performance, we compute the exploitability of the estimated experts for different sample sizes of the sampled states from the obtained dataset, i.e. the expert queries. We plot the average obtained exploitability across games and trials as well as the standard deviation for the different sizes. The code used for the experiments is available at \href{https://github.com/emanuelenevali/MAIL_WARM}{https://github.com/emanuelenevali/MAIL\_WARM}.\\

In Figure~\ref{fig:experiments}, we quantify the expected behavior for BC in the hard instance used for our lower bound. We observe that BC's performance degrades as the state coverage decreases and that BC is not able to learn if $\gC_{\max}$ is infinite, even though $\gC(\muE,\nuE)$ is constant in all considered environments. This verifies that  $\gC_{\max}$ and does not  $\gC(\muE,\nuE)$ predict the performance of BC and any other non-interactive MAIL algorithm.\\

Interestingly, we observe that BC suffers from a constant Nash Gap in both described Gridworld set-ups. In picture $(b)$, the Nash Gap of BC is higher compared to picture $(c)$ as the expert data has better coverage. In contrast to BC, we observe that for MURMAIL and \ours, the exploitability decreases with the amount of expert queries and is independent of the expert coverage. However, \ours~ outperforms MURMAIL requiring significantly fewer expert queries to minimize the Nash Gap.

\begin{figure*}[h!]
    \centering
        \includegraphics[width=\linewidth]{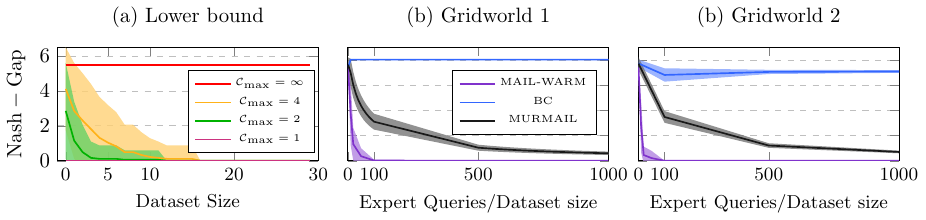}
        \caption{Exploitability of BC in the lower-bound Markov game and comparison of imitation learning algorithms in Gridworlds for a pure NE expert (Gridworld 1) and a mixed one (Gridworld 2).}
        \label{fig:experiments}
\end{figure*}

\section{Conclusion and future directions}
In this work, we resolved \textbf{Open Question 1} for the non-interactive setting and \textbf{Open Question 2} for interactive MAIL. For both cases, we established rate-optimal results, showing that the dependence on $\varepsilon$ cannot be improved.\\

Our analysis leaves several important directions open. First, while we have closed the gap in $\varepsilon$-dependence, optimal guarantees with respect to other problem parameters $S$, $A$, and $H$ remain unknown.\\

Second, our results are developed in the finite-horizon setting. Extending them to the infinite-horizon case is non-trivial, since the reward-free warm-up phase relies on regret guarantees of the EULER algorithm that are specific to finite horizons (see Section 8 in \citet{zanette2019tighterproblemdependentregretbounds}). Whether analogous results can be obtained in the infinite-horizon regime remains an open challenge, and progress in this direction could be of interest independent of MAIL.\\

Third, we provided a Markov game (Figure \ref{fig:lower}), where $\gC_{\max}$ can be computed explicitly. However, even in simple environments such as Gridworld, this calculation becomes challenging. As $\gC_{\max}$ is the key fundamental quantity describing the hardness of non-interactive MAIL, developing efficient methods for calculating it would be highly valuable. In particular this would help to clarify the practical feasibility of behavioral cloning in standard MARL benchmarks.\\

Finally, a next step is to extend our results to the non-tabular case and to provide deep MAIL algorithms building on our framework.
\bibliography{references}
\bibliographystyle{abbrvnat}

\clearpage
\appendix

\section{Related Work}
\label{appendix:related_work}
The related works can be split into into Multi-Agent Imitation Learning and reward-free Reinforcement Learning. 
\paragraph{Multi-Agent Imitation Learning}
Most of the works in Multi-Agent Imitation Learning are on the empirical side. Some works consider the cooperative setting, meaning that each agents maximizes the same reward function \citep{NEURIPS2024_2fbeed1d, 10.5555/3305381.3305587, bui2025misodicemultiagentimitationunlabeled, bui2023mimickingdominateimitationlearning}. This line of work does not necessitate any equilibrium solution framework, which is essential for our work. \citet{pmlr-v97-yu19e} or \citet{NEURIPS2018_240c945b} extend the framework of adversarial imitation learning to the multi-agent setting. In particular, \citet{NEURIPS2018_240c945b} extend GAIL \citep{Ho:2016b} to the multi-agent setting. While the authors consider Nash equilibrium experts, their theoretical results require a unique Nash equilibrium solution, which rarely holds in Markov games. \citet{pmlr-v97-yu19e}  consider inverse reinforcement learning and introduce regularization to make the Nash equilibrium, technically not a Nash equilibrium anymore, unique. In this work, we consider Nash equilibrium experts, the most common solution concept in Markov games, without any additional assumption.

In the context of mean-field games, \citet{ramponi2023imitationmeanfieldgames} were the first to study imitation learning through the lens of the Nash Gap. For general $n$-player games, the seminal work of \citet{tang2024multiagentimitationlearningvalue} established fundamental hardness results for minimizing the Nash Gap. They provided guarantees for behavioral cloning (BC) under the assumption that the Nash equilibrium policy profile generating the data visits every state with positive probability.
The closest line of work to ours is that of \citet{freihaut2025learningequilibriadataprovably}, who extended these results in several directions. First, they showed that the strict coverage assumption of \citet{tang2024multiagentimitationlearningvalue} can be dropped, and instead provided a tighter BC guarantee in terms of the all-policy deviation concentrability coefficient $\gC_{\max}$. Their analysis also established a fundamental separation between two settings. In the non-interactive setting, where the learner only has access to a fixed dataset of expert trajectories, they proved that the dependence on $\gC(\muE,\nuE)$ is unavoidable. In particular, if $\gC(\muE,\nuE)$ is infinite, then no non-interactive algorithm can succeed, even with unlimited data and known transitions. In the interactive setting, they introduced the first algorithm with sample complexity guarantees, \emph{MURMAIL}, which achieves $\tilde{\gO}(\varepsilon^{-8})$ queries, independent of $\gC(\muE,\nuE)$.
Our work sharpens these results in both regimes. In the non-interactive case, we close the theoretical gap by identifying the precise dependence on $\gC_{\max}$ and proving that behavioral cloning is rate-optimal. In the interactive case, we propose a new framework that reduces the query complexity to $\tilde{\gO}(\varepsilon^{-2})$, matching the dependence on $\varepsilon$ implied by our lower bound.
\looseness=-1

\paragraph{Reward-free Reinforcement Learning}
In their seminal work, \citet{jin2020reward} introduced the framework of reward-free reinforcement learning in the single-agent setting. The central idea is to construct a dataset without knowledge of the reward function that provides sufficient coverage of the environment so that an optimal policy can later be learned for any reward specified afterward. The framework naturally decomposes into two phases: an exploration phase, where trajectories are collected to ensure broad coverage, and a planning phase, where the collected data is used to compute an $\varepsilon$-optimal policy once a reward is revealed. Their proposed algorithm achieves this goal with sample complexity of order $\tilde\gO(\tfrac{H^5 S^2 A}{\varepsilon^2})$, and is accompanied by a nearly matching lower bound of $\Omega(\tfrac{H^2 S^2 A}{\varepsilon^2})$.

This gap has since been closed progressively. \citet{pmlr-v132-kaufmann21a} improved the upper bound to $\tilde\gO(\tfrac{H^4 S^2 A}{\varepsilon^2})$, and \citet{menard2020fastactivelearningpure} further refined it to $\tilde\gO(\tfrac{H^3 S^2 A}{\varepsilon^2})$, which matches the lower bound for non-stationary transition dynamics. The same bound has also been achieved independently by \citet{li2024minimaxoptimalrewardagnosticexplorationreinforcement}. Beyond tabular MDPs, the reward-free framework has been extended to the linear MDP setting \citep{pmlr-v162-wagenmaker22b, zhang2024optimalhorizonfreerewardfreeexploration}.
\looseness=-1

Here, we focus primarily on the exploration phase of reward-free RL. Specifically, we leverage the idea of constructing datasets for induced expert MDPs, each of which produces a state distribution that provides sufficient coverage of the relevant state space. This principle serves as the key ingredient for deriving our sample-efficient algorithm.
\looseness=-1

\section{Proof of Lower bound}
\label{appendix:lower_bound}
In this section, we provide the detailed proof of Theorem \ref{thm:lower_bound} and Corollary~\ref{cor:lower}. Both proofs require the same Markov game construction, which is again illustrated in Figure~\ref{fig:lower_appendix} to provide further intuition for the proof.

\begin{figure}[h!]
    \centering
\begin{tikzpicture}[>=stealth',shorten >=1pt,auto,node distance=2.7cm]
  \node[state] (s0)  {$s_1$};
  \node[state] (s1)[right of=s0] {$s_2$};
  \node[state] (s2) [below of=s1] {$s_3$};
  \path[->] 
    (s0) edge node {$a_1 b_1$, $a_2 b_1$} (s1);
  \path[->] 
    (s0) edge node {$a_1 b_2$, $a_2 b_2$} (s2);
\end{tikzpicture}
    \caption{Markov game instance used for the Lower bound.}
    \label{fig:lower_appendix}
\end{figure}
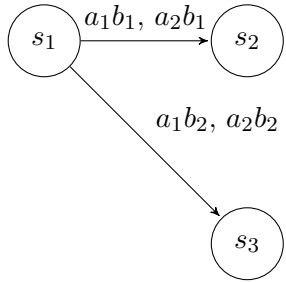

The main idea of the proof is twofold. In a first step, we show that even if $\gC(\muE,\nuE)$ is bounded, no non-interactive Multi-Agent Imitation Learning algorithm can learn an equilibrium from data as long as $\gC_{\max} = \infty.$ Then, we construct a state with two different reward functions such that the unique Nash equilibrium value for these games only differ by $\varepsilon.$ This construction allows to show, that at least an expert dataset of size $\Omega(\varepsilon^{-2})$ is required to learn an Nash equilibrium from data. Additionally, noting that the expected visits of this state are given by $\gC_{\max}$ completes the proof. 
Let us first restate both results (respectively Thm.~\ref{thm:lower_bound} and Cor.~\ref{cor:lower}).

\begin{theorem}
    Let $\hat{\mu},\hat{\nu}$ be the output of a non-interactive MAIL algorithm $\mathrm{Alg}$. Then, for any  $\mathrm{Alg}$, there exists a Markov game such that satisfying $\mathbb{E}_{\mathrm{Alg}}\bs{\innerprod{\initial}{V^{\mu^{\star} , \widehat{\nu} } -   V^{ \widehat{\mu}, \nu^{\star} }}} \leq \gO(\varepsilon)$ requires an expert dataset of size $N=\Omega(\frac{\gC_{\max}}{\varepsilon^2})$.
\end{theorem}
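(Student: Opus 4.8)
The plan is to prove Theorem~\ref{thm:lower_bound} by a two‑point (Le~Cam) argument on the $3$‑state template of Figure~\ref{fig:lower_appendix}, with a free parameter $\eta\in(0,\tfrac12]$ controlling the concentrability. I would fix the sampling distribution $\rho=\{\rho_h\}$ to put mass $\eta$ on $s_3$ at every stage where $s_3$ is reachable and constant mass on $s_1,s_2$ otherwise. Since the only best responses to $\muE$ play $b_1$ at $s_1$ (so they never reach $s_3$), and likewise every best response to $\nuE$ induces visitations supported on $\{s_1,s_2\}$, this forces $\gC(\muE,\nuE)=\gO(1)$; on the other hand the ``irrational'' deviation $\nu^\star\equiv b_2$, which is a best response to any $\mu$ that plays badly in $s_3$, reaches $s_3$ with probability one, so $\gC_{\max}=\Theta(1/\eta)$.

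\textbf{Step 1 (two near‑indistinguishable instances).} I would take two Markov games $\mathcal{G}_+,\mathcal{G}_-$ agreeing in everything (transitions, horizon, $\rho$, and the rewards at $s_1,s_2$) except for the payoff matrix at $s_3$, which in $\mathcal{G}_\pm$ is a matching‑pennies matrix perturbed by $\Theta(\varepsilon)$. The perturbation is calibrated so that: (i) each $s_3$‑subgame has a unique, fully mixed Nash equilibrium whose agent‑$1$ marginals $p_+,p_-$ satisfy $\norm{p_+-p_-}_1=\Theta(\varepsilon)$; (ii) both $s_3$‑subgames have a small \emph{positive} Nash value $\Theta(\varepsilon)$, so that in both games the unique subgame‑perfect Nash has agent~$2$ play $b_1$ at $s_1$, keeping $s_3$ off the equilibrium path — this is exactly what bounds $\gC(\muE,\nuE)$; (iii) the ``cross'' strategies are exploitable: playing $p_-$ at $s_3$ in $\mathcal{G}_+$ (and $p_+$ in $\mathcal{G}_-$) lets agent~$2$ secure an $s_3$‑value of $-\Omega(\varepsilon)$ by deviating to $b_2$. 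The experts $(\muE,\nuE)$ are the subgame‑perfect Nash profiles; in particular $\nuE(\cdot\mid s_1)=b_1$ and the marginals at $s_1,s_2$ are common to both instances, whereas $\muE(\cdot\mid s_3)$ equals $p_+$ or $p_-$.

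\textbf{Steps 2–3 (reduction to a test and information bound).} Because $\widehat\nu(\cdot\mid s_1)=b_1$ routes every trajectory under $(\mu^\star,\widehat\nu)$ into $s_2$, we have $\innerprod{\initial}{V_0^{\mu^\star,\widehat\nu}}=0$, hence $\Nashgap(\widehat\mu,\widehat\nu)=\bigl(-w_\pm(\widehat\mu(\cdot\mid s_3))\bigr)^{+}$, where $w_\pm(q)$ is agent~$2$'s best‑response value at $s_3$ against $q$ in $\mathcal{G}_\pm$. As $w_\pm$ is concave, piecewise linear with $\Theta(1)$ slopes and maximum $\Theta(\varepsilon)$ attained at $p_\pm$, property~(iii) implies that any output which is $c\varepsilon$‑good in $\mathcal{G}_+$ (for a suitably small absolute constant $c$) must have $\widehat\mu(\cdot\mid s_3)$ within $\gO(\varepsilon)$ of $p_+$, hence $\Omega(\varepsilon)$ far from $p_-$, hence $\Omega(\varepsilon)$‑bad in $\mathcal{G}_-$, and symmetrically. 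Thus if $\expect_{\mathrm{Alg}}[\Nashgap(\widehat\mu,\widehat\nu)]\le c\varepsilon$ held on \emph{both} instances, Markov's inequality would make the (possibly randomized) test ``declare $\mathcal{G}_+$ iff $\widehat\mu(\cdot\mid s_3)$ is closer to $p_+$'' succeed with constant probability on each instance. But the dataset laws $\sP_+,\sP_-$ differ only through the action samples drawn at $s_3$ (transitions, $\rho$, and the expert marginals at $s_1,s_2$ all coincide), so conditioning on the sampled states and using the KL chain rule gives $\mathrm{KL}(\sP_+\,\|\,\sP_-)=\expect[\#\{i:s^i=s_3\}]\cdot\gO(\varepsilon^2)=N\eta\cdot\gO(\varepsilon^2)$, the per‑sample term being $\gO(\varepsilon^2)$ since the $s_3$‑marginals are $\gO(\varepsilon)$‑close in total variation and bounded away from the simplex boundary. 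Le~Cam's two‑point inequality \citep{lattimore2020bandit} together with the above forces $\mathrm{TV}(\sP_+,\sP_-)=\Omega(1)$, and Pinsker then yields $N\eta\varepsilon^2=\Omega(1)$, i.e.\ $N=\Omega(1/(\eta\varepsilon^2))=\Omega(\gC_{\max}/\varepsilon^2)$. Sending $\eta\to0$ (so $\gC_{\max}=\infty$ and $\sP_+=\sP_-$), replacing the $\Theta(\varepsilon)$ perturbation by a constant one, and repeating the $s_3$‑subgame over the horizon upgrades this to the $\Omega(H)$ bound of Corollary~\ref{cor:lower} via a standard averaging (Yao) argument.

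\textbf{Main obstacle.} The crux is Step~1: one must simultaneously engineer the perturbed matching‑pennies pair so that (a) the Nash value stays a \emph{small positive} $\Theta(\varepsilon)$ — without this $b_1$ would no longer be the equilibrium action and $\gC(\muE,\nuE)$ would blow up — (b) the two Nash strategies are $\Theta(\varepsilon)$‑separated, (c) the cross‑exploitability is $-\Omega(\varepsilon)$, and (d) the per‑sample KL at $s_3$ is only $\gO(\varepsilon^2)$; and then push the reduction through for the \emph{expected} Nash‑gap criterion (rather than a high‑probability one) while allowing $\mathrm{Alg}$ to be randomized and even to know the transition kernel.
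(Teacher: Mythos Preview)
Your plan is correct and follows essentially the same route as the paper: the identical $3$-state construction with a $\Theta(\varepsilon)$-perturbed matching-pennies subgame at $s_3$, the same observation that the positive Nash value at $s_3$ forces $b_1$ on the equilibrium path (bounding $\gC(\muE,\nuE)$) while $\gC_{\max}=\Theta(1/\rho(s_3))$, and the same two-point information-theoretic argument yielding $N\rho(s_3)\varepsilon^2=\Omega(1)$. The only cosmetic differences are that the paper uses Bretagnolle--Huber directly on the expected exploitability (after restricting via an explicit lemma to outputs interpolating the two Nash strategies), whereas you go through Markov's inequality and Le~Cam/Pinsker with a cross-exploitability argument; both routes are standard and equivalent here.
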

The same construction gives the following corollary for unbounded $\mathcal{C}_{\max}$.
\begin{corollary}
    For any non-interactive MAIL algorithm, there exists a Markov game $\gG$ with $\gC(\muE,\nuE) <\infty$ and $\gC_{\max} = \infty$ where $\mathbb{E}_{\mathrm{Alg}}\bs{\innerprod{\initial}{V^{\mu^{\star} , \widehat{\nu} } -   V^{ \widehat{\mu}, \nu^{\star} }}} \geq \frac{H-1}{60}$.
\end{corollary}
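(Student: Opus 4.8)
The plan is to implement the two-point (Le Cam) method on the three-state Markov game of Figure~\ref{fig:lower_appendix}. First I would fix the transition structure: agent~2 alone controls the branch out of $s_1$, with $b_1\mapsto s_2$ and $b_2\mapsto s_3$; states $s_1,s_2$ carry zero reward, and $s_3$ is an absorbing normal-form game. I would then design two instances $\gG_0,\gG_1$ that differ \emph{only} in the payoff matrix at $s_3$: take two perturbed versions of Matching Pennies whose (unique) Nash values differ by a quantity $\Theta(\varepsilon)$, e.g.\ a $2\times2$ matrix with entries $\pm(\tfrac12\pm c\varepsilon)$ so that the value is $\tfrac12 \pm c'\varepsilon$ and the optimal strategy of agent~1 at $s_3$ is different (by a constant in TV) across the two instances. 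The expert Nash profile $(\muE,\nuE)$ in both instances is chosen so that $\nuE$ plays $b_1$ with overwhelming probability at $s_1$ — concretely $\nuE(b_2\mid s_1)=1/\gC_{\max}$ — so that $s_3$ is visited by the expert with probability exactly $1/\gC_{\max}$, while the equilibrium-reachable states $s_1,s_2$ are fully covered; this is what keeps $\gC(\muE,\nuE)$ bounded. Since the dataset distribution is $\rho_h=d_h^{\muE,\nuE}$, each of the $N$ sampled trajectories lands in $s_3$ (and hence reveals any information distinguishing $\gG_0$ from $\gG_1$) only with probability $1/\gC_{\max}$; in expectation only $N/\gC_{\max}$ samples are informative.

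The core of the argument is then a standard information-theoretic lower bound. I would let $\mathbb{P}_0,\mathbb{P}_1$ denote the laws of the algorithm's entire observation (the $N$ expert trajectories, equivalently the $N$ action labels at sampled states) under $\gG_0,\gG_1$. These two product measures agree on all coordinates except the action samples drawn at $s_3$, of which there are $\mathrm{Bin}(N,1/\gC_{\max})$ many; conditioning on the number $m$ of such samples and using tensorization of KL, $\mathrm{KL}(\mathbb{P}_0\Vert\mathbb{P}_1)\le \mathbb{E}[m]\cdot\max_s \mathrm{KL}\!\big(\muE_0(\cdot\mid s_3)\Vert\muE_1(\cdot\mid s_3)\big)\le (N/\gC_{\max})\cdot O(\varepsilon^2)$. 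By Pinsker / Bretagnolle–Huber, if $N \le c\,\gC_{\max}/\varepsilon^2$ for a small absolute constant $c$, then $\mathbb{P}_0$ and $\mathbb{P}_1$ are statistically indistinguishable ($\TV(\mathbb{P}_0,\mathbb{P}_1)\le 1/4$, say). Finally I would show that any output $(\hat\mu,\hat\nu)$ that is $\varepsilon$-good (in the sense $\mathbb{E}[\innerprod{\initial}{V^{\mu^\star,\hat\nu}-V^{\hat\mu,\nu^\star}}]\le O(\varepsilon)$) must commit $\hat\mu(\cdot\mid s_3)$ to be close to the instance-specific Nash strategy — because in $s_3$ the opponent can deviate \emph{irrationally} by playing $b_2$ at $s_1$ (this costs the opponent nothing in terms of being exploited by a Nash profile) and then the Nash gap picks up the full exploitability of $\hat\mu$ at $s_3$, which is $\Theta(\varepsilon)$ unless $\hat\mu(\cdot\mid s_3)$ is correctly tuned, scaled by the reachability $H-1$ of $s_3$ under that deviation. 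So an $\varepsilon$-good algorithm induces a test distinguishing $\gG_0$ from $\gG_1$ with success probability bounded away from $1/2$, contradicting the indistinguishability bound. This yields $N=\Omega(\gC_{\max}/\varepsilon^2)$.

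For Corollary~\ref{cor:lower} I would specialize by sending $\gC_{\max}\to\infty$, i.e.\ taking $\nuE(b_2\mid s_1)\to 0$ while keeping the $s_3$ payoff gap at a fixed constant rather than $\varepsilon$ (pick the two Matching-Pennies variants with a $\Theta(1)$ value gap). Then with probability $1$ no finite dataset ever contains a sample from $s_3$, so $\hat\mu(\cdot\mid s_3)$ is independent of the realized instance; one of the two instances forces $\hat\mu(\cdot\mid s_3)$ to be exploitable by a constant, and multiplying by the $(H-1)$-step reach of $s_3$ under the $b_2$-deviation gives the claimed $\tfrac{H-1}{60}$ lower bound on the expected Nash gap, regardless of $N$. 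Meanwhile $\gC(\muE,\nuE)$ stays finite because $s_3$ is not on any equilibrium-reachable path, while $\gC_{\max}=\infty$ precisely because the $b_2$-deviation reaches $s_3$ with probability $1$ but $\rho$ assigns it vanishing mass.

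The main obstacle I anticipate is not the information inequality — that is routine — but the \emph{reduction step}: carefully verifying that the $b_2$-branch deviation is a genuinely admissible deviation that enters the Nash gap with the right multiplicative horizon factor, and that the two $s_3$ payoff matrices can be chosen simultaneously to (i) have unique Nash equilibria, (ii) have values separated by exactly $\Theta(\varepsilon)$, (iii) have optimal $\mu$-strategies separated by a constant in TV, and (iv) keep the expert profile a bona fide Nash equilibrium of the \emph{full} three-state game (so that $b_1$ really is the expert's near-deterministic choice and the coverage bookkeeping for $\gC(\muE,\nuE)$ vs.\ $\gC_{\max}$ goes through). Getting all four properties from a single one-parameter family of $2\times2$ matrices, and tracking the constants so the final bound is clean, is where the real care is needed.
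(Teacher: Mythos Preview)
Your proposal is correct and follows the same route as the paper: the paper sets $\rho_1(s_3)=0$ (so zero samples ever land in $s_3$), reuses its two perturbed Matching-Pennies instances with a fixed constant perturbation (it keeps the $\varepsilon$-family and plugs in $\varepsilon=1/4$ at the end to obtain the $1/60$), and gets the $H{-}1$ factor by chaining $H{-}1$ copies of the $s_3$ game---precisely your absorbing-state picture. One detail worth flagging: the paper parametrizes the visitation of $s_3$ through the sampling distribution $\rho$ (which is allowed to differ from $d^{\muE,\nuE}$) while keeping $\nuE(b_1\mid s_1)=1$ exactly, so that $(\muE,\nuE)$ remains a genuine Nash profile; your choice $\nuE(b_2\mid s_1)=1/\gC_{\max}>0$ would make $\nuE$ strictly suboptimal at $s_1$ (since the Nash value at $s_3$ is strictly positive), though in the corollary's limit $\gC_{\max}=\infty$ this issue disappears.
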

Next, we proceed with the proof of Theorem~\ref{thm:lower_bound}.
\begin{proof}
    We start the proof with the setup of the Markov game (see Figure~\ref{fig:lower_appendix}). 
Let us consider a family of two Markov games $\mathcal{H} = \bcc{\mathcal{G}_1, \mathcal{G}_2, }$ with states structure illustrated in Figure \ref{fig:lower}. The state space is $\gS = \bcc{s_1, s_2, s_3}$, the action set for the $\mu$-player is $\gA = \bcc{ a_1, a_2}$ and for the $\nu$ player is $\gB = \bcc{ b_1, b_2}$. The initial joint state is $s_1$, then the next state is $s_2$ when the $\nu$ player plays the action $b_1$ and $s_3$ if $b_2$ is played instead. That is, the next state is fully decided by the $\nu$ player no matter which is the action sampled by the $\mu$ player.
The reward function is given by $r(s_1,a,b) = 0$ and $r(s_2,a,b) = 0$ for all Markov games in the family $\mathcal{H}$. On the contrary the reward function in the state $s_3$ differs among the members of the family $\mathcal{H}$. For any game $\mathcal{G} \in \mathcal{H}$, there exist a scalar parameter $\Delta_{\mathcal{G}} > 0$
which parameterize the payoff matrix as follows 
\begin{equation*}
r_{\mathcal{G}}(s_3,a,b) = \mathbf{e}_a^{T}
\underbrace{\begin{pmatrix}
1 + \Delta_{\mathcal{G}} & -1 \\
-1 & 1 \\
\end{pmatrix}}_{:= R_{\mathcal{G}}} \mathbf{e}_b,
\end{equation*}
where $\mathbf{e}_a$ and $\mathbf{e}_b$ are indicator vectors for the actions $a$ and $b$ respectively.

Note that for $\Delta_\gG = 0$ this game is known as \emph{Matching Pennies} and it is well known that the unique Nash equilibrium corresponds to a uniform policy for both players. In particular, we next consider $\mathcal{H} = \bcc{\mathcal{G}_1, \mathcal{G}_2}$ with $\Delta_{\mathcal{G}_1} = 2 \varepsilon$ and $\Delta_{\mathcal{G}_2} = \varepsilon.$ Therefore, both members in the family $\gH$ can be seen as $\Delta$-perturbations of \emph{Matching Pennies}. For a mixed Nash equilibrium it holds true that a player must be indifferent between both actions (see e.g.\citet[Section 3.2.2]{osborne1994course}). Therefore, for the two arbitrary strategies $\mu=(p,1-p)$ and $\nu=(q,1-q)$, we get
\begin{align}
    (1+\Delta_\gG) p - (1-p) = -p + 1 -p \Leftrightarrow p = \frac1{2+\frac{\Delta_{\mathcal{G}}}{2}}.
\end{align}
Therefore, the Nash strategy is given by
\begin{align}
\mu_{\mathrm{Nash},\mathcal{G}} = \lp\frac1{2+\frac{\Delta_{\mathcal{G}}}2}, 1 - \frac1{2+\frac{\Delta_{\mathcal{G}}}2}\rp^{\top} = \lp\frac1{2} -\frac{\Delta_{\mathcal{G}}}{8 + 2 \Delta_{\mathcal{G} }}, 
\frac1{2} +\frac{\Delta_{\mathcal{G}}}{8 + 2 \Delta_{\mathcal{G} }}\rp^{\top}  .
\end{align}
Note that for $\Delta_{\mathcal{G}}=0$ we recover the Nash equilibrium strategies of the standard version of \emph{Matching Pennies}. Because of the symmetry of the game $q=\frac1{2+\frac{\Delta_{\mathcal{G}}}2}$ and can denote analogously $\nu_{\mathrm{Nash},\mathcal{G}}$. Having defined the Nash strategies, we can compute the unique value of the game for player one as 
\begin{align}
\mu_{\mathrm{Nash},\mathcal{G}}^{\top} R_{\mathcal{G}} \nu_{\mathrm{Nash},\mathcal{G}}=\frac{\Delta_{\mathcal{G}}/2}{2+\Delta_{\mathcal{G}}/2} := v_\gG
\end{align}

Next, we compute the Nash Gap as a function of $\Delta_{\mathcal{G}} \in \mathbb{R}, p,q \in [0,1]$  when they play according to policies 
\[
\hat{\mu}(s_3) = [p, 1- p] \quad \hat{\nu}(s_3) = [q, 1 - q] 
\]

In particular, we have that
\begin{align}
\mathrm{Exploitability}^{\nu}_{\mathcal{G}}(\hat{\nu}) &:= \max_{\mu} \mu(s_3)^{\top} R_{\mathcal{G}} \hat{\nu}(s_3) - v_\gG= \max_{p \in [0,1]} [p, 1 - p] \begin{pmatrix}
(2+\Delta_{\mathcal{G}})q -1\\
1-2q \\
\end{pmatrix} -  v_\gG \\ &= \max \bcc{ (2+ \Delta_{\mathcal{G}})q -1, 1-2q  } - v_\gG,   \\
\mathrm{Exploitability}^{\mu}_{\mathcal{G}}(\hat{\mu}) &:= v_\gG - \min_{\nu} \hat{\mu}(s_3)^{\top} R_{\mathcal{G}} \nu(s_3) = v_\gG - \min_{q \in [0,1]} [q, 1 - q] \begin{pmatrix}
(2+ \Delta_{\mathcal{G}})p -1\\
1-2p \\
\end{pmatrix} \\ &= v_\gG - \min \bcc{ (2+ \Delta_{\mathcal{G}})p - 1, 1-2p  } ,
\end{align}
Combining both exploitabilities, we can derive 
\begin{align}
\Nashgap_{\mathcal{G}}(\hat{\mu},\hat{\nu}) = \max \bcc{ (2+ \Delta_{\mathcal{G}})q -1, 1-2q  } - \min \bcc{ (2+ \Delta_{\mathcal{G}})p - 1, 1-2p  }. 
\end{align}

Observe that for the Nash strategies  $\mu_{\mathrm{Nash},\mathcal{G}}$ and $\nu_{\mathrm{Nash},\mathcal{G}}$ we have that the exploitability equals $0$.

Therefore, if both players play according to the Nash equilibrium policies in the state $s_3$ the first player gain reward $v_\gG = \frac{\Delta_{\mathcal{G}}}{4 + \Delta_{\mathcal{G}}} >0$ which is larger than what can be gained in the state $s_2$. Vice-versa, the $\nu$-player can get at most $-v_\gG = - \frac{\Delta_{\mathcal{G}}}{4 + \Delta_{\mathcal{G}}} < 0$ playing against the Nash profile for the $\mu$ player. It follows that the $\nu$-player can always get a higher reward in the state $s_2$. 
Recalling the dynamics in the state $s_1$, we notice that the $\nu$-player can always ensure that the next visited state is $s_2$ by playing the action $b_1$.
It follows that in the state $s_1$, the Nash equilibrium policies are $\muE(s_1)$ arbitrary and $\nuE(b_1 |s_1) = 1$ and $\nuE(b_2 |s_1) = 0$.
Therefore, for any arbitrary choice of $\muE$, we have that $\nuE$ is the unique best response.
Therefore, we have that for any choice of $\muE$ it holds that the occupancy measure equals
\[
 d^{\muE,\nuE}_0(s_1) = 1 \quad d^{\muE,\nuE}_1(s_2) = 1 \quad d^{\muE,\nuE}_1(s_3) = 0.
\]
Therefore, for any offline state sampling distribution $\rho = \bcc{\rho_h}^1_{h=0}$ with $\rho_0(s_1) =1$, we have that the single policy deviation coefficient for the $\mu$-player is given by 
\[
\mathcal{C}(\muE,\nuE)= \max\bcc{\frac{d^{\muE,\nuE}_0(s_1)}{\rho_0(s_1)}, \frac{d^{\muE,\nuE}_1(s_2)}{\rho_1(s_1)}} = \max\bcc{\frac{1}{\rho_0(s_1)}, \frac{1}{\rho_1(s_2)}} = \rho_1^{-1}(s_2),
\]
where the equality follows from the fact that $\rho_0(s_1) =1$.
Therefore, to ensure a bounded value of $\mathcal{C}(\muE, \nuE)$ it is enough to choose $\rho$ to have support including the states $s_1, s_2$ but not necessary $s_3$. However, let us now assume that the non-interactive MAIL algorithm at hand outputs a policy $\hat{\mu}$, no matter how this policy is produced. The $\nu$-player can choose to play according the policy $\nu_{\mathrm{exploit}}$ such that  $\nu_{\mathrm{exploit}}(b_2|s_1) = 1$ to ensure that $ d^{\muE,\nu_{\mathrm{exploit}}}_1(s_3) = 1$.
In words, the $\nu$-player can now choose an action outside the support of the Nash equilibrium to ensure that the next state is $s_3$. This would be irrational if $\hat{\mu}$ would coincide with the Nash profile in the state $s_3$ because in such situation the $\mu$-player could gain reward $v_\gG > 0$ while the $\nu$ player would get $-v_\gG$ at most.
However, if $\rho_1(s_3) = 0$ the state is never visited and there exists at least one game in $\mathcal{H}$ where its sub-optimality is constant. On the one hand, $\mathrm{C}(\muE,\nuE)$ is finite in this case.  
On the other hand, we have that the all policy deviations concentrability coefficient for the $\mu$-player is given for any arbitrary $\muE$ by
\begin{align*}
 \mathcal{C}_{\max} = \max_{\nu \in {\nuE, \nu_{\mathrm{exploit}}}} \bcc{ \frac{d^{\muE, \nu}_1(s_2)}{\rho_1(s_2)}, \frac{d^{\muE, \nu}_1(s_3)}{\rho_1(s_3)} } &= \max \bcc{\frac{d^{\muE, \nuE}_1(s_2)}{\rho_1(s_2)}, \frac{d^{\muE, \nu_{\mathrm{exploit}}}_1(s_3)}{\rho_1(s_3)} } \\
 &= \max \bcc{\frac{1}{\rho_1(s_2)}, \frac{1}{\rho_1(s_3)}} = \frac{1}{\rho_1(s_3)},
\end{align*}
where the last equality follows assuming that $\rho_1(s_3) \leq \rho_1(s_2)$.
Therefore, we showed that learning an equilibrium from data is not possible if $\max_{\mu,\nu} C(\mu,\nu)$ is unbounded.

With the same construction, we can quantify a finite time statistical rate, bounding the number of times expert actions should be seen in $s_3$ in order to learn an $\varepsilon$-approximate Nash equilibrium.

Now, let us consider the defined $\hat{\mu}(s_3) = [p, 1-p],$ parametrized as the following function of $\alpha \in [0,1]$,
\[
p = \frac{1}{2} - \brr{\frac{2 \alpha  \varepsilon}{8 + 4 \varepsilon} + \frac{(1-\alpha) \varepsilon}{8 + 2\varepsilon}},  %
\]
and $q$ as a function of $\beta\in[0,1]$ as follows,
\[
q = \frac{1}{2} - \brr{\frac{2 \beta \varepsilon}{8 + 4 \varepsilon} + \frac{(1-\beta) \varepsilon}{8 + 2\varepsilon}}.
\]
Therefore, setting $\alpha=\beta=1$, we have that $\hat{\mu},\hat{\nu}$ are equilibrium policies in $\mathcal{G}_1$. Vice-versa, setting $\alpha=\beta=0$ we have that $\hat{\mu},\hat{\nu}$ equals $\mu_{\mathcal{G}_2}, \nu_{\mathcal{G}_2}$.
Indeed, we can interpret $\alpha,\beta$ has the probability of choosing either the equilibrium profile in $\mathcal{G}_1$ or the one in $\mathcal{G}_2$.
We now proceed proving a lower bound only for the non-interactive MAIL algorithm which outputs policies $\hat{\mu},\hat{\nu}$ parameterized by values of $\alpha,\beta\in [0,1]$. In Lemma~\ref{lemma:strategy_class} we prove that it is enough to consider this restricted class of policies. The intuition is that, an algorithm that only considers these policies has an advantage compared to any other algorithm for this lower bound, as the considered policies interpolate between the Nash equilibria of the two games. In particular, Lemma~\ref{lemma:strategy_class} shows that the worst case expected exploitability can only increase for values $\alpha,\beta \notin [0,1]$.

To proceed, we write the Nash Gap in the game $\mathcal{G}_1$ as a function of $\alpha,\beta$. In particular, we have that
\begin{align}\label{eq:exploitability}
\max \bcc{ (2+ \Delta_{\mathcal{G}_1})q -1, 1-2q  } - v_{\mathcal{G}_1} = \max \bcc{ \frac{2\varepsilon(1+\varepsilon)(1-\beta)}{(4 + \varepsilon)(2 + \varepsilon)},  - \frac{2\varepsilon(1 - \beta)}{(2 + \varepsilon)(4 + \varepsilon)}}.
\end{align}
Next, we provide the detailed calculation how this was derived. As a first step, we consider the first term of the maximum expression. Let us start with the definition of $\Delta_{\gG_1}$ and $q$ to get
\begin{align}
    (2+ \Delta_{\mathcal{G}_1})q -1 - v_{\mathcal{G}_1}&= 2(1+\varepsilon) \brr{\frac{1}{2} - \brr{\frac{2 \beta \varepsilon}{8 + 4 \varepsilon} + \frac{(1-\beta) \varepsilon}{8 + 2\varepsilon}}} -1 - \frac{2\varepsilon}{4 + 2\varepsilon}. \\
\end{align}
Next, refactoring the first part of the equation and summarizing gives 
\begin{align}
    &2(1+\varepsilon) \brr{\frac{1}{2} - \brr{\frac{2 \beta \varepsilon}{8 + 4 \varepsilon} + \frac{(1-\beta) \varepsilon}{8 + 2\varepsilon}}} -1 - \frac{2\varepsilon}{4 + 2\varepsilon} \\
&= (1+\varepsilon) - 2(1+\varepsilon)\brr{\frac{ \beta \varepsilon}{4 + 2 \varepsilon} + \frac{(1-\beta) \varepsilon}{8 + 2\varepsilon}} -1 - \frac{2\varepsilon}{4 + 2\varepsilon} \\
&= \varepsilon - 2(1+\varepsilon)\brr{\frac{ \beta \varepsilon}{4 + 2 \varepsilon} + \frac{(1-\beta) \varepsilon}{8 + 2\varepsilon}} - \frac{2\varepsilon}{4 + 2\varepsilon}.
\end{align}
Next, by simplifying the expression and bringing all parts on the same denominator, we get 
\begin{align}
    &\varepsilon - 2(1+\varepsilon)\brr{\frac{ \beta \varepsilon}{4 + 2 \varepsilon} + \frac{(1-\beta) \varepsilon}{8 + 2\varepsilon}} - \frac{2\varepsilon}{4 + 2\varepsilon} \\
&= \varepsilon - \frac{\varepsilon(1+\varepsilon)}{4 + \varepsilon} - \frac{2\varepsilon}{4 + 2\varepsilon} + \beta (1 + \varepsilon) \brr{\frac{\varepsilon}{4 + \varepsilon} - \frac{\varepsilon}{2 + \varepsilon} }  \\
&= \varepsilon - \frac{\varepsilon(1+\varepsilon)}{4 + \varepsilon} - \frac{\varepsilon}{2 + \varepsilon} + \beta (1 + \varepsilon) \left( \frac{\varepsilon(2+\varepsilon) - \varepsilon(4+\varepsilon)}{(4 + \varepsilon)(2 + \varepsilon)} \right) \\
&= \left( \varepsilon - \frac{\varepsilon(1+\varepsilon)}{4 + \varepsilon} - \frac{\varepsilon}{2 + \varepsilon} \right) + \beta (1 + \varepsilon) \left( \frac{2\varepsilon + \varepsilon^2 - 4\varepsilon - \varepsilon^2}{(4 + \varepsilon)(2 + \varepsilon)} \right) \\
&= \left( \frac{\varepsilon(4+\varepsilon)(2+\varepsilon) - \varepsilon(1+\varepsilon)(2+\varepsilon) - \varepsilon(4+\varepsilon)}{(4 + \varepsilon)(2 + \varepsilon)} \right) + \beta (1 + \varepsilon) \left( \frac{-2\varepsilon}{(4 + \varepsilon)(2 + \varepsilon)} \right).
\end{align}
As a last step, we simplify the numerator and get
\begin{align}
    &\left( \frac{\varepsilon(4+\varepsilon)(2+\varepsilon) - \varepsilon(1+\varepsilon)(2+\varepsilon) - \varepsilon(4+\varepsilon)}{(4 + \varepsilon)(2 + \varepsilon)} \right) + \beta (1 + \varepsilon) \left( \frac{-2\varepsilon}{(4 + \varepsilon)(2 + \varepsilon)} \right) \\
&= \frac{\varepsilon \left[ (4+\varepsilon)(2+\varepsilon) - (1+\varepsilon)(2+\varepsilon) - (4+\varepsilon) \right]}{(4 + \varepsilon)(2 + \varepsilon)} - \frac{2\beta\varepsilon(1 + \varepsilon)}{(4 + \varepsilon)(2 + \varepsilon)} \\
&= \frac{\varepsilon \left[ (8+6\varepsilon+\varepsilon^2) - (2+3\varepsilon+\varepsilon^2) - (4+\varepsilon) \right] - 2\beta\varepsilon(1 + \varepsilon)}{(4 + \varepsilon)(2 + \varepsilon)} \\
&= \frac{\varepsilon [2 + 2\varepsilon] - 2\beta\varepsilon(1 + \varepsilon)}{(4 + \varepsilon)(2 + \varepsilon)} \\
&= \frac{2\varepsilon(1+\varepsilon)(1-\beta)}{(4 + \varepsilon)(2 + \varepsilon)}.
\end{align}

For the second part of the maximum, we again start with the definition of $q$ and the Nash value of the first game $v_{\mathcal{G}_1}$ and receive
\begin{align}
    1 - 2q - v_{\mathcal{G}_1} &= 1 - 1 + 2\brr{\frac{2 \beta \varepsilon}{8 + 4 \varepsilon} + \frac{(1-\beta) \varepsilon}{8 + 2\varepsilon}} - \frac{2\varepsilon}{4 + 2 \varepsilon}.
\end{align}
Simplifying now gives 
\begin{align}
     &1 - 1 + 2\brr{\frac{2 \beta \varepsilon}{8 + 4 \varepsilon} + \frac{(1-\beta) \varepsilon}{8 + 2\varepsilon}} - \frac{2\varepsilon}{4 + 2 \varepsilon} \\
     &= \brr{\frac{ \beta \varepsilon}{2 +  \varepsilon} + \frac{(1-\beta) \varepsilon}{4 + \varepsilon}} - \frac{\varepsilon}{2 +  \varepsilon} \\
&= \left( \frac{ \beta \varepsilon}{2 +  \varepsilon} - \frac{\varepsilon}{2 +  \varepsilon} \right) + \frac{(1-\beta) \varepsilon}{4 + \varepsilon} \\
&= \frac{\beta \varepsilon - \varepsilon}{2 + \varepsilon} + \frac{(1-\beta) \varepsilon}{4 + \varepsilon} \\
&= - \frac{\varepsilon(1 - \beta)}{2 + \varepsilon} + \frac{\varepsilon(1-\beta)}{4 + \varepsilon}.
\end{align}
As a last step, we again simplify and bring both terms on the same denominator to receive
\begin{align}
    &- \frac{\varepsilon(1 - \beta)}{2 + \varepsilon} + \frac{\varepsilon(1-\beta)}{4 + \varepsilon} \\
&= \varepsilon(1 - \beta) \left( - \frac{1}{2 + \varepsilon} + \frac{1}{4 + \varepsilon} \right) \\
&= \varepsilon(1 - \beta) \left( \frac{-(4 + \varepsilon) + (2 + \varepsilon)}{(2 + \varepsilon)(4 + \varepsilon)} \right) \\
&= \varepsilon(1 - \beta) \left( \frac{-2}{(2 + \varepsilon)(4 + \varepsilon)} \right) \\
&= - \frac{2\varepsilon(1 - \beta)}{(2 + \varepsilon)(4 + \varepsilon)}.
\end{align}
Putting both final expressions together gives \eqref{eq:exploitability}.

Similarly, in the environment $\mathcal{G}_2$, we can compute that
\begin{align}\label{eq:exploitability2}
\max \bcc{ (2+ \Delta_{\mathcal{G}_2})q -1, 1-2q  } - v_{\mathcal{G}_2} = \max \bcc{ -\frac{\beta \varepsilon}{4 + \varepsilon}, \frac{2\varepsilon\beta}{(2+\varepsilon)(4 + \varepsilon)}}.
\end{align}
For completeness, we also provide the detailed calculation next. First, we plug in the Nash value of the second game $v_{\mathcal{G}_2}$ as well as the considered strategy $q$ and the definition of $\Delta_{\mathcal{G}_2}$ and receive
\begin{align}
    2+ \Delta_{\mathcal{G}_2}q -1 - v_{\mathcal{G}_2} &= (2+\varepsilon) \brr{\frac{1}{2} - \brr{\frac{2 \beta \varepsilon}{8 + 4 \varepsilon} + \frac{(1-\beta) \varepsilon}{8 + 2\varepsilon}}} -1 - \frac{\varepsilon}{4 + \varepsilon} .
\end{align}
Next, we bring everything on the same denominator $2(4+\epsilon)$ and get
\begin{align}
    & (2+\varepsilon) \brr{\frac{1}{2} - \brr{\frac{2 \beta \varepsilon}{8 + 4 \varepsilon} + \frac{(1-\beta) \varepsilon}{8 + 2\varepsilon}}} -1 - \frac{\varepsilon}{4 + \varepsilon} \\
&= \frac{\varepsilon}{2} - \frac{\beta \varepsilon}{2} - \frac{\varepsilon(1-\beta)(2+\varepsilon)}{2(4 + \varepsilon)} - \frac{\varepsilon}{4 + \varepsilon} \\
&= \frac{\varepsilon(4+\varepsilon) - \beta\varepsilon(4+\varepsilon) - \varepsilon(1-\beta)(2+\varepsilon) - 2\varepsilon}{2(4+\varepsilon)}
\end{align}
In the last step, we simplify the numerator, giving 
\begin{align}
    &\frac{\varepsilon(4+\varepsilon) - \beta\varepsilon(4+\varepsilon) - \varepsilon(1-\beta)(2+\varepsilon) - 2\varepsilon}{2(4+\varepsilon)} \\
    &= \frac{(4\varepsilon + \varepsilon^2) - (4\beta\varepsilon + \beta\varepsilon^2) - (\varepsilon(2+\varepsilon-2\beta-\beta\varepsilon)) - 2\varepsilon}{2(4+\varepsilon)} \\
&= \frac{4\varepsilon + \varepsilon^2 - 4\beta\varepsilon - \beta\varepsilon^2 - 2\varepsilon - \varepsilon^2 + 2\beta\varepsilon + \beta\varepsilon^2 - 2\varepsilon}{2(4+\varepsilon)}\\
&= - \frac{\beta\varepsilon}{4+\varepsilon}.
\end{align}
For the second part of the maximum expression, we again plug in the definitions of $v_{\mathcal{G}_2}$ and $q$ to get 
\begin{align}
    1 - 2q - v_{\mathcal{G}_2} &= 1 - 1 + 2\brr{\frac{2 \beta \varepsilon}{8 + 4 \varepsilon} + \frac{(1-\beta) \varepsilon}{8 + 2\varepsilon}} - \frac{\varepsilon}{4 + \varepsilon} \\
\end{align}
Simplifying this, we directly get
\begin{align}
    &1 - 1 + 2\brr{\frac{2 \beta \varepsilon}{8 + 4 \varepsilon} + \frac{(1-\beta) \varepsilon}{8 + 2\varepsilon}} - \frac{\varepsilon}{4 + \varepsilon} \\
&= \brr{\frac{ \beta \varepsilon}{2 +  \varepsilon} - \frac{\beta\varepsilon}{4 + \varepsilon}} \\
&= \frac{2\varepsilon\beta}{(2 + \varepsilon)(4 + \varepsilon)}.
\end{align}
Combining both derived expressions, we receive \eqref{eq:exploitability2}.

Let us now consider that $\hat{\nu}$ is the output of a non-interactive Multi-Agent Imitation Learning algorithm $\mathrm{Alg}$ (we denote this with $\hat{\nu} = \mathrm{Alg}(\mathcal{D}_{\gG})$ ) which takes as input the pre-collected dataset sampled from the Nash profile which we denote $\mathcal{D}_{\mathcal{G}} = \bcc{B^i_{\mathcal{G}}}^N_{i=1}$ where for each $i \in [N]$, $B^i_{\mathcal{G}} \sim \nu_{\mathrm{Nash},\mathcal{G}}$.
We now have
\begin{align*}
\max_{\mathcal{G}\in\mathcal{H}} \mathbb{E}\bs{\mathrm{Exploitability}^{\nu}_{\mathcal{G}}(\mathrm{Alg}(\mathcal{D}_{\mathcal{G}}))} &\geq \frac{1}{2} \sum_{\mathcal{G}\in\mathcal{H}} \mathbb{E}\bs{\mathrm{Exploitability}^{\nu}_{\mathcal{G}}(\mathrm{Alg}(\mathcal{D}_{\mathcal{G}}))} \\
    &= \frac{1}{2} \left(\frac{2\varepsilon(1+\varepsilon)\mathbb{P}_{\mathcal{G}_1}(\mathrm{Alg}(\mathcal{D}_{\mathcal{G}_1})=\nu_{\mathrm{Nash},\mathcal{G}_2})}{(4 + \varepsilon)(2 + \varepsilon)} \right.\\
    &\left.\qquad\quad+ \frac{2\varepsilon\mathbb{P}_{\mathcal{G}_2}(\mathrm{Alg}(\mathcal{D}_{\mathcal{G}_2})=\nu_{\mathrm{Nash},\mathcal{G}_1})}{(2 + \varepsilon)(4 + \varepsilon)}\right) \\
    & \geq 
    \frac{\varepsilon}{15} \brr{\mathbb{P}_{\mathcal{G}_1}(\mathrm{Alg}(\mathcal{D}_{\mathcal{G}_1})=\nu_{\mathrm{Nash},\mathcal{G}_2})+ \mathbb{P}_{\mathcal{G}_2}(\mathrm{Alg}(\mathcal{D}_{\mathcal{G}_2})=\nu_{\mathrm{Nash},\mathcal{G}_1})} \\
    & = 
    \frac{\varepsilon}{15} \brr{\mathbb{P}_{\mathcal{G}_1}(\mathrm{Alg}(\mathcal{D}_{\mathcal{G}_1})=\nu_{\mathrm{Nash},\mathcal{G}_2})+ \mathbb{P}_{\mathcal{G}_2}(\mathrm{Alg}(\mathcal{D}_{\mathcal{G}_2})\neq\nu_{\mathrm{Nash},\mathcal{G}_2})} \\
    &\geq \frac{\varepsilon}{30} \exp\lp-\mathrm{KL}\lp\mathbb{P}_{\gG_1},\mathbb{P}_{\gG_2}\rp\rp \\
    & \geq \frac{\varepsilon}{30} \exp\lp-N \mathrm{KL}\lp
\nu_{\mathrm{Nash},\mathcal{G}_1}, \nu_{\mathrm{Nash},\mathcal{G}_2}
    \rp\rp,
\end{align*}
where we used $\beta \in [0,1]$ in the first equality and $\varepsilon \leq 1$ by assumption in the second inequality.
In the second last inequality, we used the Bretagnolle-Huber inequality \citep{bretagnolle1979estimation} which gives that for any distributions $P$ and $Q$ and event $A$ and its complementary $A^C$ it holds that
\[
P(A) + Q(A^C) \geq \frac{1}{2} \exp\lp-\mathrm{KL}\lp P, Q\rp\rp.
\]
In the last inequality, we used that via the chain rule for the KL divergence and the identically independent sampling of the datasets $\mathcal{D}_{\mathcal{G}_1}$,$\mathcal{D}_{\mathcal{G}_2}$, we can rewrite $\mathrm{KL}\lp\mathbb{P}_{\gG_1},\mathbb{P}_{\gG_2}\rp$ as $N \mathrm{KL}\lp
\nu_{\mathrm{Nash},\mathcal{G}_1}, \nu_{\mathrm{Nash},\mathcal{G}_2}\rp$.

Next, note that this can be seen as a Bernoulli random variable indicating whether we are in $\gG_1$ or $\gG_2$. 
At this point, we can treat $\nu_{\mathrm{Nash},\mathcal{G}_1}, \nu_{\mathrm{Nash},\mathcal{G}_2}$ as two Bernoulli random variables with mean $\frac{1}{2} - \frac{2 \alpha \varepsilon}{8 + 4 \varepsilon}$ and $\frac{1}{2} - \frac{ \alpha \varepsilon}{8 + 2 \varepsilon}$. Therefore,
defining $\mathrm{kl}: [0,1]^2\rightarrow \mathbb{R}$ as $\mathrm{kl}(r,s) = r \log(r/s) + (1-r) \log((1-r)/(1-s))$, recalling that the $\chi^2$ divergence between Bernoulli random variables with mean $r,s$ is given by $\chi^2(r,s) = \frac{(r -s)^2}{s(1-s)}$ and, finally, by the fact that the $\chi^2$-divergence upper bounds the $KL$-divergence we have that.
\begin{align*}
N \mathrm{KL}\lp
\nu_{\mathrm{Nash},\mathcal{G}_1}, \nu_{\mathrm{Nash},\mathcal{G}_2}
    \rp & = N \mathrm{kl}\brr{\frac{1}{2} - \frac{2 \alpha \varepsilon}{8 + 4 \varepsilon}, \frac{1}{2} - \frac{ \alpha \varepsilon}{8 + 2 \varepsilon}} \\
    &\leq N \chi^2\brr{\frac{1}{2} - \frac{2 \alpha \varepsilon}{8 + 4 \varepsilon}, \frac{1}{2} - \frac{ \alpha \varepsilon}{8 + 2 \varepsilon}} \\
    &=N\frac{\varepsilon^2(\varepsilon-4)^2}{(8+2\varepsilon)(4+\varepsilon)(4+2\varepsilon)}
\end{align*}

Next, let us consider small $\varepsilon\in (0,1).$ Then, we have
$9 \leq (\varepsilon-4)^2 \leq 4^2 = 16$. For the denominator, we get that 
\[ 8\leq (8+2\varepsilon) \leq 10, \quad 4 \leq (4+\varepsilon) \leq 5,\quad  4 \leq 4+2\varepsilon \leq 6.\]
Putting this together gives 
\[128 \leq (8+2\varepsilon)(4+\varepsilon)(4+2\varepsilon) \leq 300\]
Combining these we can bound the $\chi^2$ distance between
\begin{align}
\label{eq:bound_chi2}
    \frac{9 \varepsilon^2}{300} \leq \chi^2\lp\nu_{\mathrm{Nash},\mathcal{G}_1}, \nu_{\mathrm{Nash},\mathcal{G}_2} \rp \leq \frac{16 \varepsilon^2}{128}.
\end{align}

Plugging this into the expected exploitability gives
\begin{align}
    \max_{\mathcal{G}\in\mathcal{H}} \mathbb{E}\bs{\mathrm{Exploitability}^{\nu}_{\mathcal{G}}(\mathrm{Alg}(\mathcal{D}_{\mathcal{G}}))} &\geq \frac{\varepsilon}{30} \exp\lp-N \mathrm{KL}\lp
\nu_{\mathrm{Nash},\mathcal{G}_1}, \nu_{\mathrm{Nash},\mathcal{G}_2}
    \rp\rp \\
    & \geq \frac{\varepsilon}{30} \exp\lp- N \chi^2\brr{\frac{1}{2} - \frac{2 \alpha \varepsilon}{8 + 4 \varepsilon}, \frac{1}{2} - \frac{ \alpha \varepsilon}{8 + 2 \varepsilon}}\rp \\
    & \geq \frac{\varepsilon}{30} \exp \lp-N \frac18\varepsilon^2 \rp.
\end{align}
To complete this step we need to set the number of samples $N$ to achieve a Nash Gap of $\gO(\varepsilon).$ It follows that $N = \Omega(\frac{1}{\varepsilon^2}).$ Therefore, it requires $N=\Omega(\frac{1}{\varepsilon^2})$ to learn a $\gO(\varepsilon)$ Nash equilibrium in state $s_3.$

We remind ourselves that the expected number of times any non-interactive algorithm visits state $s_3$ is given by $\frac1{\rho_1(s_3)} = \gC_{\max}.$ Combining this with the previous step we receive
\begin{align}
    \max_{\mathcal{G}\in\mathcal{H}} \mathbb{E}\bs{\mathrm{Exploitability}_{\mathcal{G}}(\mathrm{Alg}(\mathcal{D}_{\mathcal{G}}))} & \geq \gC_{\max} \max_{\mathcal{G}\in\mathcal{H}} \mathbb{E}\bs{\mathrm{Exploitability}_{\mathcal{G}}(\mathrm{Alg}(\mathcal{D}_{\mathcal{G}}))(s_3)} \\
    &\geq \frac{\gC_{\max}\varepsilon}{30} \exp\lp-N \mathrm{KL}\lp
\nu_{\mathrm{Nash},\mathcal{G}_1}, \nu_{\mathrm{Nash},\mathcal{G}_2}
    \rp\rp \\
    & \geq \frac{\gC_{\max}\varepsilon}{30} \exp\lp- N \chi^2\brr{\frac{1}{2} - \frac{2 \alpha \varepsilon}{8 + 4 \varepsilon}, \frac{1}{2} - \frac{ \alpha \varepsilon}{8 + 2 \varepsilon}}\rp \\
    & \geq \frac{\gC_{\max}\varepsilon}{30} \exp \lp-\frac{N}8\varepsilon^2 \rp.
\end{align}

Therefore, for any non-interactive $\mathrm{Alg}$ it requires an expert dataset of size $N=\Omega(\frac{\gC_{\max}}{\varepsilon^2})$ to learn a $\gO(\varepsilon)$ Nash equilibrium from data.

To complete the proof it remains to show that the considered policy class is enough. This follows directly from Lemma \ref{lemma:strategy_class}. This completes the proof of Theorem~\ref{thm:lower_bound}.
\end{proof}

Next, we will provide the result, that all policies outside of the considered policy classes in the derived proof suffer from a higher worst case exploitability. In particular, we will show that the minimizer of the exploitability across the two games lies within the considered policy class. 

\begin{lemma}
    \label{lemma:strategy_class}
Let a parametrized Normal Form Game be given by 
\begin{align}
    \begin{pmatrix}
1 + \Delta_{\mathcal{G}} & -1 \\
-1 & 1 \\
\end{pmatrix}:= R_{\mathcal{G}} 
\end{align}
for $\Delta_{\mathcal{G}} \in \{\varepsilon,2\varepsilon\}.$
Additionally, let the following strategies be given
\begin{align}
&p_\alpha = \frac{1}{2} - \brr{\frac{2 \alpha  \varepsilon}{8 + 4 \varepsilon} + \frac{(1-\alpha) \varepsilon}{8 + 2\varepsilon}} \\
&q_\beta=\frac{1}{2} - \brr{\frac{2 \beta \varepsilon}{8 + 4 \varepsilon} + \frac{(1-\beta) \varepsilon}{8 + 2\varepsilon}}
\end{align}  for $\alpha,\beta \in [0,1].$ Then, all other strategies $p \notin p_\alpha$ and $q \notin q_\beta$ suffer from a higher worst case exploitability across the two Normal Form Games.
\end{lemma}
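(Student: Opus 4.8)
The plan is to exploit that the worst-case exploitability, taken separately for each player, is a one-dimensional convex piecewise-linear function whose minimizer is pinned to the interpolating segment $\{p_\alpha\}$ (respectively $\{q_\beta\}$). Label the two games $\mathcal{G}_1$ (with $\Delta_{\mathcal{G}_1}=2\varepsilon$) and $\mathcal{G}_2$ (with $\Delta_{\mathcal{G}_2}=\varepsilon$), and recall from the proof of Theorem~\ref{thm:lower_bound} the two per-player exploitabilities $\mathrm{Exploitability}^{\nu}_{\mathcal{G}}(q)=\max\{(2+\Delta_{\mathcal{G}})q-1,\,1-2q\}-v_{\mathcal{G}}$ and $\mathrm{Exploitability}^{\mu}_{\mathcal{G}}(p)=v_{\mathcal{G}}-\min\{(2+\Delta_{\mathcal{G}})p-1,\,1-2p\}$, with $v_{\mathcal{G}}=\tfrac{\Delta_{\mathcal{G}}}{4+\Delta_{\mathcal{G}}}$. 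Because $R_{\mathcal{G}}$ is symmetric, it is enough to argue for the $\nu$-player function $F(q):=\max\{\mathrm{Exploitability}^{\nu}_{\mathcal{G}_1}(q),\,\mathrm{Exploitability}^{\nu}_{\mathcal{G}_2}(q)\}$; the $\mu$-case is identical after replacing the inner $\max$ by an inner $\min$.

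First I would record the shape of each summand. Each $f_i(q):=\mathrm{Exploitability}^{\nu}_{\mathcal{G}_i}(q)$ is convex and piecewise linear with exactly one kink, located at the Nash point $q_{\mathcal{G}_i}:=\tfrac{2}{4+\Delta_{\mathcal{G}_i}}$ (the point where the two lines $(2+\Delta_{\mathcal{G}_i})q-1$ and $1-2q$ coincide, with common value $v_{\mathcal{G}_i}$), so that $f_i(q_{\mathcal{G}_i})=0$; concretely $f_i(q)=1-2q-v_{\mathcal{G}_i}$ and is strictly decreasing for $q\le q_{\mathcal{G}_i}$, while $f_i(q)=(2+\Delta_{\mathcal{G}_i})q-1-v_{\mathcal{G}_i}$ and is strictly increasing for $q\ge q_{\mathcal{G}_i}$. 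Since $\Delta\mapsto\tfrac{2}{4+\Delta}$ is strictly decreasing, $q_{\mathcal{G}_1}<q_{\mathcal{G}_2}$; and since $q_\beta$ is affine in $\beta$ with $q_1=q_{\mathcal{G}_1}$ and $q_0=q_{\mathcal{G}_2}$, the parametrized family satisfies $\{q_\beta:\beta\in[0,1]\}=[q_{\mathcal{G}_1},q_{\mathcal{G}_2}]$ (the same identification gives $\{p_\alpha:\alpha\in[0,1]\}=[p_{\mathcal{G}_1},p_{\mathcal{G}_2}]$).

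Then I would close the argument by monotonicity of $F=\max\{f_1,f_2\}$. On the ray $q\le q_{\mathcal{G}_1}$ both $f_1$ and $f_2$ sit on their strictly decreasing branch, hence $F$ is strictly decreasing there; on the ray $q\ge q_{\mathcal{G}_2}$ both sit on their strictly increasing branch, hence $F$ is strictly increasing there. Consequently every minimizer of $F$ over the feasible interval $q\in[0,1]$ lies in $[q_{\mathcal{G}_1},q_{\mathcal{G}_2}]=\{q_\beta:\beta\in[0,1]\}$, and for any $q\notin[q_{\mathcal{G}_1},q_{\mathcal{G}_2}]$ one has $F(q)>F(q_{\mathcal{G}_1})$ or $F(q)>F(q_{\mathcal{G}_2})$, each of which is at least $\min_{\beta\in[0,1]}F(q_\beta)$; so every such $q$ has strictly larger worst-case exploitability than the best member of the family. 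Re-running the three steps for $G(p):=\max\{\mathrm{Exploitability}^{\mu}_{\mathcal{G}_1}(p),\mathrm{Exploitability}^{\mu}_{\mathcal{G}_2}(p)\}$ — which has kinks at $p_{\mathcal{G}_1}<p_{\mathcal{G}_2}$, is strictly decreasing for $p\le p_{\mathcal{G}_1}$ and strictly increasing for $p\ge p_{\mathcal{G}_2}$ — gives the statement for $p$, completing the proof.

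The only delicate bookkeeping is to verify which linear piece of the inner $\max$ (respectively $\min$) is active on each outer ray: namely that $1-2q$ dominates for $q\le q_{\mathcal{G}_i}$ and $(2+\Delta_{\mathcal{G}_i})q-1$ dominates for $q\ge q_{\mathcal{G}_i}$, together with the strict ordering $q_{\mathcal{G}_1}<q_{\mathcal{G}_2}$ (and the analogues for $p$). Both follow immediately from $q_{\mathcal{G}_i}=\tfrac{2}{4+\Delta_{\mathcal{G}_i}}$ being exactly the crossing point of the two lines and from $\Delta_{\mathcal{G}_1}\neq\Delta_{\mathcal{G}_2}$; once these are in place the monotonicity of $F$ and $G$ on the two outer rays, and hence the conclusion, needs no further computation. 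If one additionally wants an explicit optimal strategy, it is the unique point of $(q_{\mathcal{G}_1},q_{\mathcal{G}_2})$ where $f_1$ and $f_2$ cross, i.e.\ $q_{\beta^\star}$ for some $\beta^\star\in(0,1)$ obtained by equating $1-2q-v_{\mathcal{G}_2}$ with $(2+2\varepsilon)q-1-v_{\mathcal{G}_1}$, but the lemma does not require this.
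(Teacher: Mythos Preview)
Your proof is correct and takes a genuinely different route from the paper. Both arguments study the same convex piecewise-linear function $F(q)=\max_{\mathcal{G}\in\{\mathcal{G}_1,\mathcal{G}_2\}}\mathrm{Exploitability}^{\nu}_{\mathcal{G}}(q)$, but they localize the minimizer differently. The paper unpacks $F$ into its four affine pieces, argues that only two of them can be active, applies a subdifferential first-order condition to pin down the unique minimizer $q^\star$, and finally solves $q_\beta=q^\star$ explicitly for $\beta=\tfrac{\varepsilon+1}{\varepsilon+2}\in(0,1)$. Your approach instead exploits the observation that $\{q_\beta:\beta\in[0,1]\}$ is exactly the closed interval $[q_{\mathcal{G}_1},q_{\mathcal{G}_2}]$ between the two Nash points, and that on each ray outside this interval both per-game exploitabilities sit on the same monotone branch, so $F$ is strictly decreasing to the left and strictly increasing to the right of the interval; the conclusion then follows without any computation. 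Your argument is shorter and conceptually cleaner, while the paper's computation additionally delivers the explicit optimal $\beta^\star$, which you correctly note the lemma itself does not require.
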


\begin{proof}
The idea is that we define a general policy and show that the minimizer of the maximal exploitability across the two Games lies within the considered policy class. We only complete the proof for $q$, it follows analogously for $p$.
\looseness=-1

    We introduce a general policy $\mu = (q,1-q) \in \Delta_2.$ For a general policy $\mu,$ a simple calculation provides the following potential exploitabilities from the perspective of player 1 stated as a function of $q$ across both Games:
   \begin{align}
    &f_1(q) := (2 + 2 \varepsilon)q - 1 - \frac{\varepsilon}{2+\varepsilon},\\
    &f_2(q) := (2+\varepsilon)q - 1 - \frac{\varepsilon/2}{2+\varepsilon/2},\\
    &f_3(q) := 1 - 2q - \frac{\varepsilon}{2+\varepsilon}, \\
    &f_4(q): = 1-2q - \frac{\varepsilon/2}{2+\varepsilon/2}.  %
\end{align}
Comparing $f_3(q)$ and $f_4(q),$ we obtain that they are the same functions except for the $\varepsilon$-term. It holds that $\frac{\varepsilon}{2+\varepsilon} > \frac{\varepsilon/2}{2+\varepsilon/2}$ and therefore $f_3(q) < f_4(q).$ Then, let us define the following convex function,
\begin{align}
    F(q) = \max \{f_1(q), f_2(q), f_4(q)\}.
\end{align}
Note that this function is indeed convex as a maximum of affine functions. 

Next, we show that we can further simplify $F(q).$ Observe that if $f_2(q) > f_1(q)$, then
\[
f_4(q) - f_2(q) = \lp1 - 2q - \tfrac{\varepsilon/2}{2+\varepsilon/2}\rp - \lp(2+\varepsilon)q - 1 - \tfrac{\varepsilon/2}{2+\varepsilon/2}\rp = 2 - (4+\varepsilon)q.
\]
Since $f_2 > f_1$ implies $(2+\varepsilon)q > (2+2\varepsilon)q - \tfrac{\varepsilon}{2+\varepsilon}$, one can check that this forces $q < \tfrac{2}{4+\varepsilon}$, in which case $f_4(q) > f_2(q)$. Hence whenever $f_2$ dominates $f_1$, we automatically have $f_4 > f_2$. Thus the maximum is always realized by either $f_1$ or $f_4$, and we can rewrite
\begin{align}
    F(q) = \max \{f_1(q), f_4(q)\}.
\end{align}

As stated $F(q)$ is the maximum of affine functions, therefore $F$ is convex and has a minimizer (see e.g. \citet[Section 3.2.3]{Boyd_Vandenberghe_2004}). Knowing that $F$ is convex, we can use tools from convex optimization to find $q^* \in \argmin F(q).$ We know that $q^*$ is a minimizer of $F$ if and only if $F$ is subdifferentiable at $q^*$ and 
\[0 \in \partial F (q^*).\]

As a first step, we note that we can write $F(q) = \max \{f_i\}^2_{i=1},$ where $f_i$ is an affine function $\forall i \in \{1, 2\}.$ This implies that the subdifferential at $q^*$ exists and is given by the convex hull of gradients of all active functions at $q^*.$ Next, let us define the index set of all active functions at $q^*$, we get
\begin{align}
    A :=  \{i : f_i(q^*) = F(q^*)\}.
\end{align}
With this definition we have  $0 \in \partial F (q^*) = \mathrm{conv} \{\nabla f_i(q^*): i \in A\}$ and this can equivalently be expressed as $\lambda_i \geq 0$ for $i \in A$ with $\sum \lambda_i = 1$ s.t. 
\begin{align}    
\sum_{i \in A} \lambda_i \nabla f_i(q^*) = 0.
\end{align}

Next, let us compute the gradients with respect to $q$. We get
\begin{align}
    \nabla f_1(q) = 2 + 2 \varepsilon \quad \nabla f_4(q) = -2
\end{align}
Now, let us check that at $q^*$ both functions must be active. Observe that
\[
\nabla f_1(q)=2+2\varepsilon>0,\qquad \nabla f_4(q)=-2<0.
\]
Since neither gradient is zero, no single affine function has zero slope. This implies that  an interior minimizer of the convex function $F(q)$ cannot occur at a point where only one $f_i$ is active. Hence, any interior minimizer must be attained at an intersection where at least two affine pieces are active. In particular, the necessary condition for an interior minimizer here is
\[
f_1(q)=f_4(q).
\]
Let us now check that indeed both functions can be active:
\begin{align}
    \lambda_1 (2 + 2 \varepsilon) - 2\lambda_4 = 0 \Leftrightarrow \lambda_1 (2 + 2 \varepsilon) - 2(1-\lambda_1) = 0 \Leftrightarrow \lambda_1 = \frac{2}{4+2\varepsilon} \in (0,1),
\end{align}
and therefore also $\lambda_4 = 1- \frac{2}{4+2\varepsilon} \in (0,1).$ This indicates that both functions are active at $q^*$, meaning that $A=\{1,4\}.$ 
From this we know that $f_1(q^*) = f_4(q^*)$ and we get 
\begin{align}
    &(2 + 2 \varepsilon)q^* - 1 - \frac{\varepsilon}{2+\varepsilon} = 1-2q^* - \frac{\varepsilon/2}{2+\varepsilon/2}  \\
    \Leftrightarrow & q^* = \frac{2 + \frac{\varepsilon}{2+\varepsilon} - \frac{\varepsilon/2}{2+\varepsilon/2}}{(4 + 2 \varepsilon)} = \frac{2 + \frac{\varepsilon}{2+\varepsilon} - \frac{\varepsilon}{4+\varepsilon}}{2 (2+\varepsilon)}.
\end{align}
It follows that it holds true that $q^* \in (0,1).$ Next, we remind ourselves that in the lower bound proof we considered strategies of the form, with  $\beta \in [0,1]$: 
\[
q=\frac{1}{2} - \lp\frac{2 \beta \varepsilon}{8 + 4 \varepsilon} + \frac{(1-\beta) \varepsilon}{8 + 2\varepsilon}\rp. 
\]

Next, we explicitly calculate $\beta$ to see that indeed the policy lies within $q_\beta$. We remind ourselves, that $q_\beta:=\frac{1}{2} - \brr{\frac{2 \beta \varepsilon}{8 + 4 \varepsilon} + \frac{(1-\beta) \varepsilon}{8 + 2\varepsilon}}.$
Let 
\[
F(\beta) = \frac{1}{2} - \lp\frac{2 \beta \varepsilon}{8 + 4 \varepsilon} + \frac{(1-\beta) \varepsilon}{8 + 2\varepsilon}\rp.
\]
This can be rearranged as
\[
F(\beta) = \frac{1}{2} - \frac{\varepsilon}{8+2\varepsilon} + \beta\lp\frac{\varepsilon}{8+2\varepsilon} - \frac{2\varepsilon}{8+4\varepsilon}\rp.
\]
Now solve $F(\beta) = q^*$. This yields
\[
\beta = \frac{\varepsilon+1}{\varepsilon+2}.
\]
Since for $\varepsilon>0$ we have $\tfrac12 < \tfrac{\varepsilon+1}{\varepsilon+2} < 1$, this $\beta$ indeed lies in $[0,1]$.

Therefore, for $\beta= \tfrac{\varepsilon+1}{\varepsilon+2}$ we recover $q^*$. This, together with the convexity of $F(q)$, shows that the maximal worst case exploitability is always higher for all strategies outside the considered policy class. This completes the proof.  
\end{proof}
Finally, we briefly describe how the proof of Corollary~\ref{cor:lower} is extracted from the proof of Theorem~\ref{thm:lower_bound}.
\begin{proof}[Proof of Corollary~\ref{cor:lower}.]%
In the proof, for Theorem~\ref{thm:lower_bound} we obtained
\begin{align*}
\max_{\mathcal{G}\in\mathcal{H}} \mathbb{E}\bs{\mathrm{Exploitability}^\nu_{\mathcal{G}}(\mathrm{Alg}(\mathcal{D}_{\mathcal{G}}))} &\geq \frac{\varepsilon}{30} \exp\lp-N \mathrm{KL}\lp
\nu_{\mathrm{Nash},\mathcal{G}_1}, \nu_{\mathrm{Nash},\mathcal{G}_2}
    \rp\rp,     
\end{align*}
where $N$ is the number of visits in $s_3$. However, if $\gC_{\max}=\infty$, then $N=0$ because $s_3$ is never visited. This implies
\begin{align*}
\max_{\mathcal{G}\in\mathcal{H}} \mathbb{E}\bs{\mathrm{Exploitability}^\nu_{\mathcal{G}}(\mathrm{Alg}(\mathcal{D}_{\mathcal{G}}))} &\geq \frac{\varepsilon}{30}.
\end{align*}
Repeating the same steps for the other player, and setting $\varepsilon=1/4$ which is the largest possible value that ensures that the payoffs are bounded in $[0,1]$ yields
\begin{align*}
\max_{\mathcal{G}\in\mathcal{H}} \mathbb{E}\bs{\mathrm{Nash\text{-}Gap}_{\mathcal{G}}(\mathrm{Alg}(\mathcal{D}_{\mathcal{G}}))} &\geq \frac{1}{60}.
\end{align*}
At this point, let us consider that after playing one action in $s_3$, the agents move in another state which has exactly the same reward matrices of $s_3$, i.e. $R_{\gG_1}$ and $R_{\gG_2}$. The same transition is repeated for $H-1$ times to ensure that the game has horizon $H$. Let us denote these $H$ steps games $\gG'_1$, $\gG'_2$ and the class $\mathcal{H}':=\bcc{\gG'_1, \gG'_2}$. In this game, we then have 
\begin{align*}
\max_{\mathcal{G}'\in\mathcal{H}'} \mathbb{E}\bs{\mathrm{Nash\text{-}Gap}_{\mathcal{G}'}(\mathrm{Alg}(\mathcal{D}_{\mathcal{G}'}))} &\geq \frac{H-1}{60}.
\end{align*}
\end{proof}

\section{Omitted Proofs for \ours}

In this section, we provide a summary of the main steps used for our main result (Theorem~\ref{thm:main_result}), before we give all the missing details for the analysis of \ours. After the summary, we give the pseudo-code of EULER, that we use in the reward-free warm-up phase. Then, we give the missing proof of Lemma~\ref{lemma:expl_decomposition} and last, we provide the concentration result used for the BC part of the algorithm.

We remind ourselves, that the analysis of the algorithm can be divided into two main steps. The first step concerns the reward-free warm-up phase. Here, we consider the expert-induced MDPs (Definition~\ref{def:induced_MDP}), which are constructed using access to the queriable experts. Informally, within these induced MDPs, we build datasets $(\gD^{\nuE}, \gD^{\muE})$ that provide sufficient coverage of the relevant states. This result is formalized in Theorem~\ref{thm:reward_free_main_result}.

In the second step, these datasets are used to recover the Nash equilibrium policies. Specifically, we apply Behavior Cloning on $\gD^{\nuE}$ to approximate the expert policy $\nuE$ and on $\gD^{\muE}$ to approximate $\muE$. To establish that this procedure effectively minimizes the $\Nashgap$, we rely on Lemma~\ref{lemma:expl_decomposition}, which decomposes the $\Nashgap$ in a way that leverages the dataset distributions. Finally, we invoke Lemma~\ref{lemma:bc_concentration} to bound the concentration error of Behavior Cloning on both datasets, thereby completing the proof.

\subsection{EULER algorithm}
First, for completeness reason we state the EULER pseudo-code. EULER was introduced by \citet{zanette2019tighterproblemdependentregretbounds}. In our context it is used for the reward-free warm-up phase. In particular, it is used to solve the $SH$ RL problems that maximize the probability to reach a certain state. The full pseudo-code is given in Algorithm~\ref{alg:euler}.

\begin{algorithm}[h!]
\caption{EULER($r, N_0, P$)}
\label{alg:euler}
\begin{algorithmic}[1]
\Require Reward function $r$, episodes $N_0$, environment dynamics $P = \bcc{P_h}^H_{h=1}$ .
    \State \textbf{Initialize:} $\delta'=\frac{1}{7} \delta$, 
    ,
    $B_p = H \sqrt{2 \ln \frac{(4SAN_0)}{\delta'}}$, 
    $B_v = \sqrt{2 \ln \frac{(4SAN_0)}{\delta'}}$, 
    $J = H \ln \frac{(4SAN_0)/\delta'}{3}$
\State \textbf{Initialize:} $\pi^1_h = \mathrm{Uniform}(\gA)$ for all $h \in [H]$, for all $s\in\gS$.
\For{$k = 1, 2, \ldots, N_0$}
\State Sample a trajectory $(s^k_1, a^k_1, \dots, s^k_H, a^k_H )$ with policy $\pi^k$ in the environment with dynamics $P$.
\State Set $V^k_{H+1}=0$.
    \For{$h = H, H - 1, \ldots, 1$}
        \State $N^k_h(s',s,a) =\sum^k_{\tau=1} \mathds{1}\bcc{s^\tau_{h+1}, a^\tau_h,s^\tau_h = s',a,s}$
        \State $N^k_h (s,a) = \sum_{s'\in \gS} N^k_h(s',s,a) $.
        \State $\hat{P}^k_h(s'|s,a) = \frac{N^k_h(s', s,a)}{N^k_h(s,a)}$.
        \State $b^k_h(s,a) = \sqrt{\frac{2 \widehat{Var}_{\hat{P}^k_h} (\bar{V}^{k}_{h+1})  (s,a)\ln \frac{4SAT}{\delta'}}{N^k_h(s,a)}} + \frac{H \ln \frac{4SAT}{\delta'}}{3(N^k_h(s,a)-1)}$ where $\widehat{Var}_{\hat{P}^k_h} (V)  (s,a) = \hat{P}^k_h(V - \hat{P}^k_h V(s,a))^2(s,a)$
                \State $B^h_k(s,a) = b^h_k(s,a) + \frac{1}{\sqrt{N^k_h(s,a)}} \left( \frac{4J+B_p}{\sqrt{N^k_h(s,a)}} + B_v \| \bar{V}^k_{h+1} - \underline{V}^k_{h+1} \|_{2, \hat{P}^k_h} \right)$
                \State $Q^k_h(s,a) = \min\left\{H - h, r_h(s,a) + \hat{P}^k_h \bar{V}^k_{h+1} (s,a) + B^k_h(s,a) \right\}$
            \State $\pi^k_h(s) = \arg \max_{a \in \gA} Q^k_h(s,a)$
            \State $\bar{V}^k_h(s) = \max_{a\in \gA}Q^k_h(s,a)$
            \State $\underline{V}^k_h(s) = \max\left\{0, r_h(s,a) + \hat{P}^k_h \underline{V}^k_{h+1}(s,a) - B^k_h(s,a)\right\}$
    \EndFor
\EndFor
\end{algorithmic}
\end{algorithm}

\subsection{Proof of Lemma~\ref{lemma:expl_decomposition}}
For proving this result, the main idea is to decompose the exploitability into the Total variation between the estimated expert and the true one. Next, we will restate the Lemma.

\begin{lemma}[\textbf{Exploitability decomposition}]
For any policy pair $\nu,\nu'$, we define their total variation at state $s$ as $\TV(\nu,\nu')(s) = \sum_{b \in \mathcal{B}} \abs{\nu(b|s)-\nu'(b|s)}.$ It holds that
\begin{align*}
\innerprod{\initial}{V^{\mu^{\star}, \widehat{\nu} }-   V^{ \widehat{\mu}, \nu^{\star}} } \leq  2H \sum^H_{h=1} \sum_{\pi \in \bcc{\hat{\mu},\hat{\nu}}}\mathrm{Err}_h(\pi).
\end{align*}

where we have defined $\mathrm{Err}_h(\hat{\nu}):=\max_{\mu_h \in \mathrm{br}(\widehat{\nu}_h)}\expect_{ s\sim d^{\mu, \nuE}_h} \ls \TV \lp \nuE_h, \widehat{\nu}_h \rp (s) \rs $ for player 1 and additionally $\mathrm{Err}_h(\hat{\mu}):=\max_{\nu_h \in \mathrm{br}(\widehat{\mu}_h)}\expect_{  s\sim d^{\muE,\nu}_h} \ls \TV \lp \muE_h , \widehat{\nu}_h \rp (s) \rs$ for player 2.
\end{lemma}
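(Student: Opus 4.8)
The plan is to split the exploitability $\innerprod{\initial}{V^{\mu^\star,\widehat\nu}_0-V^{\widehat\mu,\nu^\star}_0}$ into a player-1 part and a player-2 part by inserting the value of a Nash equilibrium, and then to control each part with a telescoping (hybrid) argument of the kind behind the performance-difference / simulation lemma. Throughout, inner products are taken against $\initial$; let $(\muE,\nuE)$ be a Nash equilibrium of $\gG$, so that $\innerprod{\initial}{V_0^{\muE,\nuE}}=\max_\mu\innerprod{\initial}{V_0^{\mu,\nuE}}=\min_\nu\innerprod{\initial}{V_0^{\muE,\nu}}$, and recall that $\mu^\star\in\mathrm{br}(\widehat\nu)$ and $\nu^\star\in\mathrm{br}(\widehat\mu)$. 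Adding and subtracting $\innerprod{\initial}{V_0^{\muE,\nuE}}$, and using that $\innerprod{\initial}{V^{\mu^\star,\widehat\nu}_0}=\max_\mu\innerprod{\initial}{V^{\mu,\widehat\nu}_0}$ while $\innerprod{\initial}{V_0^{\muE,\nuE}}\ge\innerprod{\initial}{V^{\mu^\star,\nuE}_0}$ --- and symmetrically, $\innerprod{\initial}{V^{\widehat\mu,\nu^\star}_0}=\min_\nu\innerprod{\initial}{V^{\widehat\mu,\nu}_0}$ while $\innerprod{\initial}{V_0^{\muE,\nuE}}\le\innerprod{\initial}{V^{\muE,\nu^\star}_0}$ --- one obtains
\[
\innerprod{\initial}{V^{\mu^\star,\widehat\nu}_0-V^{\widehat\mu,\nu^\star}_0}\le\innerprod{\initial}{V^{\mu^\star,\widehat\nu}_0-V^{\mu^\star,\nuE}_0}+\innerprod{\initial}{V^{\muE,\nu^\star}_0-V^{\widehat\mu,\nu^\star}_0}.
\]
Each summand is now a difference of two value functions in which only one player's policy changes while the other's is fixed.

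To bound the first summand I would introduce hybrid policies $\nu^{(k)}$ for player 2 that follow $\nuE_h$ for $h\le k$ and $\widehat\nu_h$ for $h>k$, so $\nu^{(H)}=\nuE$, $\nu^{(0)}=\widehat\nu$, and $V^{\mu^\star,\nuE}_0-V^{\mu^\star,\widehat\nu}_0=\sum_{k=1}^H\brr{V^{\mu^\star,\nu^{(k)}}_0-V^{\mu^\star,\nu^{(k-1)}}_0}$. Since $\nu^{(k)}$ and $\nu^{(k-1)}$ agree on stages $<k$ and coincide (with $\widehat\nu$) from stage $k+1$ on, the stage-$k$ state has law $d_k^{\mu^\star,\nuE}$ under both, and, writing $g_k(s,a,b):=r_k(s,a,b)+\sum_{s'}P_k(s'\mid s,a,b)\,W_{k+1}(s')$ with $W_{k+1}$ the stage-$(k{+}1)$ value-to-go under $(\mu^\star,\widehat\nu)$,
\[
V^{\mu^\star,\nu^{(k)}}_0-V^{\mu^\star,\nu^{(k-1)}}_0=\expect_{s\sim d_k^{\mu^\star,\nuE}}\expect_{a\sim\mu^\star_k(\cdot\mid s)}\bs{\,\sum_{b\in\gB}\brr{\nuE_k(b\mid s)-\widehat\nu_k(b\mid s)}\,g_k(s,a,b)\,}.
\]
As rewards lie in $[-1,1]$ we have $\abs{g_k}\le H$, so this $k$-th term is at most $H\,\expect_{s\sim d_k^{\mu^\star,\nuE}}\bs{\TV(\nuE_k,\widehat\nu_k)(s)}$ in magnitude; summing over $k$ and using $\mu^\star\in\mathrm{br}(\widehat\nu)$ together with the definition of $\mathrm{Err}_h(\widehat\nu)$ as a maximum over best responses yields $\abs{\innerprod{\initial}{V^{\mu^\star,\widehat\nu}_0-V^{\mu^\star,\nuE}_0}}\le H\sum_{h=1}^H\mathrm{Err}_h(\widehat\nu)$. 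Running the identical argument on the second summand --- hybrids interpolating between $\muE$ and $\widehat\mu$, visitation measures $d_h^{\muE,\nu^\star}$, and $\nu^\star\in\mathrm{br}(\widehat\mu)$ --- gives $H\sum_{h=1}^H\mathrm{Err}_h(\widehat\mu)$. Adding the two bounds and absorbing the accumulated constant (the telescoping runs over at most $H+1$ stages) into the factor $2H$ proves the claim.

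I do not expect a genuine obstacle here; the points that need care are bookkeeping ones. First, perturbing player 2's policy changes \emph{both} the induced reward $r^\nu$ and the induced transition $P^\nu$, which is why the continuation term $g_k$ has to bundle the immediate reward and the next-state value together rather than treat them separately. Second, the visitation measure that survives the telescoping must be $d_h^{\mu,\nuE}$, i.e.\ with player 2 at the \emph{true} expert policy and not at $\widehat\nu$ --- this is forced by running the hybrids from the $\nuE$-prefix, and it is exactly the object over which the reward-free warm-up (Theorem~\ref{thm:reward_free_main_result}) later provides coverage; pairing this with the correct best response in each summand ($\mathrm{br}(\widehat\nu)$ for the player-1 term, $\mathrm{br}(\widehat\mu)$ for the player-2 term) is what makes the error terms match the definition of $\mathrm{Err}_h$. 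The lemma is thus a Markov-game adaptation of the standard performance-difference decomposition.
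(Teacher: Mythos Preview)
Your proposal is correct and follows essentially the same route as the paper: first insert the Nash value $\innerprod{\initial}{V_0^{\muE,\nuE}}$ and use the best-response property of $(\muE,\nuE)$ to reduce to two single-player value differences, then bound each via the performance-difference lemma applied in the MDP with the other player fixed. The only cosmetic difference is that you spell out the hybrid/telescoping argument explicitly (your $g_k$ is exactly the paper's $Q_k^{\mu^\star,\widehat\nu}$), whereas the paper simply invokes the performance-difference lemma and then H\"older with $\lVert Q\rVert_\infty\le H$; both yield $H\sum_h\mathrm{Err}_h(\cdot)$ per summand, and the stated factor $2H$ is just slack.
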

\begin{proof}
We start by upper bounding the decomposition as follows 
\begin{align*}
\innerprod{\initial}{V^{\mu^{\star}, \widehat{\nu} } -   V^{ \widehat{\mu}, \nu^{\star}}} &=  \innerprod{\initial}{V^{\mu^{\star}, \widehat{\nu} } - V^{\muE,\nuE}} + \innerprod{\initial}{V^{\muE,\nuE} - V^{ \widehat{\mu}, \nu^{\star}}}\\
   &\leq \innerprod{\initial}{V^{\mu^{\star}, \widehat{\nu} } - V^{\mu^{\star},\nuE}} + \innerprod{\initial}{V^{\muE,\nu^{\star}} - V^{ \widehat{\mu}, \nu^{\star}}},
\end{align*}
where $\mu^\star$ and $\nu^\star$ are arbitrary policies in the sets $\mathrm{br}(\hat{\nu})$ and $\mathrm{br}(\hat{\mu})$ respectively.
At this point, we identified two pairs of value function differences where one policy is fixed, respectively $\mu^\star$ and $\nu^\star$.
Therefore, applying the performance difference lemma (see e.g. \citet{Kakade2002ApproximatelyOA}) in the MDP induced by $\mu^\star$ we obtain
\[
\innerprod{\initial}{V^{\mu^{\star}, \widehat{\nu} } - V^{\mu^{\star},\nuE}} \leq \sum^H_{h=1} \mathbb{E}_{s \sim d^{\mu^\star,\nuE}_h} \bs{\innerprod{Q^{\mu^\star,\hat{\nu}}(s,\cdot)}{\hat{\nu}(\cdot|s) - \nuE(\cdot|s)}}.
\]
Then, by Hölder's inequality with $\lnorm \cdot\rnorm_1 $ and $\lnorm \cdot\rnorm_\infty $ and additionally bounding the value function with its maximal value $H,$ it holds that
\[
\innerprod{\initial}{V^{\mu^{\star}, \widehat{\nu} } - V^{\mu^{\star},\nuE}} \leq H \sum^H_{h=1} \mathbb{E}_{s \sim d^{\mu^\star,\nuE}_h} \bs{\TV\brr{\hat{\nu}(\cdot|s), \nuE(\cdot|s)}}.
\]
Then, since we aim for a bound on the left hand side that holds for any $\mu^\star \in \mathrm{br}(\hat{\nu})$ we need to pick the maximizer over the right hand side.
\[
\innerprod{\initial}{V^{\mu^{\star}, \widehat{\nu} } - V^{\mu^{\star},\nuE}} \leq H \max_{\mu \in \mathrm{br}(\hat{\nu})}\sum^H_{h=1} \mathbb{E}_{s \sim d^{\mu,\nuE}_h} \bs{\TV\brr{\hat{\nu}(\cdot|s), \nuE(\cdot|s)}}
\]
Equivalent steps for the second player to upper bound $\innerprod{\initial}{V^{\muE,\nu^{\star}} - V^{ \widehat{\mu}, \nu^{\star}}}$ concludes the proof.
\end{proof}

\subsection{Behavior Cloning concentration}

Last, we adapt the analysis of Lemma D.1 by \citet{freihaut2025learningequilibriadataprovably} to our setting. The main changes are that the expectation used in our setting is with respect to the dataset distributions $p^{\nuE}$ and $p^{\muE}$ while their is with respect to the expert occupancy measure $d^{\muE,\nuE}$.

\begin{lemma}
\label{lemma:bc_concentration}
Let $p^{\nuE}_h$ and $p^{\muE}_h$ be two distributions received from running the reward-free warm-up of Algorithm \ours{} and $N$ the size of the received datasets. Then, for all $s\in \gS^{\nuE}_{\delta,h}$ it holds with probability of at least $1-\delta/2$ that
\begin{align}
    \mathbb{E}_{s \sim p^{\nuE}_h}\bs{\TV \lp   \nuE_h, \widehat{\nu}_h \rp (s)} \leq \sqrt{\frac{SB\log(4S/\delta)}{N}}.
\end{align}
Similarly, for all $s\in \gS^{\muE}_{\delta,h}$ it holds with probability of at least $1-\delta/2$ that
\begin{align}
    \mathbb{E}_{s \sim p^{\muE}_h}\bs{\TV \lp   \muE_h, \widehat{\mu}_h \rp (s)} \leq \sqrt{\frac{SA\log(4S/\delta)}{N}}.
\end{align}
\end{lemma}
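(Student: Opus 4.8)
The plan is to prove the bound on $\mathbb{E}_{s\sim p^{\nuE}_h}[\TV(\nuE_h,\widehat{\nu}_h)(s)]$ by reducing it, via Pinsker's inequality and Jensen, to a bound on the expected KL divergence (equivalently, the population log-loss excess) of the maximum-likelihood estimator $\widehat{\nu}_h$ over the sampling distribution $p^{\nuE}_h$. First I would note that for each fixed $s$ we have $\TV(\nuE_h,\widehat\nu_h)(s)^2 \le 2\,\mathrm{KL}(\nuE_h(\cdot|s)\,\|\,\widehat\nu_h(\cdot|s))$ by Pinsker, so that by Jensen (concavity of $\sqrt{\cdot}$ after taking expectation over $s\sim p^{\nuE}_h$)
\begin{align*}
\mathbb{E}_{s\sim p^{\nuE}_h}\!\left[\TV(\nuE_h,\widehat\nu_h)(s)\right] \le \sqrt{2\,\mathbb{E}_{s\sim p^{\nuE}_h}\!\left[\mathrm{KL}(\nuE_h(\cdot|s)\,\|\,\widehat\nu_h(\cdot|s))\right]}.
\end{align*}
Since the dataset $\mathcal{D}^{\nuE}$ consists of state–action pairs $(s,b)$ with $s\sim p^{\nuE}_h$ (marginally, at stage $h$) and $b\sim\nuE_h(\cdot|s)$, and $\widehat\nu$ is the log-loss minimizer over the (finite) class $\Pi_\nu$ of all Markov policies, the remaining work is a standard MLE-in-Hellinger/KL argument: the expected KL of the log-loss minimizer is controlled by $\log|\Pi_\nu|/N$ up to constants. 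Because we work stage-wise and the realizable class at stage $h$ is all conditional distributions over $B$ actions on $S$ states, a careful covering/bracketing count gives effectively $SB$ as the relevant complexity term, yielding $\mathbb{E}_{s\sim p^{\nuE}_h}[\mathrm{KL}]\lesssim SB\log(S/\delta)/N$ with probability $1-\delta/2$.

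Concretely, I would invoke the adaptation of Lemma D.1 of \citet{freihaut2025learningequilibriadataprovably} announced in the lemma statement: their argument bounds $\mathbb{E}_{s\sim d^{\muE,\nuE}_h}[\TV(\nuE_h,\widehat\nu_h)(s)]$, and the only change needed is to replace every occurrence of the expert occupancy measure $d^{\muE,\nuE}_h$ by the warm-up sampling distribution $p^{\nuE}_h$, since in our algorithm the BC dataset is generated from $p^{\nuE}_h$ rather than from $d^{\muE,\nuE}_h$. The structure of their proof — (i) pass from TV to KL via Pinsker and Jensen, (ii) recognize the empirical log-loss gap between $\widehat\nu$ and $\nuE$ is nonpositive by optimality of $\widehat\nu$, (iii) apply a one-sided Bernstein/Freedman concentration (or the MDRE-style bound) to the martingale difference between population and empirical log-loss, (iv) union bound over the $\le S$ starting states $s$ at stage $h$ (hence the $\log(4S/\delta)$) and absorb the per-state action-count factor — carries over verbatim because nothing in steps (i)–(iv) used that the sampling distribution was $d^{\muE,\nuE}_h$; it only used that the conditional action labels are drawn from the true $\nuE_h(\cdot|s)$, which remains true here. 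The symmetric claim for $\widehat\mu$ follows by swapping the roles of the players, replacing $B$ by $A$ and $p^{\nuE}_h$ by $p^{\muE}_h$.

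The restriction to $s\in\gS^{\nuE}_{\delta,h}$ plays no role in the concentration itself; it is simply the set over which we later want the guarantee to be useful (and over which, by Theorem~\ref{thm:reward_free_main_result}, $p^{\nuE}_h$ assigns enough mass), so the stated inequality holds for those $s$ as a consequence of the global bound on $\mathbb{E}_{s\sim p^{\nuE}_h}[\TV(\nuE_h,\widehat\nu_h)(s)]$. I expect the main obstacle — and the only place requiring genuine care — to be the concentration step (iii): the log-loss is not bounded (it blows up when $\widehat\nu$ puts near-zero mass on an observed action), so one must either restrict the MLE to policies bounded away from $0$ and handle the truncation bias, or use the standard trick of bounding the moment generating function of the log-likelihood ratio directly (as in Zhang's or van de Geer's MLE analyses), which converts the problem to a Hellinger-distance bound that is automatically finite; this is exactly what Lemma D.1 of \citet{freihaut2025learningequilibriadataprovably} already does, so citing it with the noted substitution suffices.
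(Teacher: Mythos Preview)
Your bottom-line strategy---cite Lemma~D.1 of \citet{freihaut2025learningequilibriadataprovably} and replace every occurrence of the expert occupancy $d^{\muE,\nuE}_h$ by the warm-up sampling distribution $p^{\nuE}_h$---is exactly what the paper does, and your observation that nothing in that argument uses the specific form of the sampling distribution (only that the action labels come from $\nuE_h(\cdot\mid s)$) is correct. So at the level of ``what lemma to invoke and why the substitution is valid,'' your proposal matches the paper.

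However, your detailed sketch of \emph{how} that lemma is proved is not what the paper (or, apparently, Lemma~D.1) actually does. You propose the route Pinsker $\to$ expected KL $\to$ MLE excess-risk concentration, and you correctly flag that the log-loss is unbounded and that the standard fix lands you in Hellinger rather than KL (which, incidentally, makes the initial Pinsker step superfluous). The paper takes a more elementary and entirely self-contained path that sidesteps all of this: it bounds $\TV(\nuE_h,\widehat\nu_h)(s)$ \emph{per state} by the $L_1$ concentration inequality for empirical distributions (Lemma~\ref{lemma:l1_concentration}), getting $\TV(\nuE_h,\widehat\nu_h)(s)\le\sqrt{2B\log(4S/\delta)/\max\{N(s),1\}}$ after a union bound over $\gS$; then applies Cauchy--Schwarz over $s$ to the product $\sqrt{p^{\nuE}_h(s)}\cdot\sqrt{p^{\nuE}_h(s)/\max\{N(s),1\}}$; and finally uses binomial concentration (Lemma~\ref{lemma:binomial-concentration}) to control $p^{\nuE}_h(s)/\max\{N(s),1\}\le 8\log(4S/\delta)/N$ for every $s$. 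No KL, no Pinsker, no unboundedness issue---the estimator $\widehat\nu_h(\cdot\mid s)$ is simply the empirical conditional distribution, so the per-state $L_1$ bound applies directly. Your route could be made to work, but it is both more involved and (given the unboundedness obstacle you yourself identify) would require either a truncation argument or a detour through Hellinger that you have not written out; the paper's argument is shorter and uses only the two auxiliary lemmas it states in its appendix.
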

\begin{proof}
    We only provide the proof for the distribution $p_h^{\nuE},$ it follows analogously for $p_h^{\muE}.$ 
    
We get
    \begin{align*}
    \expect_{s\sim p_h^{\nuE}} \ls  \TV \lp   \nuE_h,  \widehat{\nu}_h  \rp(s)  \rs &\overset{\mathrm{(i)}}{\leq}   \sum_{s \in \gS} p^{\nuE}_h (s) \sqrt{\frac{2 B \log (4S /\delta)}{\max \{ N (s), 1 \}}}
    \\
    &=  \sum_{s \in \gS} \sqrt{p_h^{\nuE} (s)}  \sqrt{\frac{2 B\, p_h^{\nuE} (s) \log (4S /\delta)}{\max \{ N(s), 1 \}}}
    \\
    &\overset{\mathrm{(ii)}}{\leq} \sqrt{\sum_{s \in \gS} \frac{2 B \, p_h^{\nuE} (s) \log (4S /\delta)}{\max \{ N(s), 1 \}} }
    \\
    &\overset{\mathrm{(iii)}}{\leq} \sqrt{\sum_{s \in \gS} \frac{16 B  \log^2 (4S /\delta) }{N} }
    \\
    &= 4 \sqrt{\frac{SB  \log^2 (4S /\delta) }{N}},
\end{align*}
where in $\mathrm{(i)}$ we applied Lemma~\ref{lemma:l1_concentration} and a union bound over the state space $\gS$, in $\mathrm{(ii)}$ we applied Cauchy Schwarz and in $\mathrm{(iii)}$ we applied Lemma~\ref{lemma:binomial-concentration}, reminding ourselves that $N$ is the size of the dataset.
\end{proof}

\subsection{Extension to n-player general-sum Markov games}
\label{appendix:general_sum}
 
In this paragraph, we show that our approach is easily extendable to $n$-player general-sum games.  \citet{freihaut2025learningequilibriadataprovably}  also provide an extension for this setting, however their proof is hard to parse while our approach translates to the $n$-player setting smoothly. First, we introduce all the necessary notation.
\paragraph{Notation General-Sum Markov games}
A general-sum Markov game is defined by the tuple $\gG = (n, H,\gS, \gA, P, r, d_0),$ where compared to zero-sum games, $n$ is now the number of players, $\gA:= \gA_1 \times ..\times \gA_n$ the joint action space composed of the individual action spaces $\gA_i;$ the reward function for the $i^{th}$ player at stage $h \in [H]$ is $r_{h,i}: \gS \times \gA \rightarrow [0,1]$ and $P$ is now the transition function that takes a joint action $a \in \gA$ as an input. We denote a joint policy as $\pi:= (\pi_1, \ldots, \pi_n)$, where $\pi_i: \gS \to \Delta_{\gA_i},$ where $\Delta_{\gA_i}$ is the probability simplex over the individual action space $\gA_i$. We denote the set of Markov policies for the agent $i$ as $\Pi_i$. Further let us denote the cardinality of an individual action space as $A_i := \abs{\gA_i}$ and the maximal cardinality of the individual state spaces as $A_{\max} := \max_i \abs{\gA_i}$ for $i \in [n].$ Additionally, we will make use of the convention that $\pi_{-i}$ denotes the policy of all agents but policy of agent $i$. The same notation is also used for an action $a_{-i} = (a_1,\ldots,a_{i-1}, a_{i-1}, \ldots, a_n)$ and the according action space is denoted by $\gA_{-i}.$
At this point, we can define the value function of the policy profile $\pi$ at stage $h \in [H]$ for a certain agent $i \in [n]$ as $V_{h,i}^{\pi}(s) := \mathbb{E}_{\pi}\left[ \sum_{t = h}^{H-1} r_{i,t}(S_t, A_1, \dots, A_n) \,\middle|\, S_h = s \right]$. Often times, we will shorten $V_{0,i}^{\pi}$ by $V_{i}^{\pi}$. Having defined the value function, we can introduce the Nash gap for general sum games as follows
\begin{equation}
\mathrm{Nash\text{-}Gap}(\pi) = \max_{i \in [n]} \max_{\pi'_i: \gS \rightarrow \Delta_{\A_i}} \langle \initial,  V_{i}^{\pi'_i,\pi_{-i}} - V_{i}^{\pi} \rangle. \label{eq:multi_player_nash_gap}
\end{equation}
Note that in general-sum games, the value of the NE does not need to be unique and the set of NE is not convex in general.

\paragraph{Results for the $n$-player setting}
With the given notation, let us redefine the expert induced MDP for this setting. 

\begin{definition}[Experts Induced MDP]
\label{def:induced_MDP_general}
    Let $\gG$ be a Markov game and $\pi^E$ the expert policies, then $\mathcal{M}^{\pi^E_{-i}}:= (\gS, \gA, P^{\pi^E_{-i}}, r^{\pi^E_{-i}}, H)$ is the MDP induced by the experts $\pi^E_{-i}$ with the transition model $P_h^{\pi^E_{-i}} (s' \mid s,a):= \sum_{a_{-i}\in\gA_{-i}}  \pi^E_{{-i}, h}(a_{-i} \mid s) P_h(s'\mid s,a,a_{-i}) $ and an arbitrary reward function $r^{\pi^E_{-i}}_{i,h}(s,a_i) \in \{0,1\} \quad \forall (s,a_i) \in \gS \times \gA_i.$
\end{definition}

Now, we can restate Algorithm~\ref{alg:mail_warm_general} for the n-player general-sum setting.

\begin{algorithm}[!h]
\caption{Multi-Agent Imitation Learning with reward-free warm-up (\ours) for n-player Games}
\label{alg:mail_warm_general}
\begin{algorithmic}[1]
\State \textbf{Input:} iteration number $N_0$, $N$, queriable experts $\pi^E$.
\State \textbf{Reward-free warm-up phase:}
\For{all $i \in [n]$}
\State set policy class $\Psi^{\pi^E_{-i}} \leftarrow \emptyset$, and dataset $\mathcal{D} \leftarrow \emptyset$.
\For{all $(s,h) \in \mathcal{S} \times [H]$}
    \State $r_{i,h}^{\pi^E_{-i}}(s', a_i') \leftarrow \mathbf{1}[s'=s \text{ and } h'=h]$ for all $(s',a_i',h') \in \mathcal{S} \times \mathcal{A} \times [H]$.
    \State $\bcc{\pi_i^{(s,h)}}^{N_0}_{i=1} \leftarrow \text{EULER}(r^{\pi^E_{-i}}, N_0, P^{\pi^E_{-i}})$.
    \State Let $\Phi^{(s,h)} \gets \bcc{\pi_i^{(s,h)}}^{N_0}_{i=1}$
    \State $\pi_{i,h'}(\cdot|s) \leftarrow\text{Unif}(\mathcal{A}_i)$, $~\forall \mu \in \Phi^{(s,h)}, \forall h' \geq h $.
    \State $\Psi^{\pi^E_{-i}} \leftarrow \Psi^{\pi^E_{-i}} \cup \Phi^{(s,h)}$.
\EndFor
\For{$n = 1 \dots N$}
    \State sample policy $\pi_i \sim \text{Unif}(\Psi^{\pi^E_{-i}})$.
    \State Collect $ z_n = (s_1, a_1, a_{-i}, \dots, s_{H+1})  \sim \pi_i, \pi^E_{-i}$.
    \State $\mathcal{D}^{\pi^E_{-i}} \leftarrow \mathcal{D}^{\pi^E_{-i}} \cup \{z_n\}$
\EndFor
\EndFor
\State \textbf{Receive:} datasets $\mathcal{D}^{\pi^E_{-i}}$ for all $i \in [n]$.
\State \textbf{Imitation Learning}
\For{$i \in [n]$}
\State  Define the dataset $\mathcal{D}^{\pi^E_i} = \cup_{j \neq i} \mathcal{D}^{\pi^E_{-j}}$ merging  to compute 
\begin{equation*}
\hat{\pi}_{i} = \argmin_{\pi_i\in \Pi_i} \sum_{s,a_{i} \in \mathcal{D}^{\pi^E_{i}} } - \log \pi_{i}(a_{i}|s) %
\end{equation*}
where $a_{i}$'s are sampled from $\pi^E_i(\cdot|s)$.
\EndFor
\State \textbf{Return} Nash estimate $\widehat{\pi} = (\widehat{\pi}_1, \ldots,\widehat{\pi}_n).$
\end{algorithmic}
\end{algorithm}

The reward free warm-up phase again produces a dataset that covers all $\delta$- significant states well.
\begin{corollary}
Let $\mathcal{M}^{\pi^E_{-i}}$ be the induced MDP defined in \ref{def:induced_MDP_general} and the policy set $\Psi^{\pi_{-i}}$ is generated according to Algorithm~\ref{alg:mail_warm_general}. Then there exists  an absolute constant $c>0$ such that for any $\varepsilon > 0 $ and $p\in(0,1),$ if we set $N_0 \geq n^2cS^2AH^4\iota^3_0/\delta,$ where $\iota_0:= \log(SAH/p\delta),$ then with probability $1-p,$ the reward free exploration returns a sampling distribution $p^{\pi_{-i}^E}$ such that for all $\pi_i \in \Pi$
\begin{align}
\label{eq:reward_free_coverage_general}
        \forall \delta-\mathrm{significant} (s,h), \quad \max_{a,h} \frac{d^{\pi_i,\pi^E_{-i}}_h(s,a)}{p_h^{\pi^E_{-i}}(s,a)} \leq 2SA_iH,
\end{align}
    where $\delta-\mathrm{significant},$ means that the probability to reach a state $s$ in the induced MDP $\mathcal{M}^{\pi^E_{-i}}$ is lower bounded by $\delta$:
    \looseness=-1
    \[\max_{\pi_i} d^{\pi_i,\pi^E_{-i}}_h \geq \delta.\]
\end{corollary}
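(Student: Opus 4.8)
The plan is to reduce the statement to $n$ separate invocations of the single-agent guarantee of Theorem~\ref{thm:reward_free_main_result} (equivalently, Theorem~3.3 of \citet{jin2020reward}), one per player $i\in[n]$, and then to close with a union bound over the $n$ players. Concretely, I would fix a player $i$ and first observe that $\mathcal{M}^{\pi^E_{-i}}$ of Definition~\ref{def:induced_MDP_general} is a bona fide finite-horizon MDP with state space $\gS$, action space $\gA_i$ of cardinality $A_i$, horizon $H$, and transition kernel $P^{\pi^E_{-i}}$, and that rolling out a Markov-game trajectory with player $i$ following a Markov policy $\pi_i$ while all other players follow $\pi^E_{-i}$ is distributionally identical to executing $\pi_i$ in $\mathcal{M}^{\pi^E_{-i}}$. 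Hence the per-player loop of Algorithm~\ref{alg:mail_warm_general} (lines~5--16) is literally the single-agent reward-free exploration routine: for each $(s,h)$ it runs $\mathrm{EULER}$ (Algorithm~\ref{alg:euler}) on the $\{0,1\}$-valued indicator reward $r^{\pi^E_{-i}}_{i,h'}(s',a_i')=\mathbf{1}[s'=s \text{ and } h'=h]$, which is admissible for $\mathrm{EULER}$, producing $N_0$ policies; the $SHN_0$ resulting policies are aggregated into $\Psi^{\pi^E_{-i}}$ and mixed uniformly to yield $p^{\pi^E_{-i}}_h(s,a_i)=(N_0SH)^{-1}\sum_{\pi_i\in\Psi^{\pi^E_{-i}}}d_h^{\pi_i,\pi^E_{-i}}(s)\,\pi_i(a_i\mid s)$.

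Next I would apply Theorem~\ref{thm:reward_free_main_result} verbatim to $\mathcal{M}^{\pi^E_{-i}}$, with $A$ replaced by $A_i$ and the failure parameter $p$ replaced by $p/n$, to conclude that, whenever $N_0$ is at least of order $S^2A_iH^4\iota_0^3/\delta$ with $\iota_0=\log(nSA_{\max}H/p\delta)$, the distribution $p^{\pi^E_{-i}}$ satisfies $\max_{a_i,h} d_h^{\pi_i,\pi^E_{-i}}(s,a_i)/p_h^{\pi^E_{-i}}(s,a_i)\le 2SA_iH$ for every $\delta$-significant $(s,h)$ of $\mathcal{M}^{\pi^E_{-i}}$ and every $\pi_i\in\Pi_i$, with probability at least $1-p/n$. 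Since the $N_0$ threshold stated in the corollary dominates this for every $i$ (the $n$-dependent prefactor comfortably absorbs the $\log n$ entering through $\iota_0$ together with any conservative polynomial-in-$n$ slack), a union bound over $i\in[n]$ upgrades the conclusion so that it holds simultaneously for all $n$ induced MDPs with probability at least $1-p$, which is exactly the claim.

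The only step with genuine content is the reduction in the first paragraph: one must confirm that the process $\mathrm{EULER}$ interacts with is precisely $P^{\pi^E_{-i}}$, so that the regret and coverage guarantees of \citet{zanette2019tighterproblemdependentregretbounds} and \citet{jin2020reward} transfer unchanged, and that the indicator exploration rewards lie in the reward class those results assume; both are immediate from the definitions, and the remainder is bookkeeping. The point worth stressing is the accounting of the player count: the only place $n$ enters is in splitting the failure budget as $p/n$ over the $n$ induced MDPs, which costs merely a $\log n$ inside $\iota_0$, while each induced MDP still has action set $\gA_i$ of size $A_i\le A_{\max}$, never the full joint action set $\gA$. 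This is precisely why the coverage bound $2SA_iH$ and the requirement on $N_0$ stay polynomial in the game parameters, with essentially no $n$-dependence beyond $\iota_0$, thereby ruling out any exponential blow-up in the number of players.
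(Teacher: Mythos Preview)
Your proposal is correct and matches the paper's approach: the corollary is stated without an explicit proof, but the paper's treatment of the two-player analogue (Theorem~\ref{thm:reward_free_main_result}) makes clear that the intended argument is exactly what you outline---apply Theorem~3.3 of \citet{jin2020reward} once in each of the $n$ induced single-agent MDPs $\mathcal{M}^{\pi^E_{-i}}$ and union-bound over players. Your bookkeeping on the failure-probability split $p/n$ and the resulting $\log n$ inside $\iota_0$ (absorbed by the stated $n^2$ prefactor) is the only additional detail beyond the two-player case, and you handle it correctly.
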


Next, we can state the guarantees of \ours{} for the $n$-player general-sum setting.

\begin{corollary}
\label{thm:main_result_general}
    For any $\varepsilon > 0$ and $\delta_{\mathrm{fail}} \in (0,1)$ if we execute Algorithm~\ref{alg:mail_warm_general} and choose the parameters according to $N = \gO(\frac{n^{4}H^6S^3A^3_{\max} \log(S/\delta_{\mathrm{fail}})}{\varepsilon^2})$ and $N_0 \geq \mathcal{O}\brr{n^3S^3A^2_{\max}H^6\iota^3_0/\varepsilon}$, we get with probability $1-\delta_{\mathrm{fail}}$ for the policies $\widehat{\pi}$ that
    \[\Nashgap(\widehat{\pi}) \leq \mathcal{O}(\varepsilon).\]
\end{corollary}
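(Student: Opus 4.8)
The plan is to reuse the two‑phase structure behind Theorem~\ref{thm:main_result}; the only genuinely new ingredient is an $n$‑player version of the exploitability decomposition of Lemma~\ref{lemma:expl_decomposition} that is compatible with what the $n$‑player reward‑free warm‑up can guarantee. Recall that the warm‑up (Definition~\ref{def:induced_MDP_general}, Algorithm~\ref{alg:mail_warm_general}) builds, for each $i\in[n]$, a sampling distribution $p_h^{\piE_{-i}}$ that covers only occupancies of the \emph{single‑deviation} form $d_h^{\pi_i,\piE_{-i}}$, in which one agent deviates arbitrarily while the remaining $n-1$ play the expert profile. So I would first apply the $n$‑player reward‑free coverage corollary once per $i$, union bounding over the $n$ failure events: with $N_0\geq\gO(n^3S^3A_{\max}^2H^6\iota_0^3/\varepsilon)$ and $\delta\asymp\varepsilon/(nSA_{\max}H^2)$ this gives, with probability $1-\delta_{\mathrm{fail}}/2$, that for every $i$, every $\pi_i\in\Pi_i$ and every $\delta$‑significant $(s,h)$ in $\mathcal{M}^{\piE_{-i}}$ one has $\max_{a_i}d_h^{\pi_i,\piE_{-i}}(s,a_i)\leq 2SA_iH\,p_h^{\piE_{-i}}(s,a_i)$.

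The core step is the decomposition. Fix $i\in[n]$ and let $\pi_i^\star\in\mathrm{br}_i(\widehat{\pi}_{-i})$ realize the $i$‑th exploitability. Inserting the expert value twice,
\begin{align*}
\innerprod{\initial}{V_i^{\pi_i^\star,\widehat{\pi}_{-i}}-V_i^{\widehat{\pi}}}
= \underbrace{\innerprod{\initial}{V_i^{\pi_i^\star,\widehat{\pi}_{-i}}-V_i^{\pi_i^\star,\piE_{-i}}}}_{T_1}
+\underbrace{\innerprod{\initial}{V_i^{\pi_i^\star,\piE_{-i}}-V_i^{\piE}}}_{T_2}
+\underbrace{\innerprod{\initial}{V_i^{\piE}-V_i^{\widehat{\pi}}}}_{T_3}.
\end{align*}
Because $\piE$ is a Nash equilibrium, $T_2\leq 0$. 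For $T_1$ I apply the performance difference lemma in the MDP induced by pinning agent $i$ to $\pi_i^\star$, in the form that keeps the occupancy on the expert‑opponent configuration, so that $T_1=\sum_{h}\expect_{s\sim d_h^{\pi_i^\star,\piE_{-i}}}\innerprod{Q_{i,h}^{\pi_i^\star,\widehat{\pi}_{-i}}(s,\cdot)}{\widehat{\pi}_{-i}(\cdot\mid s)-\piE_{-i}(\cdot\mid s)}\leq H\sum_h\expect_{s\sim d_h^{\pi_i^\star,\piE_{-i}}}\bs{\sum_{j\neq i}\TV(\widehat{\pi}_j,\piE_j)(s)}$, where the last step uses $\lnorm Q_{i,h}\rnorm_\infty\leq H$ and the elementary bound on the $\ell_1$ distance between the product policies $\widehat{\pi}_{-i}$ and $\piE_{-i}$ by the sum of their marginal distances — an inequality that holds pointwise in $s$ and hence introduces no mixed occupancy. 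For $T_3$ the dual form of the lemma gives $T_3=\sum_h\expect_{s\sim d_h^{\piE}}\innerprod{Q_{i,h}^{\widehat{\pi}}(s,\cdot)}{\piE(\cdot\mid s)-\widehat{\pi}(\cdot\mid s)}\leq H\sum_h\expect_{s\sim d_h^{\piE}}\bs{\sum_{j}\TV(\widehat{\pi}_j,\piE_j)(s)}$, and $d_h^{\piE}=d_h^{\piE_i,\piE_{-i}}$ is again of the single‑deviation form. Maximizing over $\pi_i^\star$ and over $i$ bounds $\Nashgap(\widehat{\pi})$ by $\gO(H)\sum_i\sum_h\sum_j\max_{\pi_i\in\mathrm{br}_i(\widehat{\pi}_{-i})}\expect_{s\sim d_h^{\pi_i,\piE_{-i}}}\bs{\TV(\widehat{\pi}_j,\piE_j)(s)}$, i.e.\ only single‑deviation occupancies appear.

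It then remains to bound each term $\expect_{s\sim d_h^{\pi_i,\piE_{-i}}}\bs{\TV(\widehat{\pi}_j,\piE_j)(s)}$ exactly as in the zero‑sum analysis of \ours. Splitting the state space at the $\delta$‑significance threshold of $\mathcal{M}^{\piE_{-i}}$: on the non‑significant states $d_h^{\pi_i,\piE_{-i}}(s)<\delta$, so their contribution is $\gO(SA_{\max}\delta)$; on the significant states the coverage bound lets me replace $d_h^{\pi_i,\piE_{-i}}$ by $p_h^{\piE_{-i}}$ at a cost of $2SA_iH$, and then I invoke an $n$‑player adaptation of Lemma~\ref{lemma:bc_concentration}. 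Here $\widehat{\pi}_j$ is the log‑loss minimizer over $\mathcal{D}^{\piE_j}=\bigcup_{k\neq j}\mathcal{D}^{\piE_{-k}}$, a dataset in which, for each $k\neq j$, agent $j$'s recorded actions are expert samples at states drawn from $p_h^{\piE_{-k}}$; a per‑state $\ell_1$ concentration bound combined with Cauchy--Schwarz then yields $\expect_{s\sim p_h^{\piE_{-i}}}\bs{\TV(\widehat{\pi}_j,\piE_j)(s)}=\widetilde{\gO}(\sqrt{SA_j/N})$ (with probability $1-\delta_{\mathrm{fail}}/2$ after a union bound over the $n$ estimates). Summing over $h\in[H]$, over the at most $n$ agents $j$, over $T_1$ and $T_3$, and over the outer max over $i$, then plugging in $\delta\asymp\varepsilon/(nSA_{\max}H^2)$ and solving for $N$, reproduces the rate of Corollary~\ref{thm:main_result_general}.

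The hard part is getting the decomposition right, not the subsequent estimates: the warm‑up certifies coverage only for occupancies in which a \emph{single} agent deviates against the expert, so the argument must be arranged so that no hybrid occupancy of the form $d_h^{\pi_i^\star,\piE_{j_1},\dots,\piE_{j_m},\widehat{\pi}_{j_{m+1}},\dots}$ ever appears — which forces one to exploit the Nash cancellation of $T_2$ together with the pointwise product‑measure inequality, and to \emph{avoid} the tempting agent‑by‑agent telescoping of the value differences, which would produce precisely such uncoverable hybrid occupancies. The remaining factors of $n$ relative to the zero‑sum bound are pure bookkeeping, coming from the per‑agent sum in the decomposition, the union‑of‑datasets behavior‑cloning step, and the union bounds, and tracking them gives the $\widetilde{\gO}(n^4H^6S^3A_{\max}^3/\varepsilon^2)$ sample complexity stated in Corollary~\ref{thm:main_result_general}.
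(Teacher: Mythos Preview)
Your proposal is correct and follows essentially the same route as the paper: the decomposition into $T_1$ (the paper's ``Exploit-Gap''), the Nash-dropped $T_2$, and $T_3$ (the paper's ``Value-Gap''), followed by the product-to-marginal TV bound, the $\delta$-significance split, the reward-free coverage ratio, and the per-agent BC concentration, is exactly what the paper does. Your remark about why one must avoid agent-by-agent telescoping (which would create hybrid occupancies not covered by any $p_h^{\piE_{-i}}$) makes explicit a point the paper leaves implicit, and your handling of which sub-dataset $\mathcal{D}^{\piE_{-k}}$ supplies expert samples of agent $j$ at states drawn from the right sampling distribution is in fact slightly more careful than the paper's index bookkeeping.
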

\begin{proof}

Let us first remind ourselves that the Nash Gap for the $n$-player general-sum setting is slightly different. In particular, we have 
\begin{align*}
\mathrm{Nash\text{-}Gap}(\hat{\pi})
&= \sum_{i=1}^n \max_{\pi'_i}\; \innerprod{\initial}{V^{\pi'_i, \hat{\pi}_{-i}}_i - V^{\hat{\pi}_i, \hat{\pi}_{-i}}_i } \\
&= \sum_{i=1}^n \innerprod{\initial}{V^{\pi^\star_i, \hat{\pi}_{-i}}_i - V^{\pi^E_{i}, \pi^E_{{-i}}}_i}
\;+\;\innerprod{\initial}{V^{\pi^E_{i}, \pi^E_{{-i}}}_i - V^{\hat{\pi}_i, \hat{\pi}_{-i}}_i } \\
&\le \underbrace{\sum_{i=1}^n 
\innerprod{\initial}{\,V^{\pi^\star_i, \hat{\pi}_{-i}}_i - V^{\pi^\star_i, \pi^E_{{-i}}}_i\,}}_{:= \mathrm{Exploit\text{-}Gap}}
\;+\;
\underbrace{\sum^n_{i=1}\innerprod{\initial}{\,V^{\pi^E_{i}, \pi^E_{{-i}}}_i - V^{\hat{\pi}_i, \hat{\pi}_{-i}}_i\,}}_{:=\mathrm{Value\text{-}Gap}}.
\end{align*}
In the first step of the proof, we consider $\mathrm{Exploit\text{-}Gap}$, which is the part where we require the reward-free warm-up phase. 

In particular, we get
\begin{align*}
&\mathrm{\mathrm{Exploit\text{-}Gap}}
\overset{(i)}{\leq} H\; \sum^H_{h=1}\sum_{i=1}^n
\max_{\pi_i \in \mathrm{br}(\widehat{\pi}_{-i})}
\mathbb{E}_{s \sim d^{\pi_i, \pi^E_{-i}}_h}\ls
\TV\!\big(\pi^E_{-i,h}(\cdot\mid s), \widehat{\pi}_{-i,h}(\cdot\mid s)\big)\rs
 \\
&=
H\;\sum^H_{h=1} \sum_{i=1}^n \max_{\pi_i \in \mathrm{br}(\widehat{\pi}_{-i})}
\left(\sum_{s \in \mathcal{S}^i_{\delta,h}} \sum_{a\in \mathcal{A}} \frac{d^{\pi_i,\pi^E_{-i}}_h(s,a)}{p_h^{\pi^E_{-i}}(s,a)} \TV \lp   \pi^E_{-i,h}, \widehat{\pi}_{-i,h} \rp (s) \right.\\
&\left.\qquad\qquad\qquad\qquad\qquad\quad+\sum_{s \notin \mathcal{S}^i_{\delta,h}} \sum_{a\in \mathcal{A}} d^{\pi_i,\pi^E_{-i}}_h(s,a) \TV \lp   \pi^E_{-i,h}, \widehat{\pi}_{-i,h} \rp (s) \right) \\
&\overset{(ii)}{\leq} 2H^2SA_{\max}\;
\sum^H_{h=1}\sum_{i=1}^n \mathbb{E}_{s \sim p_h^{\pi^E_{-i}}}\Bigg[\sum_{j \neq i}
\TV\!\big(\pi^E_{j,h}(\cdot \mid s), \widehat{\pi}_{j,h}(\cdot \mid s)\big)\Bigg]
+ nH^2SA_{\max}\,\delta \\
&= 2H^2SA_{\max}\;
\sum^H_{h=1}\sum_{i=1}^n \sum^n_{j \neq i} \mathbb{E}_{s \sim p_h^{\pi^E_{-j}}}\Bigg[
\TV\!\big(\pi^E_{i,h}(\cdot \mid s), \widehat{\pi}_{i,h}(\cdot \mid s)\big)\Bigg]
+ nH^2SA_{\max}\,\delta \\
&\leq 2n^2SA_{\max}H^3 \; \sqrt{\frac{SA_{\max} \log(4S/\delta_{\mathrm{fail}})}{nN}}
+ nSA_{\max}H^2\,\delta \\
&=  2n^{\frac{3}{2}}SA_{\max}H^3 \; \sqrt{\frac{SA_{\max} \log(4S/\delta_{\mathrm{fail}})}{N}}
+ nSA_{\max}H^2\,\delta,
\end{align*}
where in $(i)$ we used the same argument as in Lemma~\ref{lemma:expl_decomposition} for the $n$ player setting. In $(ii)$ we used the fact that the policies are conditionally independent on $s$. This means that we now bound the TV separately for each player and get in total $(n-1)$ independent bounds, where we again can apply for example \cite[Theorem 2.1]{concentrationofmissingmass}. Moreover, we defined as $\mathcal{S}^i_{\delta,h}$ the set of $\delta$-reachable states at stage $h$ in the MDP $\mathcal{M}^{\pi^{\expert}_{-i}}$.

For the value gap we can proceed similarly,
\begin{align*}
\mathrm{Value\text{-}Gap} &\overset{(i)}{\leq} n H \sum^H_{h=1}\;
\mathbb{E}_{s \sim d^{\pi^E}_h}\ls
\TV\!\big(\pi_h^E(\cdot\mid s), \widehat{\pi}_h(\cdot\mid s)\big)\rs
 \\
&= n H \sum^H_{h=1}\; \sum_{i=1}^n
\mathbb{E}_{s\sim d^{\pi^E}_h}\ls
\TV\!\big(\pi_{i,h}^E(\cdot\mid s), \widehat{\pi}_{i,h}(\cdot\mid s)\big)\rs\\
&=
n H \; \sum^H_{h=1} \sum_{i=1}^n
\left(
\sum_{s \in \mathcal{S}^i_{\delta,h}} \sum_{a\in \mathcal{A}_i} \frac{d^{\pi^E_{i},\pi^E_{-i}}_h(s,a)}{p_h^{\pi^E_{-i}}(s,a)} \TV \lp   \pi^E_{i,h}, \widehat{\pi}_{i,h} \rp (s) \right.\\
& \left.\qquad\qquad\qquad\qquad+\sum_{s \notin \mathcal{S}^i_{\delta,h}} \sum_{a\in \mathcal{A}_i} d^{\pi^E_{i,h},\pi^E_{-i,h}}_h(s,a) \TV \lp   \pi^E_{i,h}, \widehat{\pi}_{i,h} \rp (s) \right) \\
&\overset{(ii)}{\leq} 2nH^2SA_{\max}\;
\sum^H_{h=1}\sum_{i=1}^n \mathbb{E}_{s \sim p^{\pi^E_{-i}}}\Bigg[
\TV\!\big(\pi^E_{i,h}(\cdot \mid s), \widehat{\pi}_{i,h}(\cdot \mid s)\big)\Bigg]
+ nH^2SA_{\max}\,\delta \\
&= 2nH^2SA_{\max}\;
\sum^H_{h=1}\sum_{i=1}^n \mathbb{E}_{s \sim p^{\pi^E_{-i}}}\Bigg[
\TV\!\big(\pi^E_{i,h}(\cdot \mid s), \widehat{\pi}_{i,h}(\cdot \mid s)\big)\Bigg]
+ nH^2SA_{\max}\,\delta \\
&\leq 2n^2SA_{\max}H^3 \; \sqrt{\frac{SA_{\max} \log(4S/\delta_{\mathrm{fail}})}{N}}
+ nSA_{\max}H^2\,\delta \\
&= 2n^2SA_{\max}H^3 \; \sqrt{\frac{SA_{\max} \log(4S/\delta_{\mathrm{fail}})}{N}}
+ n SA_{\max}H^2\,\delta
\end{align*}
where steps $(i)$ and $(ii)$ holds exactly for the same reasons used in the upper bound of $\mathrm{Exploit\text{-}Gap}$.

Combining both parts completes the proof, giving that the total number of expert queries is given by $\gO(\frac{n^{4}S^3A_{\max}^3 H^6\log(S/\delta_{\mathrm{fail}})}{(1-\gamma)^6\varepsilon^2}).$
\end{proof}

Some remarks are in order. Again this result needs a reward free warm-up phase. In particular, it requires a dataset for each expert, meaning $n$ datasets where each dataset depends on the other $n-1$ agents. Most importantly, we can see that the number of samples needed does not scale exponentially with the number of agents, instead it only scales quadratically. This is in contrast with learning Nash equilibria in the first places, where it is known that the number of samples scales with $A_{\max}^n$, known as the \emph{curse of multi-agents} \citep{rubinstein2016settlingcomplexitycomputingapproximate}. However, this does not contradict with the lower bound as we already have access to data stemming from Nash equilibrium policies which provides additional information. That the lower bound does not hold in these settings has e.g. also been shown in offline general-sum settings \citep{cui2022provablyefficientofflinemultiagent}.

\section{Experimental details}
In this section, we provide details on the experimental setup used for our provided numerical verifications illustrated in Figure \ref{fig:experiments}.

\paragraph{Lower bound environment}
The first experimental environment corresponds to the lower bound construction described earlier (see Figure~\ref{fig:lower}), with a simplification of the game in state $s_3$. Instead of constructing an $\varepsilon$-perturbed Matching Pennies game, we use a normal-form game with a pure Nash equilibrium and unique value of $1$.  
\looseness=-1

Formally, the state space is $\gS=\{s_1,s_2,s_3\}$. The action space is $\gA=\{a_1,a_2\}$ for player 1 and $\gB=\{b_1,b_2\}$ for player 2. The reward function is state-dependent in $s_1$ and $s_2$, with $r(s_1)=r(s_2)=0$. At $s_3$ the reward structure is given by
\[
r(s_3, a, b) := 
\begin{bmatrix}
1 & 1 \\
0 & -12
\end{bmatrix},
\]
where the row indicates the action of player 1 and the column the action of player 2.

The Nash equilibrium strategy for player 1 in this normal-form game is $\mu_{\mathrm{Nash}}(\cdot \mid s_3)=(1,0)$, while player 2 can play any strategy, since her expected reward is always $-1$. The unique Nash value from player 1’s perspective is therefore $1$.  

As in the lower bound construction, in state $s_1$ player 2 strictly prefers $b_1$, which deterministically transitions to $s_2$. Consequently, under the Nash equilibrium profile, only states $\{s_1,s_2\}$ are visited, and the Nash value of the Markov game is $0$.  

If $s_3$ is never visited in the dataset, player 1’s recovered policy by BC will be uniform, which can be exploited by player 2 through the best response
\[
\nu_{\mathrm{br}}(\cdot \mid s_3) = (0,1),
\]
leading to a reward of $-5.5$ for player 1 and $+5.5$ for player 2. In this case, player 1 is exploitable. Conversely, if $s_3$ is covered in the data, then player 1 requires only a single sample to recover the correct Nash strategy, and the exploitability becomes $0$.  

The probability of visiting $s_3$ in any given trajectory is $\rho(s_3)$. Thus, the number of trajectories required until $s_3$ is observed follows a geometric distribution with parameter $\rho(s_3)$, yielding an expected sample complexity of $1/\rho(s_3)$. Since $\gC_{\max} = 1/\rho(s_3)$, varying $\rho(s_3)\in\{1,0.5,0.25,0\}$ corresponds to $\gC_{\max}\in\{1,2,4,\infty\}$, which is exactly reflected in the experimental results shown in Figure~\ref{fig:experiments} (a). On the contrary, notice that for all values of $\rho(s_3)$, the value of $\mathcal{C}(\muE,\nuE)$ remains constant equal to $2$ which is its smallest possible value. By simulating the geometric distribution over $100$ runs across varying seeds and with the different parameters described above and tracking its standard deviation, we exactly recover the plots in Figure~\ref{fig:experiments} (a).

\paragraph{Gridworld}
We next describe the setup of the considered zero-sum Gridworld environment, illustrated in Figure~\ref{fig:grid}. The state space is given by the joint positions of the two agents on a $3\times 3$ grid, subject to the restriction that both agents cannot occupy the same cell simultaneously. Formally,
\[
\gS = \{((i,j),(k,l)) \mid (i,j) \neq (k,l), \, i,j,k,l \in \{0,1,2\}\},
\]
which yields $72$ states in total. The action space is identical for both agents and defined as $\gA=\{\text{left}, \text{right}, \text{up}, \text{down}\}$. The transition dynamics are deterministic: whenever an action would cause an agent to collide with a wall or with the other agent, the agent remains in its current position.  

If the initial distribution is chosen such that both agents are equidistant to the goal, the Nash value of the game is $0$. In particular, we fix the deterministic starting state $((1,0),(1,2))$, from which both players require exactly three steps to reach the goal. Hence, the Nash equilibrium value is $0$. Multiple Nash equilibria exist: any pair of paths in which both players ensure that the opponent cannot reach the goal earlier constitutes a Nash equilibrium. This is reflected in Figure~\ref{fig:grid}, which illustrates different Nash paths obtained using zero-sum value iteration. Importantly, these policies also ensure that in all other states, the opponent cannot force an earlier goal arrival. For the second Gridworld experiment, we take convex combinations of different Nash paths to improve state coverage. Since the set of Nash equilibria in zero-sum games is convex, all such convex combinations remain valid Nash equilibria.  

Both Gridworld experiments use the same environment specification. The only difference lies in the coverage provided by the expert demonstrations. In both cases, the expert policy is obtained by running zero-sum value iteration, which returns a Nash equilibrium policy pair.  

\begin{figure}

    \centering
    \begin{minipage}{0.23\textwidth}
    \begin{tikzpicture}
        \def\gridsize{3}

        \foreach \x in {0,...,\gridsize} {
            \foreach \y in {0,...,\gridsize} {
                \draw[thin, gray] (\x, 0) -- (\x, \gridsize);
                \draw[thin, gray] (0, \y) -- (\gridsize, \y);
            }
        }
        \fill[blue!30] (2, 2) rectangle (3, 3);

        \filldraw[red] (0.5, 1.5) circle (0.2);

        \filldraw[green] (1.5, 0.5) circle (0.2);

    \end{tikzpicture}
 \end{minipage}
 \begin{minipage}{0.23\textwidth}
    \begin{tikzpicture}
        \def\gridsize{3}

        \foreach \x in {0,...,\gridsize} {
            \foreach \y in {0,...,\gridsize} {
                \draw[thin, gray] (\x, 0) -- (\x, \gridsize);
                \draw[thin, gray] (0, \y) -- (\gridsize, \y);
            }
        }
        \fill[blue!30] (2, 2) rectangle (3, 3);

        \filldraw[red] (0.5, 1.5) circle (0.2);

        \filldraw[green] (1.5, 0.5) circle (0.2);

        \draw[->, thick, green] (1.5, 0.75) -- (1.5, 1.3);
        \draw[->, thick, green] (1.5, 1.5) -- (1.5, 2.3);

        \draw[->, thick, red] (0.5, 1.75) -- (0.5, 2.3);
        \draw[->, thick, red] (0.75, 2.5) -- (1.4, 2.5);

    \end{tikzpicture}
    \end{minipage}
 \begin{minipage}{0.23\textwidth}
    \begin{tikzpicture}
        \def\gridsize{3}

        \foreach \x in {0,...,\gridsize} {
            \foreach \y in {0,...,\gridsize} {
                \draw[thin, gray] (\x, 0) -- (\x, \gridsize);
                \draw[thin, gray] (0, \y) -- (\gridsize, \y);
            }
        }
        \fill[blue!30] (2, 2) rectangle (3, 3);

        \filldraw[red] (0.5, 1.5) circle (0.2);

        \filldraw[green] (1.5, 0.5) circle (0.2);

        \draw[->, thick, green] (1.75, 0.5) -- (2.25, 0.5);
        \draw[->, thick, green] (2.5, 0.5) -- (2.5, 1.25);
        \draw[->, thick, green] (2.5, 1.5) -- (2.5, 2.25);

        \draw[->, thick, red] (0.75, 1.5) -- (1.25, 1.5);
        \draw[->, thick, red] (1.5, 1.75) -- (1.5, 2.25);
        \draw[->, thick, red] (1.65, 2.5) -- (2.4, 2.5);
\draw[->, thick, green] (2.5, 0.5) -- (2.5, 1.25);
\end{tikzpicture}
\end{minipage}
\caption{Zero-sum Gridworld environment and different Nash equilibrium paths.}
\label{fig:grid}
\end{figure}
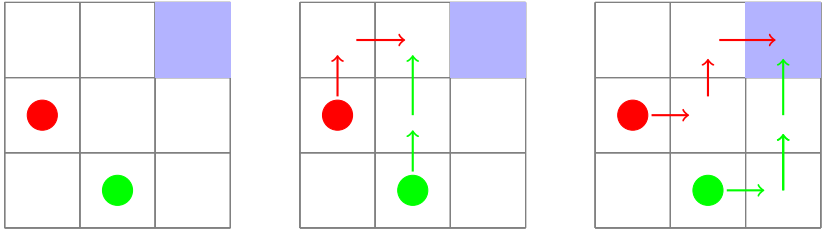

\paragraph{Algorithm setup}
We next detail the implementation of MURMAIL and MAIL-WARM. For MURMAIL, we closely follow the description of \citet{freihaut2025learningequilibriadataprovably}, and restate their pseudocode for completeness (see Algorithm \ref{alg:name}). Note that all experiments have been run on a standard MacBook Pro with Chip M3 and 16GB of RAM. Since the Gridworld environment is more challenging than the lower bound construction considered in their work, we apply two modifications to improve convergence speed: (i) we set the learning rate to $\eta=50$, and (ii) instead of sampling a single policy per RL inner loop, we average over $100$ updates, effectively yielding a batched variant of MURMAIL better suited for larger environments. We set the inner RL loop horizon to $T=10$.  

For \ours, we follow Algorithm~\ref{alg:mail_warm}, with the only practical adjustment being that the expert policies are non-stationary. Consequently, in Line 12 we return stationary approximations of the learned policies. Since EULER is not well suited for practice, and as \citet{jin2020reward} show that any RL algorithm can be used to solve the $SH$ many RL problems in the reward-free phase, we instead employ Q-learning. Importantly, Q-learning does not require knowledge of the transition dynamics. We run Q-learning for $100$ iterations for each RL problem in our experiments.

An interesting empirical observation is that solving the $SH$ many RL problems reveals that many states in the induced expert MDP are not reachable. For example, consider the Nash equilibrium policy illustrated in the middle of Figure \ref{fig:grid}. Fixing the green agent for the expert induced MDP, all states such that the position of the green agent is the bottom right corner are not reachable. Therefore, the effective visited states can significantly reduce compared to the whole state space.

\begin{algorithm}[!h]
\caption{Maximum Uncertainty Response Multi-Agent Imitation Learning (MURMAIL)}
\label{alg:name}
\begin{algorithmic}[1]
\State \textbf{Input:}number of iterations $K$, learning rates $\eta$, inner iteration budget $T$, initial $(\mu_1,\nu_1)$
\State \textbf{Receive:}$\varepsilon$-Nash equilibrium $(\hat{\mu},\hat{\nu})$
\For{$k = 1$ \textbf{to} $K$}
    \textbf{Inner Single-Agent RL Updates:}\\
    \textcolor{blue}{\% Maximum uncertainty response to $\mu$-player update} \\
        Define single agent transition  $P_{\mu_k}(s'\mid s,b) = \sum_{a \in \mathcal{A}} \mu_k(a\mid s) P(s'\mid s,a,b)$; \\
        Define single agent stochastic reward
        $R_{\mu_k}(s) \rightarrow \mathds{1}_{\{A_E = A_E'\}} - 2 \mu_k(A_E\mid s) + \| \mu_k(\cdot|s)\|^2$ where $A_E, A_E' \sim \muE(\cdot \mid s)$;\\
     $y_k = \texttt{UCBVI}(T, P_{\mu_k}, R_{\mu_k})$;\\
    \textcolor{blue}{\% Maximum uncertainty response to $\nu$-player update} \\
         $P_{\nu_k}(s'|s,a) = \sum_{b \in \mathcal{B}} \nu_k(b|s) P(s' \mid s,a,b)$; \\
        $R_{\nu_k}(s) \rightarrow \mathds{1}_{\{A_E = A_E'\}} - 2 \nu_k(A_E \mid s) + \| \nu_k(\cdot \mid s)\|^2$ where $A_E, A_E' \sim \nuE(\cdot \mid s)$;\\
     $z_k = \texttt{UCBVI}(T, P_{\nu_k}, R_{\nu_k})$ \\
    \textbf{Update policies:}\\
    Sample $S^\mu_k \sim d^{\mu_k,y_k}$, $A^\mu_k \sim \muE(\cdot \mid S^\mu_k)$, $S^\nu_k \sim d^{z_k, \nu_k}$, $A^\nu_k \sim \nuE(\cdot \mid S^\nu_k)$. \\
    $g^\mu_k(s,a) = \mu_k(a \mid S^\mu_k)\mathds{1}_{S^\mu_k=s} - \mathds{1}_{A^\mu_k = a}$ \\
    $g^\nu_k(s,a) =  \nu_k(a \mid S^\nu_k)\mathds{1}_{S^\nu_k=s} - \mathds{1}_{A^\nu_k = a}$ \\
    $\mu_{k+1}(a \mid s) \propto \mu_k(a\mid s) \exp\brr{ - \eta g^\mu_k(s,a)}$ \;
    $\nu_{k+1}(b \mid s) \propto \nu_k(b\mid s) \exp\brr{ - \eta g^\nu_k(s,a)}$  
\EndFor
\State \textbf{Return}:$\mu_{\widehat{k}}$, $\nu_{\widehat{k}}$ for $\widehat{k} \sim \mathrm{Unif}([K])$
\end{algorithmic}
\end{algorithm}

\section{Useful results}
For completeness reasons, we provide Theorem 2.1 by~\citet{concentrationofmissingmass} which we used frequently throughout this work as well as a standard binomial concentration result.

\begin{lemma}[Concentration Inequality for Total Variation Distance, see e.g. Thm 2.1 by~\citet{concentrationofmissingmass}]  \label{lemma:l1_concentration}
Let $\gX = \{1, 2, \cdots, |\gX|\}$ be a finite set. Let $P$ be a distribution on $\gX$. Furthermore, let $\widehat{P}$ be the empirical distribution given $m$ i.i.d. samples $x_1, x_2, \cdots, x_n$ from $P$, i.e.,
\begin{align*}
    \widehat{P}(j) = \frac{1}{n} \sum_{i=1}^{n} \mathbb{I} \lb x_i = j \rb.
\end{align*}
Then, with probability at least $1-\delta$, we have that 
\begin{align*}
    \lnorm P - \widehat{P} \rnorm_1 := \sum_{x \in \gX} \labs P(x) - \widehat{P}(x) \rabs \leq \sqrt{\frac{2 |\gX| \log(1/\delta) }{n}}.
\end{align*}
\end{lemma}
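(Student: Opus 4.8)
The plan is to prove this via the bounded-differences (McDiarmid) inequality, controlling the deviation of $\|P-\widehat P\|_1$ around its mean and bounding that mean separately. First I would set $f(x_1,\dots,x_n) := \|P - \widehat P\|_1$ and observe that $f$ enjoys the bounded-differences property: replacing a single sample $x_i$ by any $x_i'$ shifts at most two coordinates of the empirical distribution, one down by $1/n$ and one up by $1/n$, so $|f(\dots,x_i,\dots) - f(\dots,x_i',\dots)| \leq 2/n$. Hence the bounded-differences coefficients are $c_i = 2/n$ with $\sum_i c_i^2 = 4/n$, and McDiarmid's inequality gives, for every $t>0$,
\[
\mathbb{P}\big(f \geq \mathbb{E}[f] + t\big) \leq \exp\!\big(-nt^2/2\big).
\]
Setting the right-hand side to $\delta$ produces the deviation term $t = \sqrt{2\log(1/\delta)/n}$.

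Next I would bound the expectation $\mathbb{E}[f] = \sum_{x} \mathbb{E}\,|\widehat P(x) - P(x)|$. For each fixed $x$, the count $n\widehat P(x)$ is $\mathrm{Binomial}(n,P(x))$, so by Jensen's inequality $\mathbb{E}\,|\widehat P(x) - P(x)| \leq \sqrt{\mathrm{Var}(\widehat P(x))} = \sqrt{P(x)(1-P(x))/n} \leq \sqrt{P(x)/n}$. Summing over $x$ and applying Cauchy--Schwarz, $\sum_x \sqrt{P(x)} \leq \sqrt{|\mathcal{X}|}\,\sqrt{\sum_x P(x)} = \sqrt{|\mathcal{X}|}$, so $\mathbb{E}[f] \leq \sqrt{|\mathcal{X}|/n}$.

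Combining the two bounds yields, with probability at least $1-\delta$,
\[
\|P - \widehat P\|_1 \leq \sqrt{\frac{|\mathcal{X}|}{n}} + \sqrt{\frac{2\log(1/\delta)}{n}}.
\]
The final step is to collapse this into the single-product form stated in the lemma. In the standard regime $|\mathcal{X}| \geq 2$ and $\delta \leq e^{-2}$ one has $(|\mathcal{X}|-1)(\log(1/\delta)-1) \geq 1$, hence $|\mathcal{X}| + \log(1/\delta) \leq |\mathcal{X}|\log(1/\delta)$; combining this with $\sqrt a + \sqrt b \leq \sqrt{2(a+b)}$ then bounds the right-hand side by $\sqrt{2|\mathcal{X}|\log(1/\delta)/n}$.

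As an entirely self-contained alternative that avoids the expectation estimate, I would use the method of types: writing the total-variation distance as $\tfrac12\|P-\widehat P\|_1 = \max_{A\subseteq\mathcal{X}}\big(\widehat P(A)-P(A)\big)$, a one-sided Hoeffding bound for each fixed $A$ together with a union bound over the at most $2^{|\mathcal{X}|}-2$ nontrivial subsets gives $\mathbb{P}(\|P-\widehat P\|_1\geq \epsilon)\leq (2^{|\mathcal{X}|}-2)\,e^{-n\epsilon^2/2}$, which upon inversion delivers the same rate. The only real subtlety is this last algebraic simplification, since the clean single-product bound in the statement is a mild weakening of the sharp additive bound and holds precisely in the regimes of interest; the underlying concentration and expectation estimates are themselves routine.
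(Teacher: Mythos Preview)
Your proposal is correct and follows essentially the same route as the paper: McDiarmid's inequality with bounded differences $2/n$, the expectation bound $\mathbb{E}\|P-\widehat P\|_1 \leq \sqrt{|\mathcal{X}|/n}$, and then collapsing the additive bound into the single-product form. If anything you are more explicit than the paper, which cites Berend and Kontorovich for the expectation bound and asserts the final algebraic simplification without comment.
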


\begin{proof}
Define the function $f(x_1, \dots, x_n) = \sum_{x \in \gX} |\widehat{P}(x) - P(x)|$, where $\widehat{P}$ is the empirical distribution. Replacing one sample $x_i$ can change $f$ by at most $2/n$, since the empirical frequencies change by at most $1/n$ per coordinate and total variation sums these differences.

By McDiarmid’s inequality, we have for any $\epsilon > 0$,
\[
\Pr\left( f - \mathbb{E}[f] \geq \epsilon \right) \leq \exp\left(-\frac{n \epsilon^2}{2}\right).
\]

Berend and Kontorovich (2013) show that $\mathbb{E}[f] \leq \sqrt{\frac{|\gX|}{n}}$. Setting the failure probability to $\delta$, we solve
\[
\exp\left(-\frac{n \epsilon^2}{2}\right) = \delta \quad \implies \quad \epsilon = \sqrt{\frac{2 \log(1/\delta)}{n}}.
\]

Therefore, with probability at least $1 - \delta$,
\[
\lnorm P - \widehat{P} \rnorm_1 \leq \sqrt{\frac{|\gX|}{n}} + \sqrt{\frac{2 \log(1/\delta)}{n}} \leq \sqrt{\frac{2 |\gX| \log(1/\delta)}{n}},
\]
\end{proof}

\begin{lemma}[Binomial concentration, see e.g. Lemma A.1 by~\citet{10.5555/3540261.3542359}]
  \label{lemma:binomial-concentration}
  Suppose $N\sim \operatorname{Bin} (n, p)$ where $n\ge 1$ and $p\in[0,1]$. Then with probability at least $1-\delta$, we have
  \begin{align*}
    \frac{p}{N\vee 1} \le \frac{8\log(1/\delta)}{n},
  \end{align*}
  where $N \vee 1 := \max\{1,N\}.$
\end{lemma}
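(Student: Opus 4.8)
The plan is to rephrase the target bound $\frac{p}{N\vee 1}\le\frac{8\log(1/\delta)}{n}$ as the equivalent lower-tail statement $N\vee 1\ge \frac{np}{8\log(1/\delta)}$ on the clipped count, and then to split the analysis according to the size of the mean $\mu:=np$; the case $p=0$ is trivial, and $\delta$ is understood to lie in the small range used throughout the paper, so that $8\log(1/\delta)\ge 2$ (this is the regime in which the cited reference states the result).

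First I would handle the regime $\mu\le 8\log(1/\delta)$. Here $\frac{np}{8\log(1/\delta)}\le 1\le N\vee 1$ holds deterministically, since $N\vee 1\ge 1$ by definition, so the event of interest occurs with probability one and there is nothing to prove.

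For the complementary regime $\mu>8\log(1/\delta)$, I would invoke the multiplicative Chernoff lower-tail inequality for a sum of independent indicators: $\mathbb{P}(N\le(1-\theta)\mu)\le\exp(-\theta^2\mu/2)$ for every $\theta\in(0,1)$. Choosing $\theta=\tfrac{1}{2}$ gives $\mathbb{P}(N\le\mu/2)\le\exp(-\mu/8)$, and since $\mu>8\log(1/\delta)$ the right-hand side is at most $\exp(-\log(1/\delta))=\delta$. Hence, with probability at least $1-\delta$ we have $N>\mu/2\ge 1$, so $N\vee 1=N$ and therefore $\frac{p}{N\vee 1}=\frac{p}{N}<\frac{2p}{\mu}=\frac{2}{n}\le\frac{8\log(1/\delta)}{n}$, the final inequality using $8\log(1/\delta)\ge 2$. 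Combining the two regimes yields the claim.

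I do not expect a genuine obstacle: the probabilistic content is just the textbook binomial Chernoff bound. The only mildly delicate point is bookkeeping the constant so that the threshold used in the case split ($\mu$ versus $8\log(1/\delta)$) and the final comparison ($2/n$ versus $8\log(1/\delta)/n$) remain mutually consistent — which is exactly why such a generous constant (rather than, say, $2$) is carried in the statement.
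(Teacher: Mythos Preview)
Your proposal is correct and mirrors the paper's own proof almost exactly: both split on whether $np$ exceeds $8\log(1/\delta)$, handle the small-mean case deterministically via $N\vee 1\ge 1$, and in the large-mean case apply the multiplicative Chernoff lower tail with parameter $\tfrac12$ to get $N\ge np/2$ with probability at least $1-\delta$, then conclude $\frac{p}{N\vee 1}\le \frac{2}{n}\le \frac{8\log(1/\delta)}{n}$. Your explicit mention of the harmless restriction $8\log(1/\delta)\ge 2$ is the one step the paper leaves implicit.
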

\begin{proof}
    We consider two cases.
    Case 1: $p \leq \frac{8\log(1/\delta)}{n}.$ As $N \vee 1 \geq 1,$ we have $\frac{p}{N\vee 1} \leq p \leq \frac{8\log(1/\delta)}{n}$ almost surely. 
    Case 2: $p > \frac{8\log(1/\delta)}{n}.$ Note, that then $\expect[N] = np > 8\log(1/\delta)$ and by the multiplicative Chernoff bound, for any $0 < \epsilon < 1$ it holds true that
    \[\mathbb{P}\lp N < (1-\epsilon)np\rp \leq \exp \lp-\frac{\epsilon^2}{2} np\rp.\]
    Now, with $\epsilon= \frac12$ we have
    \[\mathbb{P}\lp N < (1-\epsilon)np\rp \leq \exp\lp-\frac{np}{8} \rp \leq \delta.\]
    Therefore, with probability of at least $1-\delta$ it holds $N \geq \frac{np}2$ and therefore on this event also $\frac p{n \vee 1} \leq \frac2n.$ In total we get $\frac{p}{N \vee 1} \leq \frac{8\log(1/\delta)}{n}.$
    Combining both cases completes the proof.
\end{proof}

\end{document}